\documentclass{article}

\usepackage{microtype}
\usepackage{graphicx}
\usepackage{subcaption}
\usepackage{booktabs} %

\usepackage{hyperref}

\usepackage[preprint]{icml2026_fogen}

\usepackage{amsmath}
\usepackage{amssymb}
\usepackage{mathtools}
\usepackage{amsthm}

\usepackage[capitalize,noabbrev]{cleveref}

\theoremstyle{plain}
\newtheorem{theorem}{Theorem}[section]
\newtheorem{proposition}{Proposition}[section]
\newtheorem{lemma}{Lemma}[section]

\theoremstyle{definition}
\newtheorem{definition}{Definition}[section]
\newtheorem{assumption}{Assumption}[section]
\theoremstyle{remark}
\newtheorem{remark}{Remark}[section]

\newcommand{\R}{\mathbb{R}}
\newcommand{\N}{\mathbb{N}}

\newcommand{\ent}{\operatorname{Ent}}

\newcommand{\law}{\operatorname{law}}
\newcommand{\wass}{\operatorname{W}}

\newcommand{\intOT}{\int_{[0,T]}}

\usepackage{dsfont}
\usepackage{xspace}
\usepackage{mathrsfs}
\usepackage{upgreek}

\newcommand{\eg}{e.g.\xspace}

\newcommand{\Rcal}{\mathcal{R}}

\newcommand{\Rd}{{\R^d}}

\newcommand{\E}{\mathbb{E}}
\newcommand{\Eof}[2][]{\mathbb{E}_{#1} \left[ #2 \right]}

\newcommand{\Pof}[2][]{\mathbb{P}_{#1} \left( #2 \right)}

\newcommand{\prob}{\mathbb{P}}
\newcommand{\normof}[1]{\left\Vert #1 \right\Vert}
\newcommand{\frobnorm}[1]{\left\Vert #1 \right\Vert_{\mathrm{F}}}
\newcommand{\frobnormLigne}[1]{\| #1 \|_{\mathrm{F}}}
\newcommand{\klb}[2]{\mathrm{KL}\left(#1 || #2 \right)}

\newcommand{\setof}[1]{\left\{ #1 \right\} }

\newcommand{\landau}[2][]{\mathcal{O}_{#1} \left( #2 \right)}

\newcommand{\der}{\mathrm{d}}

\newcommand{\intrd}{\int_{\Rd}}

\newcommand{\by}[1]{\quad\text{(#1)}}

\newcommand{\tv}{\mathrm{TV}}

\newcommand{\ecal}{\mathcal{E}}

\newcommand{\trace}{{\mathrm{Tr}}}

\newcommand{\Lrm}{\mathrm{L}}

\newcommand{\Irm}{{\mathrm{I}}}

\DeclareMathOperator*{\argmin}{arg\,min}
\newcommand{\Crm}{{\mathrm{C}}}

\newcommand{\fisher}[2]{\mathscr{J}(#1 || #2)}

\newcommand{\ora}[1]{\overrightarrow{#1}}
\newcommand{\rmd}{\mathrm{d}}
\newcommand{\Bm}{\mathrm{B}}

\newcommand{\EofLigne}[2][]{\mathbb{E}_{#1} [ #2 ]}
\newcommand{\normofLigne}[1]{\Vert #1 \Vert}

\newcommand{\pt}{\ora{p}_t}
\newcommand{\ptn}{\ora{p}_t^{(n)}}

\newcommand{\pto}{\ora{p}_{t\vert 0}}
\newcommand{\LDSM}{{\mathscr{L}_{\mathrm{DSM}}}}
\newcommand{\LDSMn}{{\mathscr{L}_{\mathrm{DSM}}^{(n)}}}
\newcommand{\LESMn}{{\mathscr{L}_{\mathrm{ESM}}^{(n)}}}
\newcommand{\LESM}{{\mathscr{L}_{\mathrm{ESM}}}}
\newcommand{\xt}{{\ora{X}_t}}
\newcommand{\xtn}{{\ora{X}_t^{(n)}}}
\newcommand{\xf}{{\ora{X}}}
\newcommand{\xfs}{{\ora{X}^{(n)}}}
\newcommand{\xb}{{\ola{X}}}
\newcommand{\pf}{{\ora{p}}}
\newcommand{\pb}{{\ola{p}}}

 \def\rmd{\mathrm{d}}
 \def\eqsp{\;}

\newcommand{\elbo}{{\operatorname{ELBO}}}

\newcommand{\Sigmabf}{\mathbf{\Sigma}}
\newcommand{\Nrm}{\mathrm{N}}

\newcommand{\rank}{\mathrm{rank}}
\newcommand{\muhat}{\widehat{\mu}}
\newcommand{\Sigmabfhat}{\widehat{\Sigmabf}}
\newcommand{\Ahat}{\widehat{A}}

\renewcommand{\pt}{p_t}
\renewcommand{\ptn}{\hat{p}_t}

\renewcommand{\pto}{p_{t\vert 0}}

\renewcommand{\xt}{{X_t}}
\renewcommand{\xtn}{{\widehat{X}_t}}
\renewcommand{\xf}{{X}}
\renewcommand{\xfs}{{\widehat{X}}}
\renewcommand{\xb}{{Y}}
\renewcommand{\pf}{{p}}
\renewcommand{\pb}{{q}}

\renewcommand{\LDSMn}{{\widehat{\mathscr{L}}_{\mathrm{DSM}}}}
\renewcommand{\LESMn}{{\widehat{\mathscr{L}}_{\mathrm{ESM}}}}

\usepackage[textsize=tiny]{todonotes}

\icmltitlerunning{Benign Overfitting Does Not Occur in Diffusion Models}

\begin{document}
\noindent{}
\vspace{1em}

\icmltitle{Benign Overfitting Does Not Occur in Diffusion Models}

\begin{center}
    \large
  \begin{tabular}{cccc}
    Tyler Farghly$^{1,}$\footnotemark[1] & Benjamin Dupuis$^{1,}$\footnotemark[1] & Alain Durmus$^{2,}$\footnotemark[2] & Umut Şimşekli$^{1,}$\footnotemark[2]
  \end{tabular}

  \begin{tabular}{ccc}
    $^{1}$INRIA - École Normale Supérieure - PSL Research University -  CNRS, France\\
    $^{2}$École Polytechnique - CMAP - IP Paris, France
  \end{tabular}
\end{center}

\footnotetext[1]{Authors contributed equally to this work. Correspondence to \texttt{\{tyler.farghly, benjamin.dupuis\}@inria.fr}.}
\footnotetext[2]{Authors contributed equally to this work.}

\icmlcorrespondingauthor{Tyler Farghly}{tyler-farghly@inria.fr}
\icmlcorrespondingauthor{Benjamin Dupuis}{benjamin.dupuis@inria.fr}

  \icmlkeywords{Machine Learning, ICML}

\vskip 0.3in

\printAffiliationsAndNotice{}  %

\begin{abstract}
Benign overfitting and double descent have come to shape our understanding of generalization in deep learning, establishing that overfitting is not only compatible with good generalization but can actively benefit it. Diffusion models share much of the machinery of standard deep learning, so it is natural to assume that they also exhibit these properties. In this work, we show that this assumption is largely incorrect. We first establish fundamental impossibility results showing that, unless the sample size grows exponentially with the data dimension, overfitting and good generalization cannot occur simultaneously. Consequently, the population loss follows a classical U-shaped curve in model complexity rather than exhibiting double descent. Analyzing a simplified setting, we identify a key difference between regression and score matching: regression benefits from an alignment between the target and the empirical covariance; score matching admits no such alignment, leaving overfitting irreparably harmful. We further identify implicit regularization stemming from time-smoothness of the score and early stopping during training as mechanisms that prevent such overfitting and verify our findings with high-dimensional image generation experiments. Our results reveal that generalization in diffusion models is governed by mechanisms distinct from those of traditional regression, motivating the development of new theory.
\end{abstract}

\section{Introduction}
\label{sec:introduction}

Diffusion models are a popular class of deep generative models \cite{hoDenoisingDiffusion2020,song2021scorebased,sohl_dickstein_deep_2015}. Their key insight is to reframe generation as the reversal of a noising process, characterized by a score function. Estimation of this score function turns out to be equivalent to a denoising problem that can be solved by training a neural network on a simple regression-like objective \cite{song_generative_2019,vincent_connection_2011,hyvarinen2005score}.
Despite their conceptual simplicity, they are capable of achieving state-of-the-art performance in a wide range of applications \cite{dhariwal_diffusion_2021,esser2024scaling,Rombach_2022_CVPR,saharia_photorealistic_2022,watson_novo_2023,zhang_minimax_optimality_2024}, and appear in many of the most widely deployed generative AI systems \cite{yang_diffusion_survey_2023}.

The widespread success of diffusion models naturally raises questions about the mechanisms underpinning their strengths, motivating a growing body of theoretical work on diffusion models \cite{oko_diffusion_2023,azangulov2024convergencediffusionmodelsmanifold,benton2024nearly,conforti2025kl}. 
Of particular interest is the question of generalization \cite{li_generalization_2024,bonnaire_why_2025-1,farghly2025implicitregularisationdiffusionmodels}: understanding when a score network trained on finite data faithfully represents the underlying distribution, rather than memorizing the training set. This question is further sharpened by practical concerns around data privacy \cite{carlini2023extractingtrainingdatadiffusion}, making a principled understanding of the generalization-memorization tradeoff both scientifically and practically important.

\begin{figure*}[t]
    \centering
    \includegraphics[width=.95\linewidth]{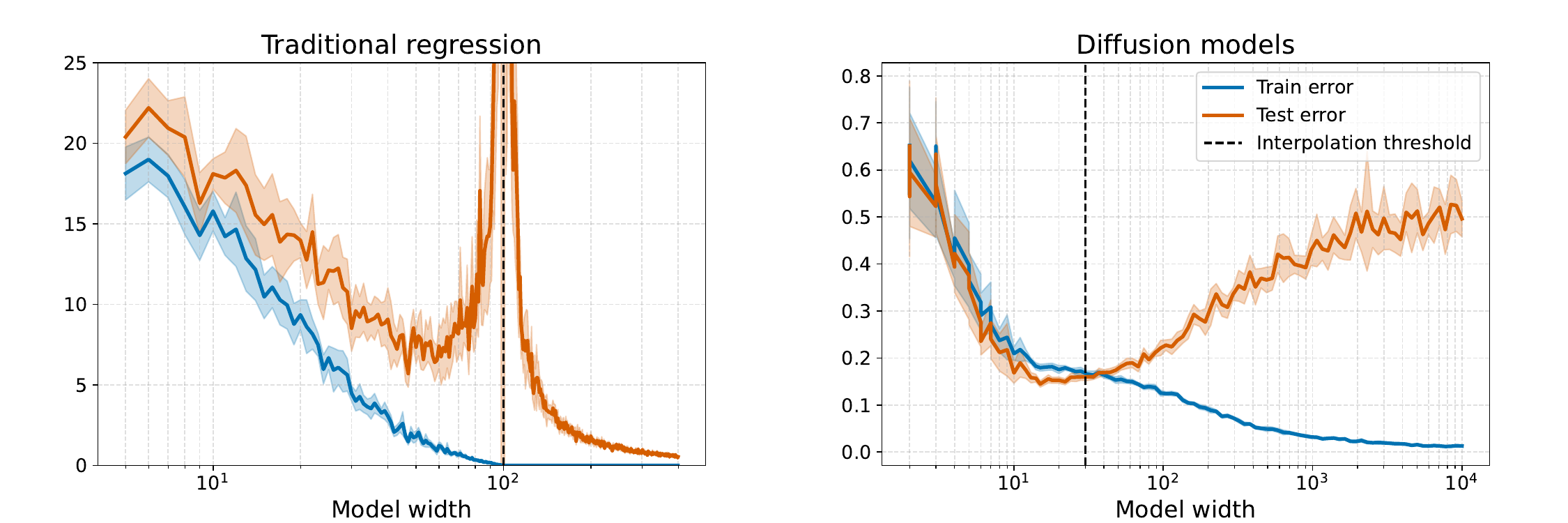}
    \caption{\textit{(Left)} $2$-layer random feature network \textbf{regression}, displaying the classical double descent curve. \textit{(Right)} $2$-layer random feature network \textbf{diffusion}, showing the absence of benign overfitting. Details on the setup and hyperparameters for these experiments are available in \Cref{sec:random-feature-experiment-details}.}
    \label{fig:intro_plot_random_features}
    \vskip -0.4cm
\end{figure*}

Alongside these developments, an extensive body of theory has sought to explain how overparameterized neural networks generalize. These models famously challenge the classical bias--variance tradeoff by generalizing well despite perfectly memorizing noisy training data \cite{zhang_understanding_2017, Shalev-Shwartz_Ben-David_2014}---a phenomenon known as \emph{benign overfitting} \cite{bartlett_benign_2020-1}. This is explained by theoretical analysis showing that generalization is maintained by a form of self-induced regularization that occurs in the overparameterized regime. Furthermore, when taken as a function of model complexity, the test error exhibits a \emph{double descent} \cite{belkin_reconciling_2019}: it first decreases, peaks near the interpolation threshold, and then decreases again as model size grows (see \Cref{fig:intro_plot_random_features}, left).
This behavior was first identified empirically in high-dimensional problems \cite{nakkiran_deep_2019} and then theoretically grounded through the analysis of random feature models \cite{mei_generalization_2022}. Together, these theories form a now widely internalized picture: in highly overparameterized models, fitting the training data perfectly is not only compatible with generalization but, in the right regime, actively beneficial.

One of the benefits of the diffusion model framework is that it reuses much of the previously existing machinery of deep learning, using existing neural network architectures, training with standard optimization algorithms (\eg, ADAM \cite{kingma_adam_2015}), and fitting by minimizing a regression-like objective \cite{vincent_connection_2011,song2021scorebased}. As a result, it is tempting to conclude that the previously mentioned deep learning phenomena are at work, and benign overfitting and double descent carry over more or less intact. \emph{In this work, we show that such an assumption is largely incorrect.}
While several recent works have hinted that diffusion models may be incapable of benign overfitting \cite{farghly2025implicitregularisationdiffusionmodels, dupuis2025algorithmdatadependentgeneralizationbounds, merger2025generalization} and other recent works have explored the relationship between over-training and memorization \cite{Pidstrigach2022-jz, bonnaire_why_2025-1}, the claim that benign overfitting and double descent do not occur in diffusion models is yet to receive rigorous empirical or theoretical investigation.

For diffusion models, we show that in most practical settings, it is fundamentally impossible to have both the train and test losses simultaneously small, invalidating benign overfitting. We further show that, instead of a double descent curve, the excess risk forms a U-shaped curve with increasing complexity, akin to the classical bias--variance tradeoff. This is illustrated in the random feature network experiment in \Cref{fig:intro_plot_random_features} where the excess risk forms a U-shape that tapers off as the width of the network grows large. In \Cref{fig:intro_figure_experiment-unet}, we see how this phenomenon affects high-dimensional image generation problems based on a U-Net architecture \cite{hoDenoisingDiffusion2020,ronnenberger_unet_2015}, showing that as the number of U-Net features is increased, the over-trained diffusion model goes from low quality generation, to generalization, and then towards exactly memorizing the train set. Through a combination of fundamental theoretical results, simplified models, and high-dimensional experiments, we explore this phenomenon, the mechanisms underpinning it and how diffusion models generalize in spite of it.%

Concretely, we present the following contributions:
\begin{itemize}
    \item In \Cref{sec:fundamental-limitations}, we derive fundamental negative results showing that the population and empirical score matching loss cannot be simultaneously small in practical settings, unless the sample size grows exponentially with data dimension, regardless of the score network architecture.
    \item In \Cref{sec:fine-grainded-linear-analysis}, through the analysis of a linear random feature model, we provide a precise understanding of the interplay between model complexity, overfitting and noise scale. By comparing with a classical regression setup, we illustrate the key differences that prevent benign overfitting from occurring in score matching.
    \item In \Cref{sec:preventing-overfitting}, we continue the analysis of \Cref{sec:fine-grainded-linear-analysis}, identifying key components of the score matching framework that can act as implicit regularization and help prevent overfitting in practice. More precisely, we identify the role of time-smoothness of the score and discuss the role of early stopping during training.
    \item We validate our theoretical findings by experiments conducted in high-dimensional image generation settings.
\end{itemize}

\textbf{Notations.} For probability measures $\mu$ and $\nu$ in $\Rd$, if $\mu$ is absolutely continuous with respect to $\nu$, denoted by $\mu \ll \nu$, we denote $\rmd \mu/\rmd \nu$ its density. If $\mu \ll \nu$, the Kullback-Leibler (KL) divergence is $\klb{\mu}{\nu} = \int \log (\der \mu / \der \nu) \der \mu$, the relative Fisher information is $\fisher{\mu}{\nu} = \int \normofLigne{\nabla \log (\der \mu / \der \nu)}^2 \der \mu$, the Fisher information is $\mathscr{J}(\nu) = \int \normofLigne{\nabla \log(\der \nu / \der x)}^2 \der \nu$. For $a,b \in \R$, $a \wedge b = \min(a,b)$, $a \vee b = \max(a,b)$, and $a_+ = a \vee 0$.

\begin{figure*}[t]
    \centering
    \begin{subfigure}[b]{0.31\textwidth}
        \includegraphics[width=\textwidth]{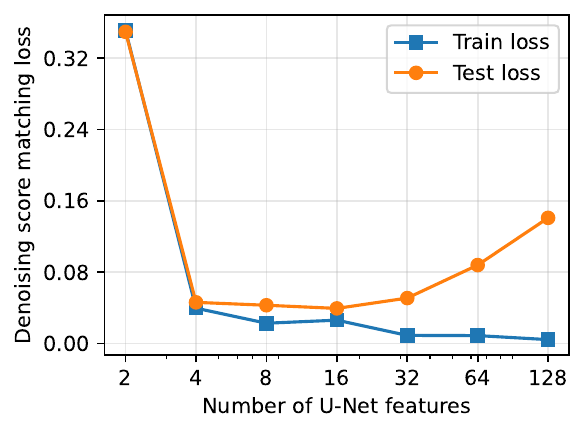}
    \end{subfigure}
    \begin{subfigure}[b]{0.31\textwidth}
        \includegraphics[width=\textwidth]{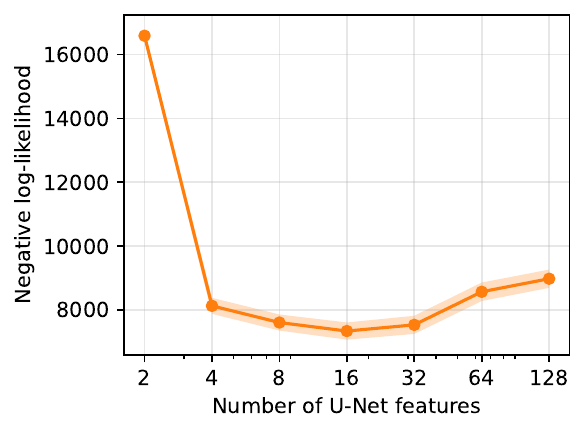}
    \end{subfigure}
    \hfill
    \begin{subfigure}[b]{0.34\textwidth}
        \includegraphics[width=\textwidth]{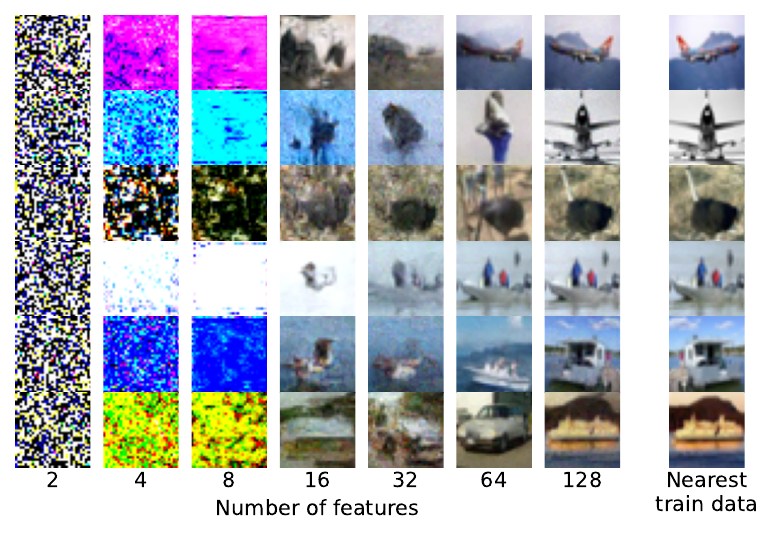}
    \end{subfigure}
    \caption{We train a DDPM U-Net model on a subset of CIFAR10 to convergence. We observe train and test error for different configurations of the model, varying the number of features (NF) in the U-Net. \textit{(Left)} Test and train error at convergence varying NF and thus, varying the number of parameters in the model. \textit{(Center)} The same setting but for the population negative log-likelihood. \textit{(Right)} Generated samples for different NF values alongside the closest train image for $\text{NF}=8$. Details of the experiment are given in Appendix \ref{sec:unet_experiments}.}
    \label{fig:intro_figure_experiment-unet}
    \vskip -0.3cm
\end{figure*}

\section{Background}
\label{sec:background}

By learning to reverse a noising process, diffusion models approximate a distribution $\nu$ on $\R^d$ from a dataset sampled from $\nu$. We briefly review this construction and fix the notation used in the paper.

\textbf{Noising process.}
The noising process is given by an SDE of Ornstein--Uhlenbeck-type in $\Rd$,
\begin{equation}
    \label{eq:forward-process-sde}
    \der \xt = -\kappa \xt \der t + \sqrt{2} \der \Bm_t\eqsp, \qquad \xf_0 \sim \nu\eqsp,
\end{equation}
where $\kappa \geq 0$ and $\Bm_t$ is a standard $d$-dimensional Brownian motion. 
We assume that $\EofLigne[X\sim\nu]{\normofLigne{X}^2} < \infty$.
It is well-known that $\xt|\xf_0 \sim \Nrm(\alpha_t \xf_0, \sigma_t^2 \Irm_d)$, where the mean and variance are given by
\begin{equation}
\label{eq:cond_score_form}
    \alpha_t = e^{- \kappa t} \eqsp, \qquad \sigma^2_t = \kappa^{-1} (1 - \alpha_t^2) \eqsp,
\end{equation}
with the convention $\sigma_t^2 = 2t$ when $\kappa=0$.
Fixing a time horizon $T > 0$ and writing $\xb_t := \xf_{T-t}$, the classical result of \cite{haussmann1986time,anderson_reverse_time_1982} identifies the law of the time-reversed process as the solution of
\begin{equation*}
    \der \xb_t = \kappa \xb_t \der t + 2 \nabla \log \pf_{T-t}(\xb_t) \der t + \sqrt{2} \der \Bm_t\eqsp, \qquad \xb_0 \sim \pf_T \eqsp,
\end{equation*}
where $p_t$ denotes the density of $X_t$.
Thus, if we could simulate $\xb_t$ starting from $\xb_0 \sim \pf_T$, we would obtain samples $Y_T \sim \nu$.
The only unknown quantity is the \emph{score function} $\nabla \log \pf_t$, which must be learned from data. Once we have an approximate score function $\hat{s}:\Rd \times [0,T] \to \Rd$, we generate samples by numerically simulating
\begin{equation*}
    \der \widehat{\xb}_t = \kappa \widehat{\xb}_t \der t + 2 \hat{s}(\widehat{\xb}_t, t) \der t + \sqrt{2} \der \Bm_t\eqsp, \qquad \xb_0 \sim q_0 \eqsp.
\end{equation*}
For $\kappa > 0$, we set $q_0=\Nrm(0, \kappa^{-1} \Irm_d)$ as it approximates $p_T$ for large $T$, and for $\kappa = 0$, we set $q_0 = \Nrm(0, 2 T \Irm_d)$. We denote by $q_t$ the probability density of $\widehat{\xb}_t$ for $t \in [0, T]$.

\textbf{Score matching.}
The score function is approximated using a deep neural network. Ideally, the goal is to minimize the \emph{(population) explicit score matching (ESM) loss},
\begin{align*}
    \LESM({s}, t) := \E \big [ \big \| {s}(t, \xt) - \nabla \log \pt (\xt) \big \|^2 \big ]\eqsp, \qquad \LESM({s}, \varpi) = \int \LESM({s}, t) \, \der \varpi(t)\eqsp,
\end{align*}
where $\varpi$ is some non-negative measure on $[0, T]$.
While the time-weighting $\varpi$ can be chosen freely, a canonical choice is the \emph{evidence lower bound (ELBO) weighting}, $\varpi_\elbo^\epsilon (\der t) = \mathds{1}_{[\epsilon, T]}(t)$, for $\epsilon > 0$.
Under this weighting, the explicit score matching loss upper bounds the marginal KL divergence with $\klb{\pf_\epsilon}{{\pb}_{T-\epsilon}} \leq \LESM (s, \varpi_\elbo^\epsilon) + \klb{\pf_T}{q_0}$.
Since the ESM loss depends on the intractable score $\nabla \log \pt$, a tractable alternative is considered: the \emph{(population) denoising score matching (DSM) loss},
\begin{equation*}
    \LDSM(s, \varpi) := \int \E[\ell_t(s, X_0)] \der \varpi(t)\eqsp, \qquad \ell_t(s, X_0) := \E \big [ \big \| s(t, \xt) - \nabla \log \pto (\xt|X_0) \big \|^2 \big | X_0 \big ]\eqsp,
\end{equation*}
where $p_{t|0}(\cdot|\cdot)$ is the density of $X_t$ given $X_0$. This loss is equal to the ESM loss up to a constant whilst only using the simpler conditional score $\nabla \log \pto$, turning the learning problem into a denoising problem \cite{vincent_connection_2011,hyvarinen2005score}.
It is then estimated using a dataset $S = \{Z_i\}_{i=1}^n \sim \nu^{\otimes n}$, leading to the \emph{empirical DSM loss},
\begin{align}
    \label{eq:emp_dsm}
    \LDSMn(s, \varpi) := 
    \frac1{n} \sum_{i=1}^n \int \ell_t (s, Z_i) \, \der \varpi(t) \eqsp, \qquad n \geq 1\eqsp.
\end{align}
We also define the empirical ESM loss, related to the above by an additive data-dependent constant,
\begin{align}
    \label{eq:emp_esm}
    \LESMn({s}, t) := \E \big [ \big \| {s}(t, \xtn) - \nabla \log \ptn (\xtn) \big \|^2 \big | S \big ]\eqsp, \qquad \LESMn({s}, \varpi) = \int \LESMn({s}, t) \, \der \varpi(t)\eqsp,
\end{align}
where $\der \xtn = -\kappa \xtn \der t + \sqrt{2} \der \Bm_t,$ $\xfs_0 \sim n^{-1} \sum_{i=1}^n \updelta_{Z_i}$ and for $t>0$, $\ptn$ is the density of $\xtn$.

\section{Fundamental limitations of overfitting in diffusion models}
\label{sec:fundamental-limitations}

Benign overfitting occurs when good generalization performance is achieved despite interpolating training data. In this section, we derive impossibility results that show that this cannot happen in diffusion models without an exceedingly large sample size. Unlike \Cref{sec:fine-grainded-linear-analysis,sec:preventing-overfitting}, the results presented here hold regardless of the score network architecture. In this sense, they identify fundamental limitations of the score matching framework itself, rather than of any particular class of models.

\subsection{Warm-up: A Fisher information lower bound}

We begin with the following simple lemma, which is a generic lower bound on the sum of the population and empirical ESM losses. 
The proof can be found in \Cref{proof:fisher-lemma}.

\begin{lemma}
\label{lemma:fisher-lemma}
    Let $t \in (0,T]$, for any measurable score function $s: \Rd \times [0,T] \to \Rd$, we have
    \begin{align}
        \label{eq:fisher-lower-bound}
        \EofLigne{\LESMn (s, t) + \LESM(s, t)} \geq \frac1{2} \Eof{\fisher{\ptn}{\pt}} \eqsp.
    \end{align}
    In particular, if $\mathscr{J}(\nu) < +\infty$, we have that $\inf_s \EofLigne{\LESMn (s, t) + \LESM(s, t)} \to +\infty$ as $t \to 0^+$.
\end{lemma}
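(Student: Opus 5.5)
The plan is to combine a triangle inequality with a change of measure, exploiting that $\ptn$ and $\pt$ are close enough that $\ptn$-expectations can be compared with $\pt$-expectations.

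\textbf{Step 1: the first inequality.} Fix $S$. Write $a = s(t,\cdot) - \nabla\log\ptn$ and $b = s(t,\cdot) - \nabla\log\pt$, so that $\nabla \log(\ptn/\pt) = b - a$. By definition,
\[
\fisher{\ptn}{\pt} = \int \normofLigne{a-b}^2 \, \ptn \le 2\int \normofLigne{a}^2\,\ptn + 2\int \normofLigne{b}^2 \,\ptn .
\]
The first term is $2\LESMn(s,t)$. For the second term we need to move from $\ptn$ to $\pt$. Taking expectation over $S$ and using $\E[\ptn(x)] = \pt(x)$ (the mixture identity, since the $Z_i$ are i.i.d.\ from $\nu$), together with the fact that $b$ does not depend on $S$, Fubini gives
\[
\E\Big[\int\normofLigne{b}^2\,\ptn\Big] = \int \normofLigne{b}^2\,\pt = \LESM(s,t).
\]
Hence $\E[\fisher{\ptn}{\pt}] \le 2\,\E[\LESMn(s,t) + \LESM(s,t)]$, which is \eqref{eq:fisher-lower-bound}. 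The one subtlety is measurability of $s$ together with nonnegativity of the integrands, which justifies Tonelli.

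\textbf{Step 2: divergence as $t\to0^+$.} We need $\E[\fisher{\ptn}{\pt}]\to\infty$. Expand
\[
\fisher{\ptn}{\pt} = \int \normofLigne{\nabla\log\ptn}^2\,\ptn - 2\int \langle\nabla\log\ptn,\nabla\log\pt\rangle\,\ptn + \int\normofLigne{\nabla\log\pt}^2\,\ptn .
\]
Bound the middle term via $2\langle u,v\rangle \le \tfrac12\normofLigne{u}^2 + 2\normofLigne{v}^2$. This gives
\[
\fisher{\ptn}{\pt} \ge \tfrac12\mathscr{J}(\ptn) - \int\normofLigne{\nabla\log\pt}^2\,\ptn ,
\]
and after taking expectation, $\E[\fisher{\ptn}{\pt}] \ge \tfrac12\E[\mathscr{J}(\ptn)] - \mathscr{J}(\pt)$.

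Two estimates then finish the argument.
\begin{itemize}
\item \emph{Upper bound on $\mathscr{J}(\pt)$.} Fisher information is nonincreasing along Ornstein--Uhlenbeck/heat convolution up to the scaling: $\pt$ is the law of $\alpha_t X_0 + \sigma_t G$, so $\mathscr{J}(\pt) \le \alpha_t^{-2}\mathscr{J}(\nu)$ by the convolution inequality. This stays bounded as $t\to0$, and this is where the hypothesis $\mathscr{J}(\nu)<\infty$ is used.
\item \emph{Blow-up of $\mathscr{J}(\ptn)$.} Here $\ptn$ is a Gaussian mixture with atoms $\alpha_t Z_i$ and variance $\sigma_t^2$. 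Almost surely the $Z_i$ are distinct, since $\nu$ has a density because $\mathscr{J}(\nu)<\infty$. As $\sigma_t\to0$ the components separate, and $\mathscr{J}(\ptn) \approx d/\sigma_t^2 \to \infty$. For a rigorous lower bound, use the convexity/Cramér--Rao-type inequality $\mathscr{J}(\mu) \ge d^2/\int\normofLigne{x-m}^2\,\rmd\mu$: write $\mathscr{J}(\ptn) = \int \normofLigne{\nabla \ptn}^2/\ptn$ and restrict to disjoint neighborhoods of each atom. Alternatively, apply Fatou pointwise in $S$ to get $\liminf_{t\to0}\sigma_t^2\mathscr{J}(\ptn) \ge d$, and then Fatou again in expectation.
\end{itemize}

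Since the bound holds for every $s$, the infimum over $s$ diverges. The main obstacle is this last step: making rigorous the lower bound $\mathscr{J}(\ptn)\gtrsim d/\sigma_t^2$ for a well-separated mixture. Restricting to balls of radius half the minimum atom separation, on which one component dominates, should handle it.
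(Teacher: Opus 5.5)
Your Step 1 and the reduction $\E[\fisher{\ptn}{\pt}]\ge\tfrac12\E[\mathscr{J}(\ptn)]-\mathscr{J}(\pt)$, with $\mathscr{J}(\pt)\le\alpha_t^{-2}\mathscr{J}(\nu)$, coincide with the paper's proof. The two arguments differ only in how they show $\E[\mathscr{J}(\ptn)]\to\infty$.

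The paper does no local analysis of the mixture. By de Bruijn's identity and monotonicity of Fisher information along the semigroup, $\mathscr{J}(\hat p_s)\ge (h(\hat p_t)-h(\hat p_s))/(t-s)$ for fixed $t>s$. The mixture entropy bound $h(\hat p_s)\le \frac d2\log(2\pi e\sigma_s^2)+\log n$ then sends the right-hand side to $+\infty$ as $s\to0^+$. That argument needs neither distinct atoms nor a density for $\nu$, and it holds uniformly over datasets. With $t$ fixed, however, it only gives a logarithmic blow-up, with a $\log n$ loss.

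Your route targets the sharp order $\sigma_t^2\mathscr{J}(\ptn)\to d$, and of your two suggestions the Fatou one is the one to carry out. Write $\mathscr{J}(\ptn)=\frac1n\sum_j\E_{Y\sim\Nrm(0,\Irm_d)}\|\nabla\log\ptn(\alpha_tZ_j+\sigma_tY)\|^2$, and use the mixture score $\sigma_t^2\nabla\log\ptn(x)=-\sum_i w_i(x)(x-\alpha_tZ_i)$ with posterior weights $w_i$. For fixed $y$, the weights of components with $Z_i\neq Z_j$ vanish faster than any power of $\sigma_t$, so $\sigma_t\nabla\log\ptn(\alpha_tZ_j+\sigma_ty)\to-y$. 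Fatou in $y$ and then in $S$ gives $\liminf_{t\to0^+}\sigma_t^2\E[\mathscr{J}(\ptn)]\ge d$. Coinciding atoms simply merge into a heavier component, so your appeal to $\nu$ having a density is unnecessary.

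The Cram\'er--Rao variant needs more care than you state. Convexity of $\mathscr{J}$ bounds a mixture from above, not below, and a hard restriction to balls breaks the integration by parts behind Cram\'er--Rao. It does go through via Cauchy--Schwarz against a smoothly cut-off test field $\sum_j\chi_j(x)(x-\alpha_tZ_j)$.

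What your approach gains is the correct $\sigma_t^{-2}$ order. What it loses is uniformity: the $t$ below which the asymptotics take effect depends on the minimal atom separation, which shrinks as $n$ grows. The paper's entropy argument, by contrast, is explicit in $n$.
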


\Cref{lemma:fisher-lemma} shows that for both the empirical and population score matching losses to be small, we must have the empirical and population measures $\ptn$ and $\pt$ close also. The measure of closeness here is the relative Fisher information, which under additional concentration assumptions, can be shown to upper bound the KL divergence (see \Cref{sec:fisher-lsi-lower-bound}).
As $\ptn$ is initialized at the empirical distribution $n^{-1} \sum_{i=1}^n \updelta_{Z_i}$ where $(Z_1,\dots, Z_n) \sim \nu^{\otimes n}$, this suggests that benign overfitting only happens when the sample size is large enough. 
We make this argument precise in the next subsection.

\subsection{A quantitative lower bound under time-integration}
\label{sec:quantitative-lower-bound}

We now refine the previous result by obtaining lower bounds on the amount of data required for the empirical and population losses to be small simultaneously. The lower bound is a function of the effective dimension of the data distribution $\nu$ which we capture through the \emph{lower R\'{e}nyi dimension} $d'$, formally defined in \Cref{sec:intrinsic-dimensions-background}. In cases where $\nu$ is supported on a compact manifold, $d'$ is exactly the manifold dimension. We also extend Lemma \ref{lemma:fisher-lemma} by considering the full time-integrated loss as opposed to an individual time-slice, requiring only that the weighting $\varpi$ has positive non-decreasing density on an interval. The bound is generic and holds for any random score function $\hat{s}$ that can be taken as a function of training data. Given a real-valued function $w$, we define the generalized inverse $w^{-}(r) := \inf \{t : w(t) \geq r\}$, with the convention $\inf \emptyset = \infty$.

\begin{theorem}
\label{prop:negative_sm_loss}
Suppose that $\varpi$ is a positive measure supported in $[\epsilon, T]$ with $\epsilon > 0$ and non-decreasing density $w : [\epsilon, T] \to \R_+$, that $\nu$ has finite second moments, and that both $T$ and $d$ are sufficiently large. Suppose that the random score function $\hat{s}$ satisfies
\begin{equation*}
    \E_S[\LESMn(\hat{s}, \varpi)] \leq \varepsilon \eqsp,
\end{equation*}
for some $\varepsilon \leq w(T) / 32$. Then, if any of the following conditions hold:
\begin{align}
\label{eq:negative_sm_loss_0}
    \E_S[\LESM(\hat{s}, \varpi)] \leq \varepsilon, \quad \text{ or } \quad \E_S[\LESM(\hat{s}, \varpi_\elbo^\epsilon)] \leq \tfrac{1}{32}, \quad \text{ or } \quad \klb{\pf_\delta}{\pb_{T-\delta}} \leq \tfrac{1}{32},
\end{align}
and $\delta := w^{-}(32\varepsilon)$ is sufficiently small, then it follows that,
\begin{align*}
     n \geq (8 \sqrt{d} \sigma_\delta )^{- \frac{d'}{2}}  \eqsp,
\end{align*}
where $d'$ is the lower Rényi dimension of $\nu$.
\end{theorem}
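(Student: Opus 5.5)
The plan has four steps: reduce all three conditions in \eqref{eq:negative_sm_loss_0} to one time-integrated Fisher bound, localize that bound to a single time $t \ge \delta$, turn it into a statement that $\ptn$ covers $\pt$, and finally convert covering into a sample-size bound via the lower Rényi dimension $d'$.

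\textbf{Reduction to the Fisher bound.} First I show that each of the three conditions yields
\[
\int \EofLigne{\LESMn(\hat s,t) + \LESM(\hat s,t)}\, w(t)\,\der t \lesssim \varepsilon ,
\]
or its analogue with ELBO weighting. The first condition gives this directly. For the second, note that $w$ is non-decreasing, so $w \le w(T)$ on $[\epsilon,T]$; it then suffices to compare $\varpi$ with $\varpi_\elbo^\epsilon$ on the window where both losses are controlled. The KL condition reduces to the ELBO-weighted Fisher bound through the Girsanov/data-processing identity. I will not redo that computation; I would only show that small KL forces the time-integrated relative Fisher information to be small on $[\delta,T]$.

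\textbf{Localization to one time.} Applying \Cref{lemma:fisher-lemma} pointwise in $t$ and integrating gives
\[
\int_\epsilon^T \Eof{\fisher{\ptn}{\pt}}\, w(t)\,\der t \le 4\varepsilon .
\]
Set $\delta = w^{-}(32\varepsilon)$. On $[\delta,T]$ we have $w \ge 32\varepsilon$, so
\[
\int_\delta^T \Eof{\fisher{\ptn}{\pt}}\,\der t \le \tfrac18 .
\]
Hence there is a time $t_\star \in [\delta, 2\delta]$, or a window of positive length, with $\Eof{\fisher{\ptn}{\pt}} \lesssim 1/\delta$. This is where the hypotheses that $T$ is large and $\varepsilon \le w(T)/32$ enter: they guarantee the window is nonempty.

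\textbf{Small Fisher implies covering.} The key step is to show that a small relative Fisher information at time $t_\star$ forces $\ptn$ to cover $\pt$. Since $\ptn$ is a Gaussian mixture with $n$ centers $\alpha_t Z_i$ and width $\sigma_t$, its score is $\nabla \log \ptn(x) = -(x - \alpha_t m(x))/\sigma_t^2$, where $m$ is a posterior mean over the data points. Then $\nabla\log\ptn - \nabla\log\pt$ is of size $\|\alpha_t(m(x) - \E[X_0 \mid X_t = x])\|/\sigma_t^2$. I would argue as follows: if a $\nu$-typical point $X_0$ has no sample $Z_i$ within distance $r = 8\sqrt d\,\sigma_\delta$, then for $x$ drawn near $\alpha_t X_0$ the empirical posterior mean $m(x)$ sits at distance at least $\sim r$. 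Meanwhile the true posterior mean is within $\sim \sqrt d\,\sigma_t$ of $x$, because Gaussian noise concentrates at radius $\sqrt d\,\sigma_t$; this is where large $d$ is used. The Fisher integrand on such points is therefore at least $r^2/\sigma_t^4 \gg 1/\delta$. Consequently, with constant probability, $\nu(\bigcup_i B(Z_i, r)) \ge 1/2$.

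\textbf{Covering implies the sample-size bound.} By the definition of the lower Rényi dimension, for small $r$ we have $\sup_z \nu(B(z,r)) \le r^{d'/2}$ (using the slack factor $1/2$ in the exponent). A union bound then gives
\[
\tfrac12 \le n\, r^{d'/2},
\]
which after absorbing constants is $n \ge (8\sqrt d\,\sigma_\delta)^{-d'/2}$. The requirement that $\delta$ be sufficiently small is exactly what puts $r$ in the regime where the Rényi-dimension estimate holds.

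\textbf{Main obstacle.} I expect the covering step to be the hardest: turning a bound on the \emph{expected} Fisher information into a deterministic covering statement. Posterior-mean differences are delicate when data points compete. I would first try the pointwise lower bound restricted to the event that $\alpha_t X_0$ is far from all $\alpha_t Z_i$, using Gaussian tail concentration to control the mixture weights. Markov's inequality then handles the randomness in $S$.
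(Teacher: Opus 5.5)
Your plan has two genuine gaps.

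\textbf{The Fisher bound does not apply to a trained score.} \Cref{lemma:fisher-lemma} cannot be applied to $\hat s$. Its proof replaces $\E\|s(t,\widehat X_t)-\nabla\log p_t(\widehat X_t)\|^2$ by $\E\|s(t,X_t)-\nabla\log p_t(X_t)\|^2$, using that the $S$-average of $\hat p_t$ is $p_t$. That step is valid only when $s$ does not depend on $S$. The theorem is precisely about a data-dependent $\hat s$. For the memorizing score $\hat s=\nabla\log\hat p_t$, the term you need is $\fisher{\hat p_t}{p_t}$, whereas $\LESM(\hat s,t)=\fisher{p_t}{\hat p_t}$, and the lemma gives no control of the former by the latter. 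So your bound $\int\E[\fisher{\hat p_t}{p_t}]\,w(t)\,\der t\le 4\varepsilon$ is unjustified.

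The KL condition is also handled backwards. Girsanov bounds the marginal KL by the ESM loss plus $\klb{p_T}{q_0}$, never the reverse. A small $\klb{p_\delta}{q_{T-\delta}}$ therefore says nothing about losses or Fisher information on $[\delta,T]$.

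\textbf{The covering step looks in the wrong place.} Even granting a Fisher bound, $\fisher{\hat p_t}{p_t}=\int\|\nabla\log(\hat p_t/p_t)\|^2\,\der\hat p_t$ is weighted by $\hat p_t$. When $X_0$ is at distance at least $8\sqrt d\,\sigma_\delta$ from every $Z_i$, $\hat p_t$ gives the neighbourhood of $\alpha_t X_0$ only mass exponentially small in $d$. Your large pointwise integrand $\alpha_t^2 r^2/\sigma_t^4$ there therefore contributes essentially nothing. This direction of the Fisher information only sees neighbourhoods of the data points. A lower bound there would require controlling $\|Z_i-\E[X_0\mid X_t]\|$, i.e.\ local regularity of $\nu$ that $d'$ does not provide.

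\textbf{How the paper avoids both problems.} It works with marginals at the single time $\delta$.
\begin{enumerate}
\item Monotonicity of $w$ gives $\E[\LESMn(\hat s,\varpi^\delta_\elbo)]\le\varepsilon/w(\delta)\le 1/32$, and likewise for the population loss.
\item Girsanov, applied conditionally on $S$, bounds both $\klb{\hat p_\delta}{q_{T-\delta}}$ and $\klb{p_\delta}{q_{T-\delta}}$ against the same data-dependent law. This is valid for random $\hat s$, and large $T$ makes $\klb{p_T}{q_0}\le 1/32$.
\item Pinsker and the triangle inequality then give $\E\|p_\delta-\hat p_\delta\|_{\mathrm{TV}}^2\le 1/8$.
\item Total variation is symmetric, so a test set can target exactly the uncovered region. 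Take $A_R$, the complement of the balls of radius $m\sigma_\delta+R$ around the $\alpha_\delta Z_i$. It satisfies $\hat p_\delta(A_R)\le 1/4$ and $p_\delta(A_R)\ge\tfrac34\nu(\tilde A_R)$, where $\tilde A_R$ is the set of points far from every $Z_i$.
\end{enumerate}

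\textbf{A smaller issue in the last step.} You should use $\E_S[\nu(\bigcup_i B(Z_i,r))]\le n\int\nu(B(x,r))\,\der\nu(x)$, which is exactly what the lower R\'enyi dimension controls. The bound $\sup_z\nu(B(z,r))\le r^{d'/2}$ is not its definition. It only follows via $\nu(B(z,r/2))^2\le\int\nu(B(x,r))\,\der\nu(x)$, and that route uses up the factor-$1/2$ slack in the exponent you were relying on.
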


The assumption on the measure $\varpi$, encompasses several standard choices of time-weighting within the \emph{continuous training} framework \cite{song2021scorebased}. In the following remark, we consider several of these, obtaining closed-form bounds.

\begin{remark}
Under the ELBO weighting $\varpi = \varpi^\epsilon_{\mathrm{ELBO}}$, the sample size lower bound becomes $n \geq (8\sigma_\epsilon \sqrt{d})^{-d'/2}$ for $\epsilon$ sufficiently small. Under the variance weighting $w(t) = \sigma_t^2$, commonly used for training \cite{hoDenoisingDiffusion2020}, the analogous bound is $n \geq (2(32\varepsilon \vee \sigma_\epsilon)\sqrt{d})^{-d'/2}$ for $\varepsilon, \epsilon$ sufficiently small. Most strikingly, when $\epsilon = 0$ and $\varepsilon \leq \tfrac{1}{32}\limsup_{t \to 0^+} w(t)$, there is no sample size at which benign overfitting can occur: $\E[\LESMn(\hat{s}, \varpi)] \leq \varepsilon$ necessarily implies $\E[\LESM(\hat{s}, \varpi)] > \varepsilon$, $\E[\LESM(\hat{s}, \varpi^0_{\mathrm{ELBO}})] > {1}/{32}$, and $\klb{p_\delta}{q_{T-\delta}} > {1}/{32}$.
\end{remark}

We highlight that the lower bound is increasing exponentially in the intrinsic dimension and depends on $\epsilon$, supporting both the idea that manifold distributed data is beneficial to diffusion models and that early stopping in the reverse process can benefit generalization \cite{azangulov2024convergencediffusionmodelsmanifold,farghly2026diffusion,debortoli2022convergence, farghly2025implicitregularisationdiffusionmodels}. Since many popular implementations of diffusion models take $\epsilon \in \setof{10^{-3}, 10^{-5}}$, the expectation that the amount of data scales with $\epsilon^{-d'/2}$ is unreasonably strong. Thus, in most practical settings, diffusion models will not exhibit the property of benign overfitting.

The proof, presented in \Cref{sec:proof-negative-sm-result}, uses a similar approach as \Cref{lemma:fisher-lemma}, showing that for empirical and population losses to be small, $\ptn$ and $\pt$ must be close. Instead of the Fisher information, we derive a connection to the total variation via an argument based on Girsanov's theorem.

\section{Fine-grained analysis on linear RFNNs}
\label{sec:fine-grainded-linear-analysis}

\begin{figure*}[t]
    \centering
    \begin{minipage}{0.32\textwidth}
        \centering
        \includegraphics[width=\linewidth]{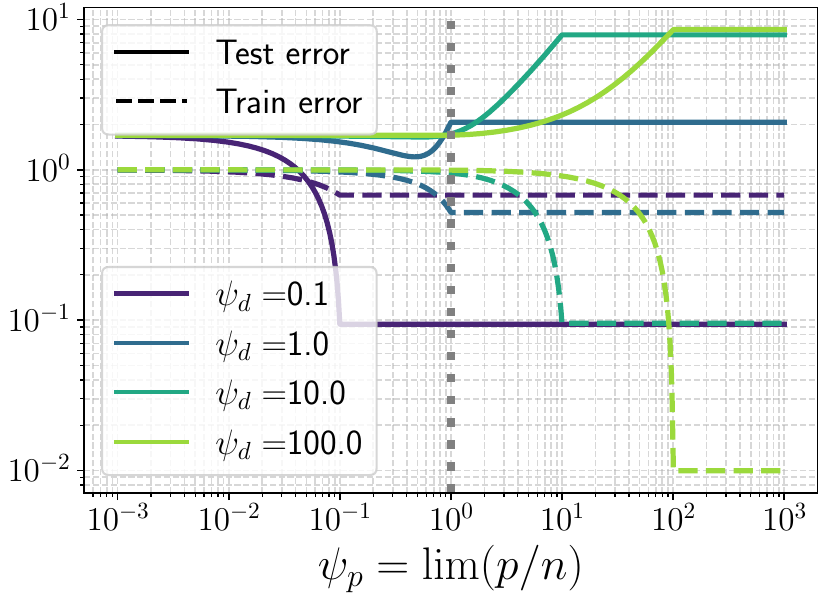}
    \end{minipage}
    \hfill
    \begin{minipage}{0.32\textwidth}
        \centering
        \includegraphics[width=\linewidth]{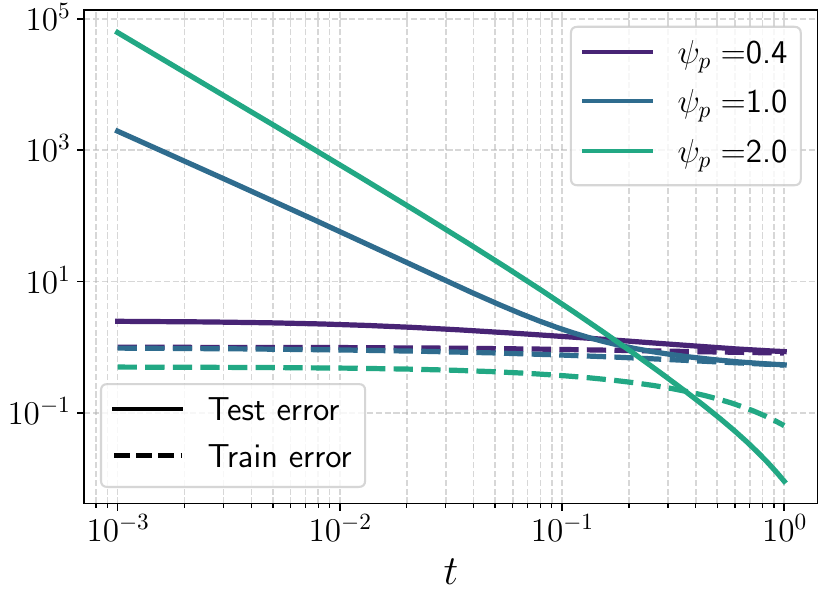}
    \end{minipage}
    \hfill
    \begin{minipage}{0.32\textwidth}
        \centering
        \includegraphics[width=\linewidth]{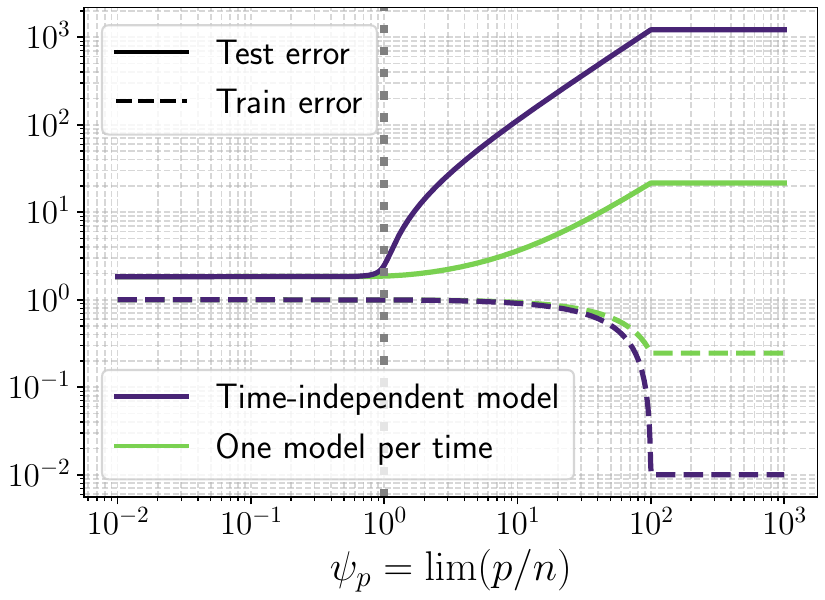}
    \end{minipage}
    \caption{Linear random features under \Cref{ass:linear_setup_assumption}. \textit{(Left)} time is fixed at $t = 10^{-1}$ and $\psi_p$ varies. \textit{(Center)} $\psi_d$ is fixed at $10^2$ and $t$ varies. \textit{(Right)} time integrated losses (as in \Cref{sec:time-smoothness}) with $\pi$ the uniform distribution on $[0,T]$. \texttt{Train error} denotes the empirical DSM loss with the weighting used in training in practice. \texttt{Test error} is the population ESM loss with ELBO weighting.}
    \label{fig:fixed-time-linear-psi_p}
    \vskip -0.4cm
\end{figure*}

Section \ref{sec:fundamental-limitations} establishes at a general level that benign overfitting cannot occur in diffusion models in most practical settings. However, it does not reveal how overfitting manifests, nor how it depends on the interplay between model size, data dimension, and noise scale. To get a finer understanding, we now analyze a concrete and tractable model class: two-layer linear random feature networks \cite{rahimi_random_features_2007}, similar to the linear activation case of the models considered in \cite{george_asymptotic_2026,george_denoising_2025,bonnaire_why_2025-1}. Despite the simplicity of the linear setting, it has been central to the theoretical analysis of benign overfitting and double descent \cite{belkin_reconciling_2019, Hastie2022-zv}, suggesting that it is rich enough to capture overfitting in diffusion models.

\textbf{Setup.} We study $2$-layer linear score networks of the form $s_{W,A} : x \mapsto A W x$, where $W \in \R^{p \times d}$ is fixed and $A \in \R^{d \times p}$ is a trainable parameter. We fix a time $t \in (0,T]$ and consider the minimization of the empirical risk defined by $A \mapsto \sigma_t^2 \LDSMn(s_{W,A}, t)$.
We denote by $\Sigmabf$ the covariance matrix of the data distribution $\nu$ and define the empirical covariance matrix $\Sigmabfhat := n^{-1} \sum_{i=1}^n Z_i Z_i^T$ along with $\Sigmabf_t := \alpha_t^2 \Sigmabf + \sigma_t^2 \Irm_d$ and $\widehat{\Sigmabf}_t := \alpha_t^2 \widehat{\Sigmabf} + \sigma_t^2 \Irm_d$.

In the following lemma, we characterize the DSM and ESM loss associated with the empirical risk minimization problem. The proof can be found in \Cref{sec:proofs-fine-grainded-linear-analysis}.

\begin{lemma}
    \label{lemma:fixed-time-non-asymptotic-lemma}
    Assume that $W$ has full rank and take $\widehat{A} \in \argmin \LDSMn(s_{W, A}, t)$.
    Let $\Omega_t := \Sigmabfhat_t^{-1}$ if $p \geq d$ and $\Omega_t := W^T (W \widehat{\Sigmabf}_t  W^T)^{-1} W$ if $p < d$. Then, $\Ahat W = -\Omega_t$ and,
    \begin{align*}
        \sigma_t^2 \LDSMn(s_{W,\Ahat}, t) = d - \sigma_t^2 \trace ( \Omega_t ) \eqsp, \qquad  \LESM(s_{W, \widehat{A}}, t) = \frobnormLigne{( \Omega_t - \Sigmabf_t^{-1} ) \Sigmabf_t^{1 / 2}}^2 \eqsp.
    \end{align*}
\end{lemma}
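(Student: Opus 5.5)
The plan is to write the empirical DSM loss at fixed $t$ as an explicit quadratic in $A$, solve the normal equations, and then evaluate both losses at the minimizer.

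\textbf{Quadratic form of the loss.} Conditionally on $Z_i$ we have $X_t = \alpha_t Z_i + \sigma_t \xi$ with $\xi\sim\Nrm(0,\Irm_d)$, and $\nabla\log p_{t|0}(X_t|Z_i) = -\xi/\sigma_t$. Hence
\begin{equation*}
\sigma_t^2\LDSMn(s_{W,A},t) = \frac1n\sum_{i=1}^n \E\big\|\sigma_t A W X_t + \xi\big\|^2 .
\end{equation*}
Expanding the square, and using $\frac1n\sum_i\E[X_tX_t^T]=\widehat{\Sigmabf}_t$ and $\E[X_t\xi^T]=\sigma_t\Irm_d$, gives
\begin{equation*}
\sigma_t^2\LDSMn = \sigma_t^2\trace\big(AW\widehat{\Sigmabf}_tW^TA^T\big) + 2\sigma_t^2\trace(AW) + d .
\end{equation*}
This is a convex quadratic in $A$. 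The first-order condition is $AW\widehat{\Sigmabf}_tW^T = -W^T$.

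\textbf{Case $p<d$.} Here $W$ has full row rank $p$, so $W\widehat{\Sigmabf}_tW^T$ is positive definite; this uses $\widehat{\Sigmabf}_t \succeq \sigma_t^2\Irm_d \succ 0$. The minimizer is therefore unique, namely $\Ahat = -W^T(W\widehat{\Sigmabf}_tW^T)^{-1}$, and so $\Ahat W=-\Omega_t$.

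\textbf{Case $p\ge d$.} Now $W$ has full column rank. Any $A$ with $AW=-\widehat{\Sigmabf}_t^{-1}$ satisfies the normal equation, since then $AW\widehat{\Sigmabf}_tW^T=-W^T$. Conversely, a solution $A$ of the normal equation satisfies $(AW\widehat{\Sigmabf}_t+\Irm_d)W^T=0$. Because $W^T$ is surjective onto $\Rd$, this forces $AW\widehat{\Sigmabf}_t=-\Irm_d$, i.e.\ $AW=-\widehat{\Sigmabf}_t^{-1}$. So every minimizer has $\Ahat W=-\Omega_t$, even though $\Ahat$ itself need not be unique.

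\textbf{The DSM value.} Plug $\Ahat W=-\Omega_t$ into the quadratic. In both cases $\Omega_t\widehat{\Sigmabf}_t\Omega_t=\Omega_t$: for $p\ge d$ this is immediate, and for $p<d$ the middle factor $W\widehat{\Sigmabf}_tW^T$ cancels one inverse. The first term is therefore $\sigma_t^2\trace(\Omega_t)$ and the cross term is $-2\sigma_t^2\trace(\Omega_t)$. This gives $d-\sigma_t^2\trace(\Omega_t)$.

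\textbf{The ESM value.} The population ESM loss requires the true score. Since $s_{W,A}$ is linear, its comparison involves only second moments, so the cleanest route is the DSM/ESM identity rather than treating $\nabla\log p_t$ directly. By Vincent's identity, $\LESM(s,t)=\LDSM(s,t)-C_t$, and $C_t$ does not depend on $s$. The population DSM loss for $s(x)=Mx$ is, by the same expansion as above,
\begin{equation*}
\trace\big(M\Sigmabf_tM^T\big) + 2\trace(M) + d/\sigma_t^2 .
\end{equation*}
This assumes centered data or second-moment $\Sigmabf$; we take $\Sigmabf$ to be the second-moment matrix, consistent with $\Sigmabfhat$.

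Completing the square writes this as $\frobnormLigne{(M+\Sigmabf_t^{-1})\Sigmabf_t^{1/2}}^2$ plus a constant independent of $M$. So it remains to check that $\LESM$ equals exactly this Frobenius term with no leftover constant. For a Gaussian $\nu$ this is clear, because $\nabla\log p_t(x)=-\Sigmabf_t^{-1}x$. In general the linear part of the error is orthogonal to the nonlinear remainder. The main obstacle is this identification: I expect the statement implicitly treats $\nu$ as Gaussian, or else interprets $\LESM$ up to the constant. I would state it under $\nabla\log p_t(x)=-\Sigmabf_t^{-1}x$ and then set $M=-\Omega_t$, which gives $\frobnormLigne{(\Omega_t-\Sigmabf_t^{-1})\Sigmabf_t^{1/2}}^2$.
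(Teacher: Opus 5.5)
Your proposal is correct and follows essentially the same route as the paper: expand the DSM loss into the quadratic $d+\sigma_t^2\trace(AW\Sigmabfhat_tW^TA^T)+2\sigma_t^2\trace(AW)$, solve the normal equation $AW\Sigmabfhat_tW^T=-W^T$ separately for $p<d$ and $p\ge d$, and substitute back; your use of the surjectivity of $W^T$ plays the same role as the paper's left-inverse reduction to a $d$-dimensional linear model. Your caveat on the ESM step is also right, since the paper's proof assumes $\nu=\Nrm(0,\Sigmabf)$ so that $\nabla\log p_t(x)=-\Sigmabf_t^{-1}x$, and under that assumption the direct computation already gives the Frobenius formula without the detour through Vincent's identity.
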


For $p\geq d$, the ESM and DSM loss at $\Ahat$ are independent of $p$. By the proof of \Cref{lemma:fixed-time-non-asymptotic-lemma}, this is because the empirical risk minimization (ERM) problem becomes equivalent to the ERM associated with a $d$-dimensional linear model (independent of $p$), similar to \cite{merger2025generalization}.
See \Cref{sec:proofs-fine-grainded-linear-analysis} for details.
We specify our setup further in the next assumption.

\begin{assumption}
    \label{ass:linear_setup_assumption}
    $W\in\R^{p \times d}$ has $\Nrm(0,1)$ i.i.d. entries, $A\in\R^{d \times p}$ and $\nu = \Nrm(0, \beta\Irm_d)$, with $\beta > 0$. 
\end{assumption}

The next proposition is a precise characterization of overfitting in a proportional asymptotic regime.

\begin{proposition}[Overfitting in linear networks]
\label{prop:fixed-time-asymptotic-linear-results}
Under \Cref{ass:linear_setup_assumption}, we consider the limit $d,n,p \to \infty$ such that $p / n \to \psi_p$ and $d / n \to \psi_d$.
When $\psi_d, \psi_p > 1$, we have, almost-surely,
\begin{align}
    \label{eq:linear-asymptotics-fixed-time-population}
   \frac1{d} \LESM(s_{W,\Ahat}, t)  \to \frac1{\alpha_t^2 \beta + \sigma_t^2} \bigg ( \frac{\psi_p \wedge \psi_d - 1}{\psi_d} \frac{\alpha_t^4 \beta^2}{\sigma_t^4} + \left( 1 - \frac{\psi_p}{\psi_d} \right)_+ \bigg ) + \landau{\frac1{\psi_d}} \eqsp,
\end{align}
where $\widehat{A} \in \argmin \LDSMn(s_{W, A}, t)$.
For the empirical risk, we have
\begin{align}
    \label{eq:linear-asymptotics-fixed-time-empirical}
    \frac{\sigma_t^2}{d} \LDSMn (s_{W, \widehat{A}}, t) \to  \frac1{\psi_d} + \left( 1 - \frac{\psi_p}{\psi_d} \right)_+ + \landau{\frac1{\psi_d (\psi_p \wedge \psi_d)}} \eqsp.
\end{align}
\end{proposition}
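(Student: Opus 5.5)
The plan is to apply \Cref{lemma:fixed-time-non-asymptotic-lemma} directly and reduce everything to the spectra of $\Omega_t$ and $\Sigmabfhat$. Under \Cref{ass:linear_setup_assumption}, $\Sigmabf_t = c_t \Irm_d$ with $c_t := \alpha_t^2\beta + \sigma_t^2$. Expanding the Frobenius norm in the lemma then gives
\begin{equation*}
\LESM(s_{W,\Ahat},t) = c_t \trace(\Omega_t^2) - 2\trace(\Omega_t) + d/c_t \eqsp, \qquad \sigma_t^2 \LDSMn(s_{W,\Ahat},t) = d - \sigma_t^2\trace(\Omega_t) \eqsp.
\end{equation*}
So it suffices to find the almost-sure limits of $d^{-1}\trace(\Omega_t)$ and $d^{-1}\trace(\Omega_t^2)$, up to the stated error orders. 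Write $\Sigmabfhat = \tfrac{\beta}{n} G^T G$ with $G\in\R^{n\times d}$ standard Gaussian. Since $d>n$, $\Sigmabfhat$ has $d-n$ zero eigenvalues. Its $n$ nonzero eigenvalues are those of $\tfrac{\beta}{n}GG^T$, which by Marchenko--Pastur (together with almost-sure edge convergence, Bai--Yin) lie in $\beta\psi_d(1\pm\psi_d^{-1/2})^2$, i.e.\ they are of order $\beta\psi_d$.

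\textbf{Case $\psi_p\ge\psi_d$.} Here $\Omega_t=\Sigmabfhat_t^{-1}$. It has eigenvalue $\sigma_t^{-2}$ with multiplicity $d-n$, and $n$ eigenvalues $(\alpha_t^2\lambda_i+\sigma_t^2)^{-1}$, each of size $O(1/\psi_d)$. The null directions contribute $(d-n)(c_t-\sigma_t^2)^2/(c_t\sigma_t^4) = (d-n)\alpha_t^4\beta^2/(c_t\sigma_t^4)$ to the ESM. The remaining $n$ directions contribute at most $n\cdot O(1)$, which is $O(1/\psi_d)$ after dividing by $d$. This gives \eqref{eq:linear-asymptotics-fixed-time-population} with $\psi_p\wedge\psi_d=\psi_d$. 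For the DSM, $\sigma_t^2\trace\Omega_t = (d-n) + O(n/\psi_d)$, which gives $1/\psi_d + O(\psi_d^{-2})$, consistent with \eqref{eq:linear-asymptotics-fixed-time-empirical}.

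\textbf{Case $n<p<d$.} Let $U\in\R^{d\times p}$ be an orthonormal basis of the row space of $W$. Then
\begin{equation*}
\Omega_t = U(U^T\Sigmabfhat_tU)^{-1}U^T \eqsp, \qquad U^T\Sigmabfhat_tU = \sigma_t^2\Irm_p + \alpha_t^2 U^T\Sigmabfhat U \eqsp.
\end{equation*}
By rotation invariance of Gaussian $W$, $U$ is Haar distributed and independent of $S$. The matrix $U^T\Sigmabfhat U = \tfrac{\beta}{n}(GU)^T(GU)$ has $GU$ of size $n\times p$ with i.i.d.\ Gaussian entries. It therefore has $p-n$ zero eigenvalues and $n$ nonzero eigenvalues, which by Marchenko--Pastur with ratio $p/n\to\psi_p$ concentrate in $\beta\psi_p(1\pm\psi_p^{-1/2})^2$. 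Hence $\Omega_t$ has eigenvalue $\sigma_t^{-2}$ with multiplicity $p-n$, $n$ eigenvalues of order $O(1/\psi_p)$, and $d-p$ zeros. Substituting into the trace formula, the ESM becomes
\begin{equation*}
(p-n)\frac{\alpha_t^4\beta^2}{c_t\sigma_t^4} + \frac{d-p}{c_t} + O(n) \eqsp.
\end{equation*}
Dividing by $d$ yields \eqref{eq:linear-asymptotics-fixed-time-population} with $\psi_p\wedge\psi_d=\psi_p$ and the term $(1-\psi_p/\psi_d)_+$. For the DSM, $\sigma_t^2\trace\Omega_t=(p-n)+O(n/\psi_p)$, so $d^{-1}\sigma_t^2\LDSMn \to 1-\psi_p/\psi_d+1/\psi_d + O(1/(\psi_d\psi_p))$.

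\textbf{Main obstacle.} The hardest step is making the error terms rigorous almost surely. In particular, the $n$ nonzero eigenvalues must be bounded uniformly away from zero at scale $\psi_p$ (respectively $\psi_d$), so that the $n$ non-null directions contribute only $O(1/\psi_p)$ to $\sigma_t^2\trace\Omega_t$ and $O(1)$ per direction to the ESM. I would handle this with the Bai--Yin almost-sure convergence of the smallest nonzero singular value of a rectangular Gaussian matrix, $s_{\min}(GU)/\sqrt n\to\sqrt{\psi_p}-1$, which requires $\psi_p>1$ and explains that hypothesis. I would combine it with the exact rank and multiplicity counts above, which hold almost surely since Gaussian matrices have full rank.
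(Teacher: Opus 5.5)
Your proposal is correct and follows the paper's skeleton. You apply \Cref{lemma:fixed-time-non-asymptotic-lemma}, reduce the case $p<d$ by rotation invariance to the $p$-dimensional sample covariance $V^T\Sigmabfhat V$ with $V^TZ_i\sim\Nrm(0,\beta\Irm_p)$, and observe that the $\sigma_t^{-4}$ term comes from the null directions. The $(1-\psi_p/\psi_d)_+$ term comes from the complement of the row space of $W$, and the $n$ remaining directions contribute $\landau{1/\psi_d}$.

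Where you differ is in how the bulk is handled:
\begin{itemize}
\item The paper first proves exact limiting curves (\Cref{prop:linear_net_exact_learning_curves}) from weak convergence of the empirical spectral measure to $\mu_{\mathrm{MP}}^{(\psi)}$, tested against bounded continuous functions. It then splits $\mu_{\mathrm{MP}}^{(\psi)}$ into its atom of mass $1-1/\psi$ at zero (the limit of your rank-deficiency counts $(d-n)/d$ and $(p-n)/d$) and its bulk of mass $1/\psi$ on $[(\sqrt{\psi}-1)^2,(\sqrt{\psi}+1)^2]$. The bulk location is read off the limiting law, so no almost-sure edge result such as Bai--Yin is needed; the step you flag as the main obstacle is avoided on that route.
\item Your finite-$n$ counting plus Bai--Yin gives explicit, eventually valid almost-sure bounds. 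On its own, however, it only places the $\liminf$ and $\limsup$ of the normalized losses within $\landau{1/\psi_d}$ of the main term. Existence of the limit, and the exact curves used for the figures, still need the Marchenko--Pastur weak-convergence step.
\item For the ESM, the per-direction bound is in fact deterministic. With your $c_t=\alpha_t^2\beta+\sigma_t^2$, one has $(c_t\omega-1)^2/c_t\le\max\{1,\alpha_t^4\beta^2/\sigma_t^4\}/c_t$ for every $\omega\in[0,\sigma_t^{-2}]$. Eigenvalue location matters only for making this constant $t$-uniform and for the sharper $\landau{1/(\psi_d(\psi_p\wedge\psi_d))}$ DSM error.
\end{itemize}

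Two small corrections.
\begin{itemize}
\item When $\psi_p=\psi_d$ you may have $p<d$ at finite $n$, so $\Omega_t=\Sigmabfhat_t^{-1}$ is not guaranteed in your first case. Split according to the sign of $p-d$ at each $n$; this is harmless, since both formulas give the same limit when $(d-p)/d\to0$. The paper's proof, which treats only strict inequalities, skips this boundary as well.
\item The hypothesis $\psi_p>1$ is not what Bai--Yin needs: the smallest nonzero singular value of $GU$ stays away from zero whenever $\psi_p\neq1$. Its real role, like that of $\psi_d>1$, is to create the $p-n$ (resp.\ $d-n$) null directions that produce the $\sigma_t^{-4}$ term.
\end{itemize}
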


Proposition \ref{prop:fixed-time-asymptotic-linear-results} demonstrates how the population loss is a U-shaped function of model complexity, captured by the relative model width, $\psi_p$. When $\psi_p > \psi_d$, we observe that the empirical risk of \eqref{eq:linear-asymptotics-fixed-time-empirical} is of order $\landau{\psi_d^{-1}}$ and can, therefore, become arbitrarily small and interpolate training data when the data dimension is large. In the same regime, we observe that the population score matching loss is of order $\sigma_t^{-4}$ and therefore explodes as $t\to 0^+$. This can also be observed clearly in \Cref{fig:fixed-time-linear-psi_p}, displaying the exact asymptotics of the risks, obtained in the proofs in \Cref{sec:proofs-fine-grainded-linear-analysis}.
It demonstrates that interpolation and generalization are mutually incompatible. We also study the dependence on $t$ in \Cref{fig:fixed-time-linear-psi_p}, showing that the rate of explosion of the excess risk as $t \to 0^+$ depends on the model complexity.

\textbf{Comparison with regression.} To understand how these results differ from traditional regression, we consider a multi-output regression problem with the same data and architecture. Given a target $B_\star \in \R^{d \times d}$ and noiseless labels $Y_i := B_\star Z_i$, we minimize the empirical risk $A \mapsto n^{-1} \sum_{i=1}^n \normof{s_{W, A}(Z_i) - Y_i}^2 + \lambda\frobnorm{AW}^2 $. 
We focus on the overparameterized regime $p > d > n$ and $\lambda = 0$, where minimizers $A$ satisfy
$(AW - B_\star) \widehat{\Sigmabf} = 0$.
Since  $\Sigmabfhat$ has rank $n < d$ a.s.,
this gives a $d(d-n)$-dimensional affine subspace of solutions for $AW$, contrasting sharply with \Cref{lemma:fixed-time-non-asymptotic-lemma}, where score matching produces a unique value of $A W = -\Sigmabfhat_t^{-1}$ even without regularization.

The excess risk for the regression problem is given directly by $\frobnormLigne{(AW - B_\star) \Sigmabf^{1/2}}^2$. The expression has a similar form as the score matching ESM loss in Lemma~\ref{lemma:fixed-time-non-asymptotic-lemma} once replacing $B_\star$ with $-\Sigmabf_t^{-1}$, and $\Sigmabf$ with $\Sigmabf_t$ in the weighting.
To resolve the under-determination, we consider the ridgeless limit $\lambda \to 0^+$, recovering the minimum-norm interpolator. This projects $AW$ on the orthogonal of $\mathrm{ker}(\Sigmabfhat)$, leading to the excess risk $\frobnormLigne{B_\star \, \Pi_{\mathrm{ker}(\widehat{\Sigmabf})} \Sigmabf^{1/2}}^2$, where $\Pi_{\mathrm{ker}(\widehat{\Sigmabf})}$ is the column-wise projection onto the kernel of $\widehat{\Sigmabf}$. Thus, generalization can occur in the overparameterized regime due to two factors: (i) the minimum-norm bias projects $AW$ on the orthogonal of $\mathrm{ker}(\Sigmabfhat)$; (ii) the residual is small when $B_\star$ \emph{aligns} with $\Sigmabfhat^{1/2}$ within the kernel of $\widehat{\Sigmabf}$. This is made precise in the benign overfitting literature that identifies a form of \emph{self-induced regularization} resulting from alignment \cite{bartlett_benign_2020-1, Hastie2022-zv}.

The picture is fundamentally different for score matching. 
First, since the empirical problem exactly determines the solution $AW = - \Sigmabfhat_t^{-1}$, there can be no minimum-norm bias that projects $AW$ orthogonally to $\mathrm{ker}(\Sigmabfhat)$. Moreover, whereas $B_\star$ could reasonably align with $\Sigmabf^{1/2}$ on the null space in such a way that makes their product small, it is less likely that $-\Sigmabf_t^{-1}$ aligns in the same way with $\Sigmabf_t^{1/2}$. Indeed, if $\Sigmabf$ is not full rank, we have $\frobnormLigne{\Sigmabf_t^{-1} \Pi_{\mathrm{ker}(\widehat{\Sigmabf})} \Sigmabf_t^{1/2}}^2 \geq (d-n)\sigma_t^{-2}$ which explodes as $t \to 0$.
This is made precise in the proof of \Cref{prop:fixed-time-asymptotic-linear-results}, leading to the first term of \eqref{eq:linear-asymptotics-fixed-time-population}, corresponding to the contribution of the null space of $\Sigmabfhat$ to the ESM loss.

\section{Preventing overfitting in diffusion models}
\label{sec:preventing-overfitting}

The absence of benign overfitting in diffusion models suggests that regularization is required to achieve good generalization in the overparameterized regime. In this section, we show that the score matching framework contains forms of \emph{implicit regularization} that help prevent overfitting.

\subsection{Implicit regularization due to time smoothness}
\label{sec:time-smoothness}

In most practical applications of diffusion models, a single neural network is trained across multiple noise levels. Whilst primarily done for efficiency, the resulting time-smoothness of the score function has been shown empirically to benefit generalization in diffusion models \cite{NEURIPS2025_9a3b1949}. Here, we study the impact of time-smoothness of the score estimator in the setting of \Cref{sec:fine-grainded-linear-analysis} by considering the extreme case where it is time-independent.

\textbf{Setup.}
We parameterize the score network as in \Cref{sec:fine-grainded-linear-analysis} with $p > d$ and we consider a time weighting $\varpi$ which is a probability distribution on $[0,T]$ such that we can define the probability measure $\pi(\der t) \propto \sigma_t^{-2} \varpi(\der t)$.
For any distribution $\omega \in \setof{\pi, \varpi}$ on $[0,T]$, we write
\begin{align*}
    \alpha_\omega^2 := \int \alpha_t^2 \der \omega(t)\eqsp, \quad \sigma_\omega^2 := \int \sigma_t^2 \der \omega(t)\eqsp, \quad \Sigmabf_\omega := \int \Sigmabf_t \der \omega(t)\eqsp, \quad \Sigmabfhat_\omega := \int \Sigmabfhat_t \der \omega(t)\eqsp.
\end{align*}
In the next lemma, we characterize the empirical risk minimization in this setting.

\begin{lemma}
    \label{lemma:time-smoothness-non-asymptitc-lemma}
    Assume that $p > d$ and $W$ is full-rank. Then, any $\Ahat \in \argmin_A \LDSMn(s_{A,W}, \varpi)$ satisfies $\Ahat W = -\Sigmabfhat^{-1}_\varpi$ and we have
    \begin{align*}
        \LESM(s_{\Ahat,W}, \pi) = \frobnormLigne{( \Sigmabfhat_\varpi^{-1} - \Sigmabf_\pi^{-1} ) \Sigmabf_\pi^{1 / 2} }^2 + \Crm_\pi \eqsp, \quad \LDSMn(s_{\Ahat,W}, \varpi) = d \sigma_\pi^{-2} - \trace (\Sigmabfhat_\varpi^{-1}) \eqsp,
    \end{align*}
    where $\Crm_\pi \geq 0$ is given in the proof in \Cref{eq:time-integrated-ESM_constant}. $\Crm_\pi$ is $0$ only when $\pi$ is a Dirac distribution.
\end{lemma}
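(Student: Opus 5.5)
The plan is to reduce everything to a strictly convex quadratic in the effective matrix $B := AW \in \R^{d\times d}$, solve it in closed form, and then complete the square for the population loss under $\pi$. First I will observe that since $p > d$ and $W$ is full rank, $W$ has rank $d$. Hence $A \mapsto AW$ is onto $\R^{d \times d}$, and minimizing over $A$ is equivalent to minimizing over $B$. Any $\Ahat$ in the argmin then satisfies $\Ahat W = B^\star$, where $B^\star$ is the unique minimizer in $B$. Since the score is time-independent, $s(t,x) = Bx$ for all $t$.

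\textbf{Step 1 (empirical DSM).} For fixed $t$ and data point $Z$, write $X_t = \alpha_t Z + \sigma_t \xi$ with $\xi \sim \Nrm(0,\Irm_d)$, so that $\nabla \log \pto(X_t\mid Z) = -\xi/\sigma_t$. Expanding, using $\E[\xi]=0$ and $\E[\xi^T B \xi] = \trace(B)$, gives
\[
\ell_t(s,Z) = \alpha_t^2 Z^T B^T B Z + \sigma_t^2 \frobnormLigne{B}^2 + 2\trace(B) + d\sigma_t^{-2}.
\]
Averaging over the dataset and integrating against $\varpi$ yields
\[
\LDSMn(s,\varpi) = \trace(B\Sigmabfhat_\varpi B^T) + 2\trace(B) + d\int\sigma_t^{-2}\,\der\varpi(t).
\]
From the definition of $\pi$, $\sigma_\pi^2 = \int \sigma_t^2 \sigma_t^{-2}\,\der\varpi \big/ \int\sigma_t^{-2}\,\der\varpi$, so the last constant equals $d\sigma_\pi^{-2}$. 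The matrix $\Sigmabfhat_\varpi \succeq \sigma_\varpi^2 \Irm_d \succ 0$, so the quadratic is strictly convex in $B$. Its unique minimizer is $B^\star = -\Sigmabfhat_\varpi^{-1}$. Substituting back gives the value $d\sigma_\pi^{-2} - \trace(\Sigmabfhat_\varpi^{-1})$.

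\textbf{Step 2 (population ESM under $\pi$).} In the Gaussian setting of \Cref{ass:linear_setup_assumption}, or more generally for a centered Gaussian $\nu$ with covariance $\Sigmabf$, we have $X_t \sim \Nrm(0,\Sigmabf_t)$ and $\nabla\log\pt(x) = -\Sigmabf_t^{-1}x$. Then
\[
\LESM(s,t) = \trace\big((B+\Sigmabf_t^{-1})\Sigmabf_t(B+\Sigmabf_t^{-1})^T\big) = \trace(B\Sigmabf_tB^T) + 2\trace(B) + \trace(\Sigmabf_t^{-1}).
\]
Integrating against $\pi$ replaces $\Sigmabf_t$ by $\Sigmabf_\pi$ in the first term. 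Completing the square around $-\Sigmabf_\pi^{-1}$ gives
\[
\LESM(s,\pi) = \frobnormLigne{(B+\Sigmabf_\pi^{-1})\Sigmabf_\pi^{1/2}}^2 + \Crm_\pi, \qquad \Crm_\pi := \int\trace(\Sigmabf_t^{-1})\,\der\pi(t) - \trace(\Sigmabf_\pi^{-1}).
\]
Plugging in $B = -\Sigmabfhat_\varpi^{-1}$ gives the stated formula. This step assumes $\pi$ charges $t$ only where $\sigma_t > 0$, which is already needed for $\pi$ to be well defined.

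\textbf{Step 3 (sign and vanishing of $\Crm_\pi$).} This is the only non-mechanical step. In the eigenbasis of $\Sigmabf$, all $\Sigmabf_t$ are simultaneously diagonal with eigenvalues $\alpha_t^2\lambda_j + \sigma_t^2$. Therefore
\[
\Crm_\pi = \sum_j \Big(\int (\alpha_t^2\lambda_j+\sigma_t^2)^{-1}\,\der\pi - \Big(\int(\alpha_t^2\lambda_j+\sigma_t^2)\,\der\pi\Big)^{-1}\Big),
\]
and each summand is non-negative by Jensen's inequality for the strictly convex function $x\mapsto 1/x$. Equality in a summand forces $t\mapsto\alpha_t^2\lambda_j+\sigma_t^2$ to be $\pi$-a.s. constant. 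I then need to show this function is strictly monotone in $t$. Its derivative is $-2\kappa\alpha_t^2\lambda_j + 2\alpha_t^2$, using $\frac{\der}{\der t}\sigma_t^2 = 2\alpha_t^2$. This is strictly positive when $\kappa\lambda_j < 1$, which holds for $\lambda_j = \beta$ with $\kappa\beta<1$. When $\kappa\beta = 1$ (the stationary case) the eigenvalue is constant in $t$, and the claim ``$\Crm_\pi = 0$ only for Dirac $\pi$'' would fail. I therefore expect to have to state this nondegeneracy condition explicitly (or note that it holds, e.g., for $\kappa = 0$) to conclude that $\pi$ must be a Dirac mass.
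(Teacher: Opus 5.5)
Your proposal is correct and follows the paper's proof essentially step for step. Like the paper, you reduce to $B = AW$ via surjectivity of $A \mapsto AW$, write the $\varpi$-integrated loss as $d\sigma_\pi^{-2} + \trace(B\Sigmabfhat_\varpi B^T) + 2\trace(B)$, minimize to get $-\Sigmabfhat_\varpi^{-1}$, complete the square under $\pi$, and obtain $\Crm_\pi \ge 0$ by diagonalizing and applying Jensen.

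The paper never actually proves that $\Crm_\pi = 0$ only for Dirac $\pi$, and you are right that this fails when $\Sigmabf = \kappa^{-1}\Irm_d$; for instance, the paper's own uniform-$\pi$ formula gives $\Crm_\pi = 0$ at $\kappa = \beta = 1$. One correction to your fix: the derivative $2\alpha_t^2(1-\kappa\lambda_j)$ is strictly negative when $\kappa\lambda_j > 1$, so strict monotonicity holds whenever $\kappa\lambda_j \neq 1$. The right extra hypothesis is therefore $\Sigmabf \neq \kappa^{-1}\Irm_d$ (i.e.\ $\kappa\beta \neq 1$ under \Cref{ass:linear_setup_assumption}), not $\kappa\beta < 1$.
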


\begin{remark}[Equivalence with ridge]
    \label{rk:ridge-equivalence}
    By \Cref{lemma:time-smoothness-non-asymptitc-lemma}, we have $\Ahat W = -(\alpha_\varpi^2 \Sigmabfhat + \sigma_\varpi^2 \Irm_d)^{-1} = - \Sigmabfhat_\varpi^{-1}$. As $\varpi$ is a probability distribution, there exists $t_\varpi > 0$ such that $\alpha_{t_\varpi}^2 = \alpha_\varpi^2$. Thus, for any $t < t_\varpi$, we have
\begin{align*}
   \Sigmabfhat_\varpi = \eta_{t,\varpi}^{-1} ( \Sigmabfhat_t + 2\lambda_{t,\varpi} \Irm_d) \eqsp, \quad 2\lambda_{t,\varpi} := \eta_{t,\varpi} (\sigma_\varpi^2 - \sigma_t^2) > 0 \eqsp, \quad \eta_{t,\varpi} := \alpha_t^2 / \alpha_\varpi^2 \eqsp,
\end{align*}
and so, we see that $\eta_{t, \varpi}^{-1} \Ahat \in \argmin_A (\LDSMn(s_{A,W}, t) + \lambda_{t,\varpi} \normofLigne{AW}^2)$. Therefore, in this case, considering the time-integrated loss is formally similar to adding ridge regularization for small times $t < t_\varpi$.
\end{remark}

Next, we make this observation precise by characterizing the asymptotic score matching losses.

\begin{proposition}
    \label{prop:time-smoothness-asymptotics}
    Under \Cref{ass:linear_setup_assumption}, we consider the limit $d,n,p \to \infty$ such that $p / n \to \psi_p$ and $d / n \to \psi_d$. Let $\Ahat \in \argmin_A \LDSMn(s_{W,A}, \varpi)$. When $\psi_p > \psi_d > 1$, we have, almost-surely,
    \begin{align*}
         \frac{\LDSMn(s_{W,\Ahat}, \varpi)}{d} \to
         \frac1{\sigma_\pi^2} - \frac1{\sigma_\varpi^2}  + \mathcal{O}(\psi_d^{-1}), \quad \frac{\LESM(s_{W,\Ahat}, \pi )}{d} \to
         \Lrm_\pi + \mathcal{O}(\psi_d^{-1}) ,
    \end{align*}
    where $ {\Lrm}_\pi \geq 0$ is a constant whose expression is given in \Cref{eq:Lrm_pi_constant}, and $\sigma_\pi^{-2} - \sigma_\varpi^{-2} \geq 0$.
\end{proposition}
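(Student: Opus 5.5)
The plan is to start from the closed forms of \Cref{lemma:time-smoothness-non-asymptitc-lemma}, namely
\[
\LDSMn(s_{W,\Ahat},\varpi)=d\sigma_\pi^{-2}-\trace(\Sigmabfhat_\varpi^{-1}),
\qquad
\LESM(s_{W,\Ahat},\pi)=\frobnormLigne{(\Sigmabfhat_\varpi^{-1}-\Sigmabf_\pi^{-1})\Sigmabf_\pi^{1/2}}^2+\Crm_\pi .
\]
Under \Cref{ass:linear_setup_assumption} we have $\Sigmabf=\beta\Irm_d$, so $\Sigmabf_\pi=(\alpha_\pi^2\beta+\sigma_\pi^2)\Irm_d=:c_\pi\Irm_d$ is scalar. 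Also $\Sigmabfhat_\varpi=\alpha_\varpi^2\Sigmabfhat+\sigma_\varpi^2\Irm_d$ commutes with $\Sigmabf_\pi$. Everything therefore reduces to spectral functionals of the sample covariance $\Sigmabfhat=\tfrac1n\sum_i Z_iZ_i^T$, and $W$ plays no role once $p>d$. This is why the limit does not depend on $\psi_p$ beyond the condition $\psi_p>\psi_d$.

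\textbf{Empirical loss.} Since $d>n$, $\Sigmabfhat$ has rank $n$ almost surely, so it has $d-n$ zero eigenvalues. Hence
\[
\tfrac1d\trace(\Sigmabfhat_\varpi^{-1})=\tfrac{d-n}{d}\,\sigma_\varpi^{-2}+\tfrac1d\sum_{i\le n}\bigl(\alpha_\varpi^2\lambda_i+\sigma_\varpi^2\bigr)^{-1}.
\]
The nonzero eigenvalues of $\Sigmabfhat$ coincide with those of the $n\times n$ Gram matrix $\tfrac1n Z^TZ$. By Marchenko--Pastur with ratio $d/n\to\psi_d$, these eigenvalues are of order $\beta\psi_d$; almost surely the smallest one is $\ge \beta(\sqrt{\psi_d}-1)^2$ asymptotically. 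So the second sum is at most $\tfrac nd\sigma_\varpi^{-2}=\landau{\psi_d^{-1}}$, and $\tfrac{d-n}{d}\to 1-\psi_d^{-1}$. This gives $\tfrac1d\LDSMn\to\sigma_\pi^{-2}-\sigma_\varpi^{-2}+\landau{\psi_d^{-1}}$. The nonnegativity $\sigma_\pi^{-2}\ge\sigma_\varpi^{-2}$ comes from the definition of $\pi$: with $Z=\int\sigma_t^{-2}\der\varpi$, one has $\sigma_\pi^2=1/Z$, and Jensen ($1/x$ convex) gives $Z\ge 1/\sigma_\varpi^2$.

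\textbf{Population loss.} Write $\tfrac1d\frobnormLigne{\cdot}^2=\tfrac{c_\pi}{d}\sum_{j=1}^d\bigl(\mu_j^{-1}-c_\pi^{-1}\bigr)^2$, where $\mu_j=\alpha_\varpi^2\lambda_j+\sigma_\varpi^2$. Splitting again into kernel and range gives
\[
\tfrac{d-n}{d}\,c_\pi\bigl(\sigma_\varpi^{-2}-c_\pi^{-1}\bigr)^2+\tfrac1d\sum_{i\le n}(\cdots).
\]
The second term is $\landau{\psi_d^{-1}}$ because each summand is bounded uniformly, using the same almost-sure lower bound on the nonzero eigenvalues. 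For $\tfrac1d\Crm_\pi$, I would use its explicit expression from \Cref{eq:time-integrated-ESM_constant}. It should be a variance-type term $\int\|\Sigmabf_t^{-1}\Sigmabf_t^{1/2}-\ldots\|^2\der\pi$ in $t$ that, for isotropic $\Sigmabf$, is exactly $d$ times a deterministic scalar. Then $\Lrm_\pi:=c_\pi(\sigma_\varpi^{-2}-c_\pi^{-1})^2+\Crm_\pi/d$, which is nonnegative.

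\textbf{Main obstacle.} The hard part is controlling the nonzero-eigenvalue contributions uniformly, so that the error is genuinely $\landau{\psi_d^{-1}}$ with constants depending only on $\alpha_\varpi,\sigma_\varpi,\beta$. The ESM summand $(\mu_i^{-1}-c_\pi^{-1})^2$ is bounded by $\max(\sigma_\varpi^{-2},c_\pi^{-1})^2$ regardless of the spectrum, so a crude bound suffices here and Bai--Yin is not even needed. I would keep the exact Stieltjes-transform limits only to identify $\Lrm_\pi$ precisely if the stated constant in \Cref{eq:Lrm_pi_constant} includes those terms.
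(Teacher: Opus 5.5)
Your argument is correct and uses the same two ingredients as the paper: the closed forms of \Cref{lemma:time-smoothness-non-asymptitc-lemma}, and the split of the spectrum of $\Sigmabfhat$ into a kernel part and a range part. The difference is the order of operations.

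\begin{itemize}
\item The paper first takes the limit with the Marchenko--Pastur theorem, obtaining exact learning curves as integrals against $\mu_{\mathrm{MP}}^{(\psi_d)}$. Only then does it split that law into its atom $(1-\psi_d^{-1})\updelta_0$ and a bulk of mass $\psi_d^{-1}$.
\item You split at finite $n$ using $\rank(\Sigmabfhat)=n$, and bound the $n$ range terms uniformly.
\end{itemize}

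Your route gives non-asymptotic two-sided bounds without any random matrix theory. For instance, $\Sigmabfhat_\varpi\succeq\sigma_\varpi^2\Irm_d$ gives $\tfrac1d\LDSMn(s_{W,\Ahat},\varpi)\ge\sigma_\pi^{-2}-\sigma_\varpi^{-2}$ at every finite size with $p>d$. That is already the substantive point that the empirical risk cannot vanish.

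What your route does not give by itself is existence of the limit. The crude bounds only trap every limit point in a window of width $\landau{\psi_d^{-1}}$. To write ``$\to$'' you still need Marchenko--Pastur weak convergence, applied to the bounded continuous functions $\lambda\mapsto(\alpha_\varpi^2\beta\lambda+\sigma_\varpi^2)^{-1}$ and $\lambda\mapsto\bigl((\alpha_\varpi^2\beta\lambda+\sigma_\varpi^2)^{-1}-c_\pi^{-1}\bigr)^2$. That is the paper's step, and it is also what produces the exact limiting curves the paper plots.

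Two minor remarks:
\begin{itemize}
\item The almost-sure lower bound on the smallest nonzero eigenvalue is a Bai--Yin-type fact, not a consequence of weak convergence of the spectral measure. You do not need it even for the DSM term, since $(\alpha_\varpi^2\lambda+\sigma_\varpi^2)^{-1}\le\sigma_\varpi^{-2}$ for all $\lambda\ge0$.
\item Your guess for $\Crm_\pi$ is right. From \eqref{eq:time-integrated-ESM_constant},
\[
\Crm_\pi=\trace\Bigl(\int\Sigmabf_t^{-1}\der\pi(t)-\Sigmabf_\pi^{-1}\Bigr)=\int\frobnormLigne{(\Sigmabf_t^{-1}-\Sigmabf_\pi^{-1})\Sigmabf_t^{1/2}}^2\der\pi(t)\ge0 .
\]
This gives nonnegativity directly, where the paper uses Jensen. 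Your $\Lrm_\pi=c_\pi(\sigma_\varpi^{-2}-c_\pi^{-1})^2+\Crm_\pi/d$ coincides with \Cref{eq:Lrm_pi_constant}, so no bulk contributions need to be identified.
\end{itemize}
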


\Cref{prop:time-smoothness-asymptotics} shows that time-smoothness effectively prevents the explosion of the population loss observed in \Cref{prop:fixed-time-asymptotic-linear-results}. It also shows that the limiting empirical risk cannot be arbitrarily small as soon as $\varpi$ is not a Dirac distribution (by similar calculations, we reach the same conclusion for $\LESMn(s_{W,\Ahat}, \varpi)$).
This contrasts with the fixed noise scale case of \Cref{prop:fixed-time-asymptotic-linear-results}, even when $\psi_d$ is large, suggesting that the time-smoothness prevents overfitting. Moreover, while in \Cref{prop:fixed-time-asymptotic-linear-results} the asymptotic ESM loss could explode as $t \to 0^+$, in the present setting the ESM loss stays bounded, showing that the time-smoothness acts as a form of implicit regularization, as hinted in \Cref{rk:ridge-equivalence}.

In the proof of \Cref{prop:time-smoothness-asymptotics} (see \Cref{sec:proofs-time-smoothness}), we derive exact asymptotics for the population and empirical losses, which we report in the right figure in \Cref{fig:fixed-time-linear-psi_p} (green curves). Using the asymptotics of the previous section, we compare it against an overfit linear score network that has independent weights at each time, corresponding to the completely non-time-smooth case (purple curves).
We observe that time-smoothness increases the empirical risk while decreasing the population loss, showing that it is indeed acting as an implicit regularization mechanism.

We verify this in a practical image generation setting similar to \Cref{fig:intro_figure_experiment-unet}, but considering a time-independent DDPM U-Net model trained across a range $[t, t+r]$ for varying $r$. We evaluate train and test error at $t$ to see how changing $r$ affects the error. We present the results in \Cref{fig:no_time_exp} where we find that increasing $r$ increases the train error at $t$ and therefore is indeed preventing overfitting. 
Interestingly, the test error first decreases and then increases for large $r$, suggesting a tradeoff on $r$.

\subsection{Implicit regularization via early stopping}
\label{sec:early-stopping}

Next, we study the impact of early stopping during the training of the score network.
To this end, we fix a time $t \in [0,T]$ and study the dynamics of the parameter $A$ under gradient flow,
\begin{align}
    \label{eq:gradient-flow-dynamics}
    \frac{\der}{\der \tau}A(\tau) = -p^{-1}\nabla_A (\sigma_t^2 \LDSMn(s_{W,A(\tau)}, t)) \eqsp, \quad \tau \geq 0 \eqsp,
\end{align}
where $W \in \R^{p \times d}$ is as in \Cref{sec:fine-grainded-linear-analysis}, and the $p^{-1}$ factor is ensuring a finite limit as $p \to \infty$ \cite{bonnaire_why_2025-1}.
Without loss of generality, we assume that $A(0) = 0$. 
We show in \Cref{sec:proof-early-stopping} that in the limit $p \to \infty$ with $(n,d)$ fixed, the matrix $A(\tau) W$ converges to a finite limit following the gradient flow of a linear model (as shown in \Cref{sec:proof-early-stopping}). 
Moreover, when $\tau \to \infty$, $A(\tau) W \to - \Sigmabfhat^{-1}_t$, which leads to the same score matching bounds as in \Cref{prop:fixed-time-asymptotic-linear-results} for $p > d$. In the next proposition, we show that for small enough values of $\tau$, we avoid the explosion of the test loss in \Cref{prop:fixed-time-asymptotic-linear-results}.

\begin{proposition}
    \label{prop:early-stopping-proposition}
    Suppose that \Cref{ass:linear_setup_assumption} holds and define $\ecal_t(\tau,n,d,p) := \LESM(s_{W,A(\tau)}, t)$. Then, $\ecal_t(\tau,n,d,p)$ has an almost-sure limit $ \overline{\ecal}_t(\tau, n, d) := \lim_{p\to \infty} \LESM(s_{W,A(\tau)}, t)$.
     Moreover, in the limit as $n,d \to \infty$ with $ \lim(d / n) = \psi_d > 1$, we have, almost-surely
    \begin{align*}
        \frac1{d}\overline{\ecal}_t(\tau, n, d) \longrightarrow \left( 1 - \frac1{\psi_d} \right) \frac{\left((\alpha_t^2 \beta + \sigma_t^2) \sigma_t^{-2} (1 - e^{-2\tau \sigma_t^4}) - 1 \right)^2 }{\alpha_t^2 \beta + \sigma_t^2}  + \landau{\frac1{\psi_d}} \eqsp,
    \end{align*}
    In particular, if $\tau = \landau{\sigma_t^{-2}}$, then the limit above is bounded as $t \to 0^+$.
\end{proposition}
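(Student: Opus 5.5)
The plan is to reduce the gradient flow \eqref{eq:gradient-flow-dynamics} to a closed linear ODE for the product $M(\tau) := A(\tau) W \in \R^{d\times d}$, pass to the limit $p\to\infty$, solve that ODE explicitly, and then plug the solution into the ESM formula of \Cref{lemma:fixed-time-non-asymptotic-lemma}. That formula holds for any $AW$, namely $\LESM(s_{W,A},t) = \frobnormLigne{(AW + \Sigmabf_t^{-1})\Sigmabf_t^{1/2}}^2$.

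\textbf{Step 1: the gradient.} Since $\nabla\log \pto(x\vert z) = -(x-\alpha_t z)/\sigma_t^2$, the empirical objective is
\begin{align*}
\sigma_t^2 \LDSMn(s_{W,A},t) = \sigma_t^2\, \frac1n\sum_{i=1}^n \E\big[\normofLigne{AW\xt + (\xt - \alpha_t Z_i)/\sigma_t^2}^2 \,\big|\, Z_i\big].
\end{align*}
Here $\E[\xt\xt^T]$ averaged over the data equals $\Sigmabfhat_t$, and $\E[(\xt-\alpha_t Z)\xt^T] = \sigma_t^2 \Irm_d$. Differentiating gives $\nabla_A = 2\sigma_t^2 (AW\Sigmabfhat_t + \Irm_d) W^T$. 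Multiplying the flow on the right by $W$ then yields
\begin{align*}
\frac{\der}{\der\tau} M(\tau) = -2\sigma_t^2\big(M(\tau)\Sigmabfhat_t + \Irm_d\big)\,\frac{W^TW}{p}, \qquad M(0) = 0.
\end{align*}

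\textbf{Step 2: the limit $p\to\infty$.} By the strong law of large numbers, $W^TW/p \to \Irm_d$ almost surely when $d$ is fixed. This is a linear ODE with coefficients converging almost surely, so Grönwall's lemma on compact $\tau$-intervals shows that $M(\tau)$ converges to the solution of $\dot M = -2\sigma_t^2(M\Sigmabfhat_t + \Irm_d)$. That solution is
\begin{align*}
\overline{M}(\tau) = -\Sigmabfhat_t^{-1}\big(\Irm_d - e^{-2\sigma_t^2\tau\Sigmabfhat_t}\big).
\end{align*}
By continuity of the ESM formula, this gives the almost-sure limit $\overline{\ecal}_t(\tau,n,d) = \frobnormLigne{(\overline{M}(\tau) + \Sigmabf_t^{-1})\Sigmabf_t^{1/2}}^2$.

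\textbf{Step 3: the limit $n,d\to\infty$.} Under \Cref{ass:linear_setup_assumption}, $\Sigmabf_t = c_t \Irm_d$ with $c_t := \alpha_t^2\beta+\sigma_t^2$. Moreover $\overline{M}$ is a spectral function of $\Sigmabfhat$, so the loss diagonalizes in the eigenbasis of $\Sigmabfhat$:
\begin{align*}
\overline{\ecal}_t = \sum_{j=1}^d c_t\Big(c_t^{-1} - \lambda_{j,t}^{-1}\big(1-e^{-2\sigma_t^2\tau\lambda_{j,t}}\big)\Big)^2, \qquad \lambda_{j,t} := \alpha_t^2\lambda_j + \sigma_t^2,
\end{align*}
where $\lambda_j$ are the eigenvalues of $\Sigmabfhat$. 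Since $d>n$, exactly $d-n$ of the $\lambda_j$ vanish almost surely. For those directions $\lambda_{j,t} = \sigma_t^2$, and each term equals $\big(c_t\sigma_t^{-2}(1-e^{-2\tau\sigma_t^4}) - 1\big)^2/c_t$. Dividing by $d$ and using $(d-n)/d\to 1-1/\psi_d$ produces the main term.

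The remaining $n$ terms come from the nonzero eigenvalues, which follow Marchenko--Pastur with ratio $\psi_d$ after scaling. Each such term is bounded by $2c_t\big(c_t^{-2} + \lambda_{j,t}^{-2}\big) \le 2/c_t + 2c_t/\sigma_t^4$, a constant depending only on $t$ and $\beta$. The total contribution is therefore at most $(n/d)\cdot\Crm_t \to \Crm_t/\psi_d$, which is the $\landau{1/\psi_d}$ term. I expect this to be the only delicate point: one may want the $\landau{\cdot}$ constant to be uniform in $t$. To get that, I would use the fact that the nonzero $\lambda_j$ are of order $\beta\psi_d$, bounded away from $0$ almost surely by the lower edge of the Marchenko--Pastur law. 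This makes $\lambda_{j,t}^{-1} \lesssim (\alpha_t^2\beta)^{-1}$ instead of $\sigma_t^{-2}$.

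\textbf{Step 4: boundedness for small $t$.} If $\tau \le C\sigma_t^{-2}$, then $1-e^{-2\tau\sigma_t^4} \le 2\tau\sigma_t^4 \le 2C\sigma_t^2$. Hence $c_t\sigma_t^{-2}(1-e^{-2\tau\sigma_t^4}) \le 2Cc_t$, which stays bounded as $t\to0^+$ because $c_t\to\beta$. The main term therefore stays bounded, in contrast with the $\sigma_t^{-4}$ blow-up obtained in \Cref{prop:fixed-time-asymptotic-linear-results} at $\tau=\infty$.
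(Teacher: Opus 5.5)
Your proposal is correct and follows the paper's proof. You reduce the flow to the product $A(\tau)W$, use the strong law of large numbers on $W^TW/p$ to get $\lim_{p\to\infty} A(\tau)W = -\Sigmabfhat_t^{-1}(\Irm_d - e^{-2\tau\sigma_t^2\Sigmabfhat_t})$ (you via Gr\"onwall, the paper via the explicit finite-$p$ solution), substitute this into the Frobenius form of the ESM loss, and read off the main term from the $d-n$ null directions of $\Sigmabfhat$, which form the Marchenko--Pastur atom of mass $1-1/\psi_d$.

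One point to tighten: your crude bound on the $n$ nonzero-eigenvalue terms controls the remainder, but it does not show that $\frac1d \overline{\ecal}_t$ actually converges. For that, apply Marchenko--Pastur (as the paper does) to the bounded continuous function of $\lambda$ in your diagonalized sum. This also settles your uniformity-in-$t$ concern, because in the limit the bulk is supported on $[(\sqrt{\psi_d}-1)^2,(\sqrt{\psi_d}+1)^2]$. Bounding the smallest nonzero eigenvalue at finite $n$ would instead need a separate extreme-eigenvalue result (Bai--Yin), which weak convergence of the spectral measure does not give.
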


This result shows that early stopping prevents the explosion of the test loss as $t \to 0$ given that the early stopping $\tau$ is of order $\sigma_t^{-2}$. This suggests that, despite not being benign, overfitting and memorization appear later in training, allowing early stopping to effectively prevent it, which aligns with recent results \cite{bonnaire_why_2025-1,merger2025generalization}.
The advantages of early stopping can also be observed in high-dimensional experiments: in \Cref{fig:train_unet}, we report the evolution of the test and train errors for a U-Net across iterations, observing that there is a unique optimal stopping time where the ESM loss is controlled as the number of parameters grows.

\begin{figure*}
    \centering
    \begin{minipage}{0.48\textwidth}
        \centering
        \includegraphics[width=.92\linewidth]{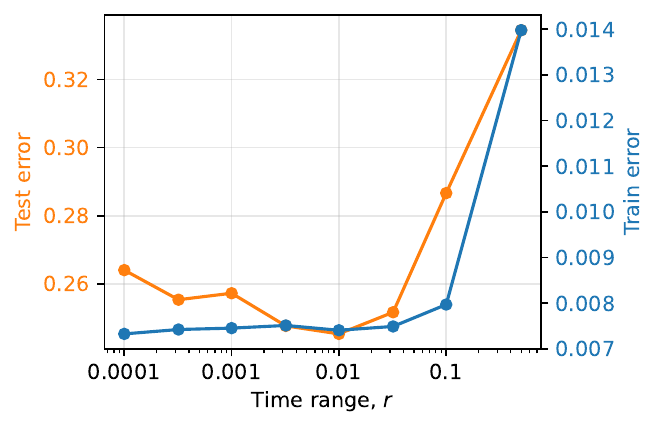}
        \caption{Time-independent DDPM model evaluated at $t=0.1$. We train it on the range $[t, t+r]$ and observe how increasing $r$ can benefit generalization at time $t$. Details in Appendix \ref{sec:unet_experiments}.}
        \label{fig:no_time_exp}
    \end{minipage}
    \hfill
    \begin{minipage}{0.48\textwidth}
        \centering
        \includegraphics[width=.9\linewidth]{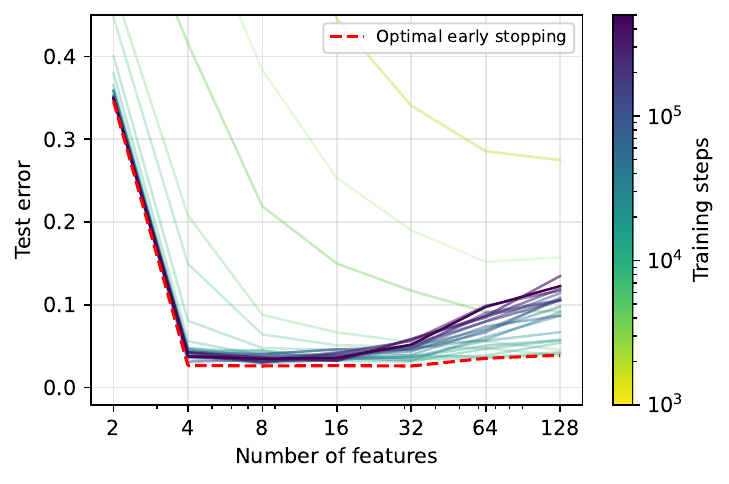}
        \caption{The same setting as Figure \ref{fig:intro_figure_experiment-unet} but we observe how the test error curve changes with training steps. We find that early stopping produces a test error that does not grow with NF.}
        \label{fig:train_unet}
    \end{minipage}
    \label{fig:early_stopping_figure}
    \vskip -0.4cm
\end{figure*}

\section{Related works \& Conclusion}
\label{sec:related_works}

\textbf{Random features analysis \& linear models.}
While we focus on quantifying overfitting, we also derive exact asymptotics of the losses in the proof of \Cref{prop:fixed-time-asymptotic-linear-results}, complementing prior analyses \cite{george_asymptotic_2026,george_denoising_2025,merger2025generalization}.
As discussed in \Cref{sec:fine-grainded-linear-analysis}, the two-layer linear RFNNs analysis recovers linear diffusion models as a particular case, extending the regularized linear model analysis of \cite{merger2025generalization}. In particular, we recover the conclusion that overfitting is driven by the null space. 
Recent work also studied the learning dynamics of diffusion models \cite{Biroli2024DynamicalRO,merger2025generalization}, including RFNN settings \cite{bonnaire_why_2025-1,li_generalization_2024}.  \Cref{sec:early-stopping} supplements these approaches by quantifying the role of early stopping in mitigating overfitting in linear networks.
Overall, we provide a fine-grained characterization of overfitting in linear RFNNs and, crucially, incorporate the \textbf{time-smoothness} of the models with time-integrated losses (\Cref{sec:time-smoothness}), which is beyond the scope of prior work.

\textbf{Diffusion model generalization.} In addition to the aforementioned works, there has been a lot of recent interest in the generalization properties of diffusion models \cite{kadkhodaie2024generalization,li_generalization_2024,chakraborty2026generalization}.
In particular, minimax statistical rates have been obtained \cite{oko_diffusion_2023,zhang_minimax_optimality_2024,azangulov2024convergencediffusionmodelsmanifold},
the relationship with data geometry has been explored \cite{farghly2026diffusion, Pidstrigach2022-jz, He2026-lp}
and, more recently, algorithm-dependent generalization bounds were proposed for diffusion models \cite{dupuis2025algorithmdatadependentgeneralizationbounds,farghly2025implicitregularisationdiffusionmodels}. Our work offers a new but complementary perspective, by suggesting that the mechanisms causing generalization in diffusion models are fundamentally different from those in classical settings.

\textbf{Statistical lower bounds.} Recent work analyzed memorization through the development of lower bounds \cite{buchanan2025edgememorizationdiffusionmodels}.
In particular, \Cref{lemma:fisher-lemma} can be compared with \cite{ye2026provable}, who showed that $\fisher{\ptn}{\pt} = \LDSMn (\nabla \log \pt, t) - \LDSMn (\nabla \log \ptn, t)$.
This is of a different nature than the left-hand side of \Cref{eq:fisher-lower-bound}, which is the sum of the empirical and population ESM losses for a generic score.
That being said, the lower bounds on $\fisher{\ptn}{\pt}$ derived in \cite{ye2026provable} for mixture data could easily be combined with \Cref{lemma:fisher-lemma}. Our results in \Cref{sec:quantitative-lower-bound} differ in that they consider a very generic setting and exploit the intrinsic dimension of the data in statistical guarantees. 
Note that \eqref{eq:fisher-lower-bound} does not require any restrictive assumptions on $\nu$, such as the log-Sobolev inequality used in \cite{koehler2023statistical}.

\paragraph{Conclusion.} In this work, we showed that the phenomenon of benign overfitting, known to occur in many deep learning settings, does not occur in diffusion models. We first derive a fundamental limitation of score matching, showing that the empirical and population losses cannot be small without an exceedingly large training set. Then, through a combination of high-dimensional experiments and a fine-grained analysis in a simplified setting, we showed that the double descent phenomenon of deep learning does not occur and that the test error follows a U-shaped curve with respect to model complexity. Finally, by incorporating the time-smoothness of the score and early-stopped gradient dynamics into our experiments and linear network analysis, we demonstrated that these key components of score estimation can help prevent overfitting.

\section*{Acknowledgements}

U.S. is partially supported by the French government under the management of Agence Nationale de la Recherche as part of the ``Investissements d'avenir'' program, reference ANR-19-P3IA-0001 (PRAIRIE 3IA Institute). T.F., B.D., and U.S. are  supported by the European Research Council Starting Grant DYNASTY – 101039676.
A.D. is supported by the France 2030 program with the reference ANR-25-PEIA-0001 (THEOREM project). A.D. is funded by the European Union (ERC-2022-SYG-OCEAN-101071601). Views and opinions expressed are however those of the author(s) only and do not necessarily reflect those of the European Union or the European Research Council Executive Agency. Neither the European Union nor the granting authority can be held responsible for them. A.D. is supported by Hi! Paris and Agence Nationale de la Recherche (Grant 11-LABX-0047).
This work received government funding administered by the National Research Agency (ANR) under the France 2030 program "Hi! PARIS", grant number ANR-23-IACL-0005.
The authors would also like to thank George Deligiannidis, Giovanni Conforti and Francis Bach for inspiring discussions. This paper was partially motivated by a mid-seminar argument with Francis and the SIERRA team initiated by a statement similar to the title of this work.

\bibliography{thesis.bib}
\bibliographystyle{icml2026_fogen}

\newpage

\appendix
\onecolumn
\section{Additional Technical Background}
\label{sec:additional-technical-background}

\subsection{Intrinsic dimensions}
\label{sec:intrinsic-dimensions-background}

Our main results in \Cref{sec:fundamental-limitations} are expressed in terms of the \emph{Rényi} dimension of a Borel measure $\mu$ on $\Rd$. This notion was first developed by \cite{procaccia1983} and then rigorously treated in \cite{pesin_dimension_1997,pesin_rigorous_1993}. It is a standard notion in the theory of chaotic dynamical systems.
We provide below the definition.

\begin{definition}[Rényi dimension of a measure]
    \label{def:correlation-dimension}
    Let $\mu$ be a Borel probability measure on $\Rd$. The upper Rényi dimension of $\mu$ is defined by
    \begin{align*}
        \overline{\gamma} (\mu) := \limsup_{\delta \to 0} \left( \frac1{\log \delta} \log \left( \int \mu(B(x,\delta)) \der \mu(x) \right) \right) \eqsp.
    \end{align*}
    Similarly, the lower Rényi dimension of $\mu$ is defined by
    \begin{align*}
        \underline{\gamma} (\mu) := \liminf_{\delta \to 0} \left( \frac1{\log \delta} \log \left( \int \mu(B(x,\delta)) \der \mu(x) \right) \right) \eqsp.
    \end{align*}
\end{definition}

\begin{remark}
    The definition above is actually a particular case of the larger family of Rényi dimension, where it actually corresponds to the Rényi dimension of order $2$.
    Note that \Cref{def:correlation-dimension} is also sometimes called the \emph{correlation dimension} in the literature \cite{pesin_rigorous_1993}.
\end{remark}

\begin{remark}
    In particular, if a measure is supported on a smooth manifold and is Ahlfors regular (see \cite{mattila_dimension_2000,falconer2014fractal}) on this manifold, then the Rényi dimension of \Cref{def:correlation-dimension} is equal to the manifold dimension.
\end{remark}

\subsection{Random covariance matrices}
\label{sec:random_matrices_background}

We recall here some basic random matrix theory results related to random covariance matrices and the Marchenko-Pastur theorem. We invite the reader to consult \cite{couillet_random_2022} for additional details.

Given a symmetric matrix $\Sigmabf \in \R^{d \times d}$, we denote its spectrum by $\mathrm{Spec}(\Sigmabf)$ (where the eigenvalues are counted with their multiplicity). The spectral measure of $\Sigma$ is defined by 
\begin{align}
    \label{eq:spectral-measure}
    \widehat{\mu}_{\Sigmabf} := \frac1{d} \sum_{\lambda \in \mathrm{Spec}(\Sigmabf)} \updelta_\lambda \eqsp.
\end{align}
We recall below the celebrated Marchenko-Pastur theorem \cite{marchenko_pastur_1967}. To this end, we first define the Marchenko-Pastur distribution below.

\begin{definition}[Marchenko-Pastur distribution]
    \label{def:mp_dist}
    Let $\psi > 0$. The Marchenko-Pastur distribution with shape parameter $\psi$, denoted here $\mu_{\mathrm{MP}}^{(\psi)}$, is the probability distribution on $\R_+$ defined by
    \begin{align*}
        \mu_{\mathrm{MP}}^{(\psi)}(\der x) := \left( 1 - \frac1{\psi} \right)_+ \updelta_0 + \frac{\sqrt{(\psi^+ - x)(x - \psi^-)}}{2 \pi \psi x} \mathds{1}_{[\psi^-, \psi^+]}(x) \der x \eqsp,
    \end{align*}
    where $y_+ := \max(0,y)$, $\psi^- = |1 - \sqrt{\psi}|^2$, and $\psi^+ := (1 + \sqrt{\psi})^2$.
\end{definition}

The Marchenko-Pastur's theorem is given below.

\begin{theorem}[Marchenko-Pastur's theorem]
    \label{thm:mp_thm}
    Let $X$ be random vector $\Rd$ with i.i.d. components with zero mean, unit variance, and uniformly bounded moments\footnote{We refer to Theorem 2.4 of \cite{couillet_random_2022} for a discussion on the different conditions required by the Marchenko-Pastur theorem.} of order $4 + \epsilon$ for some $\epsilon > 0$. Define the empirical covariance matrix,
    \begin{align*}
        \Sigmabfhat := \frac1{n} \sum_{i=1}^n X_i X_i^T\eqsp,
    \end{align*}
    with $X_1, \dots, X_n$ i.i.d. copies of $X$. Consider that $d = d_n$ with $\lim_{n \to \infty} (d_n / n) = \psi > 0$. Then, with probability one, the empirical measure $\widehat{\mu}_{\Sigmabfhat}$ converges weakly to the Marchenko-Pastur distribution with shape parameter $\psi$. 
\end{theorem}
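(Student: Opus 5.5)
We prove \Cref{thm:mp_thm} with the Stieltjes transform method. For $z \in \mathbb{C}^+ := \{z : \operatorname{Im} z > 0\}$, set $Q(z) := (\Sigmabfhat - z\Irm_d)^{-1}$ and $m_n(z) := d^{-1}\trace\, Q(z) = \int (x-z)^{-1}\, \der\widehat{\mu}_{\Sigmabfhat}(x)$. Weak convergence of probability measures on $\R_+$ follows from two facts: tightness, and pointwise convergence of Stieltjes transforms on a countable set of $z$ with an accumulation point in $\mathbb{C}^+$. Tightness is immediate, since $\int x \, \der\widehat{\mu}_{\Sigmabfhat} = (nd)^{-1}\sum_i \normofLigne{X_i}^2 \to 1$ almost surely by the strong law of large numbers. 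It therefore suffices to prove that, for each fixed $z \in \mathbb{C}^+$, $m_n(z) \to m(z)$ almost surely, where $m$ is the Stieltjes transform of $\mu_{\mathrm{MP}}^{(\psi)}$. A countable union of null sets then handles a dense countable set of $z$.

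\emph{Step 1: truncation and concentration.} We first truncate the entries at level $\eta_n\sqrt{n}$ with $\eta_n \to 0$ slowly, and then recentre and rescale them. Using the $4+\epsilon$ moments, the truncated matrix differs from $\Sigmabfhat$ by a perturbation of rank $o(d)$ almost surely, plus a part that is small in normalized Frobenius norm. The rank inequality $\|F^{A}-F^{B}\|_\infty \le \rank(A-B)/d$ for spectral distribution functions, together with a Hoffman--Wielandt type bound, shows that both limiting spectral measures coincide. Next, $m_n(z)$ concentrates around its mean. Replacing one sample $X_i$ is a rank-one change, which moves $\trace\, Q$ by at most $1/\operatorname{Im} z$. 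A martingale difference decomposition with Burkholder's inequality then gives $\E|m_n - \E m_n|^4 = \landau{d^{-2}}$, and Borel--Cantelli yields $m_n - \E m_n \to 0$ almost surely.

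\emph{Step 2: leave-one-out self-consistent equation.} Write $Q_{-i}$ for the resolvent with $X_i$ removed. By the Sherman--Morrison formula, $X_i^T Q X_i = X_i^T Q_{-i} X_i / (1 + n^{-1}X_i^T Q_{-i} X_i)$. Taking the trace of the identity $\Irm_d + zQ = Q\Sigmabfhat$ gives
\begin{equation*}
d\,(1 + z\, m_n) = \sum_{i=1}^n \frac{q_i}{1+q_i}\eqsp, \qquad q_i := n^{-1}X_i^T Q_{-i} X_i \eqsp.
\end{equation*}
Since $X_i$ is independent of $Q_{-i}$, the trace lemma gives $\E|q_i - n^{-1}\trace\, Q_{-i}|^2 \to 0$ uniformly in $i$, using the truncated entries and $\normofLigne{Q_{-i}} \le 1/\operatorname{Im} z$. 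Moreover, $n^{-1}\trace\, Q_{-i} = (d/n)\,m_n + \landau{n^{-1}}$. Because $|1+q_i|$ is bounded below (for instance via $\operatorname{Im}(z\, q_i) \ge 0$), we may replace each $q_i$ by $\psi\, \E m_n$ up to an $o(1)$ error. Dividing by $d$ and using $n/d \to 1/\psi$ gives
\begin{equation*}
(1+z\,\E m_n)(1+\psi\,\E m_n) = \E m_n + o(1)\eqsp,
\end{equation*}
that is, $\psi z\, m^2 + (z+\psi-1)\,m + 1 = o(1)$ for $m = \E m_n$.

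\emph{Step 3: identification of the limit.} The quadratic equation $\psi z m^2 + (z+\psi-1)m + 1 = 0$ has exactly one root $m(z)$ with $\operatorname{Im}(m) > 0$ and $\operatorname{Im}(z m) \ge 0$. A direct computation, by Stieltjes inversion of $m(z)$, shows that this root is the Stieltjes transform of $\mu_{\mathrm{MP}}^{(\psi)}$ from \Cref{def:mp_dist}; when $\psi > 1$ this includes the atom of mass $(1-1/\psi)$ at $0$. Since the sequence $\E m_n(z)$ is bounded, every subsequential limit satisfies the equation and lies in the correct half-plane, so $\E m_n(z) \to m(z)$. Combined with Step 1, this gives $m_n(z) \to m(z)$ almost surely, which concludes the proof.

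The main obstacle is Step 2 under only $4+\epsilon$ moments: the quadratic forms $q_i$ must concentrate around $n^{-1}\trace\, Q_{-i}$ uniformly over $i$, and the truncation in Step 1 must be chosen so that it is simultaneously harmless for the spectrum and strong enough for this trace lemma. If this becomes delicate, we would first settle the case of bounded entries and then pass to the general case through the rank and Frobenius perturbation bounds of Step 1.
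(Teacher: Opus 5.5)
Your proof is correct, and it is not really a different route. The paper gives no proof of \Cref{thm:mp_thm}: it imports it from Theorem~2.4 of \cite{couillet_random_2022} (see also \cite{marchenko_pastur_1967}) and uses it as a black box in \Cref{sec:proofs-fine-grainded-linear-analysis}. The standard proof in that literature is the resolvent scheme you describe: Sherman--Morrison leave-one-out, a trace lemma, the quadratic self-consistent equation, selection of the root via $\operatorname{Im}(zm)\ge 0$, and martingale concentration. What your write-up adds over the paper is self-containedness, modulo classical lemmas.

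A few points you can tighten:
\begin{itemize}
\item Under the stated fourth-moment bound, the truncation in Step~1 and the ``main obstacle'' you flag are unnecessary. You only need the self-consistent equation for $\E m_n$. The $L^2$ trace lemma $\E[|x^T A x - \trace A|^2] \le 2\,\E[x_{11}^4]\,\trace(AA^*)$, applied conditionally on $Q_{-i}$ with $\trace(Q_{-i}Q_{-i}^*) \le d/(\operatorname{Im} z)^2$, already gives $\E|q_i - n^{-1}\trace Q_{-i}|^2 = \landau{1/n}$ uniformly in $i$. The martingale step is moment-free. Truncation is only needed for the finite-variance version of the theorem.
\item Replacing a sample is a rank-two change, not rank-one, so the bound becomes $2/\operatorname{Im} z$. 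This is harmless.
\item The ``strong law'' for $(nd)^{-1}\sum_{i,j} X_{ij}^2$ is over a triangular array. Justify it with the variance bound $\landau{1/(nd)}$ and Borel--Cantelli. Alternatively, drop tightness altogether: vague convergence to a probability measure already implies weak convergence.
\item Subsequential limits of $\E m_n(z)$ only satisfy the closed condition $\operatorname{Im}(zm) \ge 0$, but this is enough. Any root of $\psi z m^2 + (z+\psi-1)m + 1 = 0$ satisfies $\operatorname{Im}(m)/|m|^2 = \operatorname{Im}(z) + \psi \operatorname{Im}(zm)$. Hence $\operatorname{Im}(m) > 0$ follows automatically, and your uniqueness criterion applies.
\end{itemize}
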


\section{Omitted Proofs}
\label{sec:omitted-proofs}

In this section, we present the omitted proofs of the results from \Cref{sec:fundamental-limitations,sec:fine-grainded-linear-analysis,sec:preventing-overfitting}.

\subsection{Omitted proofs of \Cref{sec:fundamental-limitations}}
\label{sec:proof-fundamental-limitations}

\subsubsection{Proof of \Cref{lemma:fisher-lemma}.}\label{proof:fisher-lemma}

We present below the proof of \Cref{lemma:fisher-lemma}.

\begin{proof}
    Let $S := (Z_1,\dots,Z_n) \sim \nu^{\otimes n}$ be a dataset sampled from the data distribution.
    Let $(\xtn)_{t \geq 0}$ the Ornstein-Uhlenbeck process $\der \xtn = -\kappa \xtn \der t + \sqrt{2} \der \Bm_t$ initialized at the empirical data distribution $\xfs_0 \sim \frac1{n} \sum_{i=1}^n \updelta_{Z_i}$.
    We denote by $\ptn$ the probability density of $\xtn$, conditioned on $S := (Z_1,\dots,Z_n)$.
    Let $s : [0,T] \times \Rd \to \Rd$ be an arbitrary measurable score function.
    By definition of the Fisher information and the triangle inequality (note that the expectation is over both the noise and the dataset), we have, for any $t > 0$,
    \begin{align*}
        \Eof{\fisher{\ptn}{\pt}} &= \Eof{\normof{\nabla \log \ptn(\xtn) - \nabla \log \pt(\xtn)}^2} \\
         &\leq 2\Eof{\normof{\nabla \log \ptn(\xtn) - s(t, \xtn)}^2} + 2 \Eof{\normof{s(t, \xtn) - \nabla \log \pt(\xtn)}^2}\\
         &= 2\Eof{\normof{\nabla \log \ptn(\xtn) - s(t, \xtn)}^2} + 2 \Eof{\normof{s(t, \xt) - \nabla \log \pt(\xt)}^2}\eqsp,
    \end{align*}
    where the last equality follows as $\law(\xt)$ is the marginal distribution of $\xtn$ under $S \sim \nu^{\otimes n}$.
    We immediately deduce that for any $t > 0$, we have
    \begin{align*}
        \Eof{\fisher{\ptn}{\pt}} \leq 2 \Eof{\LESM (s, t) + \LESMn (s, t)}\eqsp.
    \end{align*}
    For the second part of the statement, by the triangle inequality, we have that 
    \begin{align*}
        \Eof{\fisher{\ptn}{\pt}} = \intrd \Eof{\normof{\nabla \log \frac{\ptn}{\pt}(x)}^2 \ptn(x)} \der x  
        \geq \frac1{2} \Eof{\mathscr{J}(\ptn)} - \mathscr{J}(\pt)  \eqsp.
    \end{align*}
    Note that $\mathscr{J}(p_t) \le \alpha_t^{-2} \mathscr{J}(\nu) <+\infty$ under our assumptions.
    Then, the fact that $\Eof{\mathscr{J}(\ptn)} \to +\infty$ as $t\to 0^+$ follows from standard arguments.    
    For the sake of completeness, let us provide a short proof in the case $\kappa=0$ (without loss of generality, the other cases follow by the same computation by using the invariant distribution of the forward process instead of the Lebesgue measure as a reference measure). 
    Let $h(f) := -\int f(x) \log f(x) \der x$ denote the entropy functional, as soon as it is well-defined.
    As a consequence of the De-Bruijn's identity and the fact that the Fisher information is non-increasing along the semigroup \cite{bakry_analysis_2014}, we have, for $t > s > 0$, that
    \begin{align*}
        h(\hat{p}_t) - h(\hat{p}_s) = \int_s^t \mathscr{J}(\hat{p}_u) \der u \leq (t - s) \mathscr{J}(\hat{p}_s) \eqsp.
    \end{align*}
    Given $(Z_1,\dots,Z_n) \sim \nu^{\otimes n}$, we easily see that
    \begin{align*}
        - h(\hat{p}_s) \geq -\frac1{n} \sum_{i=1}^n h(f_{Z_i, \sigma_s}) - \log(n) = -\frac{d}{2} \log(2 \pi e) + \frac{d}{2} \log(\sigma_s^{-2}) - \log(n) \eqsp,
    \end{align*}
    where $f_{\mu,\sigma}$ is the density of $\Nrm(\mu, \sigma \Irm_d)$. Thus, if we fix $t$ and let $s \to 0^+$, $\Eof{\mathscr{J}(\hat{p}_s)} \to +\infty$.
    
    This concludes the proof.
\end{proof}

\begin{remark}
    As an immediate consequence, for any positive Borel measure $\varpi$ on $[0,T]$, we have
    \begin{align*}
        \Eof{\LESM (s, \varpi) + \LESMn (s, \varpi)} \geq \frac1{2} \intOT \Eof{\fisher{\ptn}{\pt}} \der \varpi(t) \eqsp.
    \end{align*}
\end{remark}

\subsubsection{Lower bound under log-Sobolev inequalities}
\label{sec:fisher-lsi-lower-bound}

In this subsection, we present some additional results to complement \Cref{lemma:fisher-lemma}. Our goal is to show that, under the assumption that the data distribution satisfies a logarithmic Sobolev inequality, we can obtain a similar lower bound to \Cref{lemma:fisher-lemma}, where the relative Fisher information term can be replaced by other divergences, such as the total variation distance.

\paragraph{Conventions for probabilistic inequalities.} We define the total variation distance as
\begin{align*}
    \tv (\mu,\nu) := \sup_A |\mu(A) - \nu(A)|\eqsp.
\end{align*}
We say that a Borel probability distribution $\nu$ satisfies the log-Sobolev inequality with constant $\rho$ if for all differentiable $f \in \Lrm^1(\nu)$, we have
\begin{align*}
    \ent_\nu(f) \leq \frac{\rho}{2} \int \frac{\normof{\nabla f}^2}{f} \der \nu\eqsp.
\end{align*}
Equipped with these definitions, we have the following proposition.
\begin{proposition}
    \label{eq:integrated-tv-lower-bound-lsi}
    Assume that the data distribution satisfies the log-Sobolev inequality with constant $\rho_0$. Let $\varpi$ be a positive Borel measure on $[0,T]$, we have 
    \begin{align*}
        \Eof{\LESM (s, \varpi) + \LESMn (s, \varpi)} \geq \int_{[0,T]} \frac{2}{\alpha_t^2 \rho_0 + \sigma_t^2} \Eof{\mathrm{TV} (\pt, \ptn)^2} \der \varpi(t) \eqsp.
    \end{align*}
\end{proposition}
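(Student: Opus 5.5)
Starting from the Remark after \Cref{lemma:fisher-lemma}, I will integrate the pointwise-in-time lower bound against $\varpi$:
\begin{align*}
    \Eof{\LESM (s, \varpi) + \LESMn (s, \varpi)} \geq \frac1{2} \intOT \Eof{\fisher{\ptn}{\pt}} \der \varpi(t)\eqsp.
\end{align*}
Given this, the proposition reduces to a single time-slice inequality. It suffices to show that, for every $t \in (0,T]$ and every realisation of $S$,
\begin{align*}
    \tv(\pt, \ptn)^2 \leq \frac{\alpha_t^2 \rho_0 + \sigma_t^2}{4}\, \fisher{\ptn}{\pt}\eqsp.
\end{align*}
Taking expectations and integrating against $\varpi$ then gives the claimed bound with constant $2/(\alpha_t^2\rho_0+\sigma_t^2)$.

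To get the time-slice inequality, I will use the standard chain: LSI, then KL, then Pinsker. There are two steps.

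\emph{Step 1 (LSI constant along the flow).} Show that $\pt$ satisfies LSI with constant $\rho_t := \alpha_t^2\rho_0 + \sigma_t^2$. Indeed $X_t = \alpha_t X_0 + \sigma_t G$ with $G \sim \Nrm(0,\Irm_d)$ independent of $X_0$.
\begin{itemize}
    \item The pushforward of $\nu$ under $x\mapsto \alpha_t x$ satisfies LSI with constant $\alpha_t^2\rho_0$, since LSI constants scale quadratically under Lipschitz maps.
    \item $\Nrm(0,\sigma_t^2 \Irm_d)$ satisfies LSI with constant $\sigma_t^2$ in the paper's convention $\ent_\nu(f) \leq \frac{\rho}{2}\int \|\nabla f\|^2/f \der\nu$.
    \item LSI constants are subadditive under convolution of independent variables, a classical fact (e.g.\ in \cite{bakry_analysis_2014}).
\end{itemize}
Together these give $\rho_t = \alpha_t^2\rho_0+\sigma_t^2$.

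\emph{Step 2 (from Fisher information to TV).} Apply the LSI for $\pt$ to $f = \der\ptn/\der\pt$, which is differentiable and positive because $\ptn$ is a finite Gaussian mixture. This yields
\begin{align*}
    \klb{\ptn}{\pt} \leq \frac{\rho_t}{2}\int \frac{\|\nabla f\|^2}{f} \der\pt = \frac{\rho_t}{2}\fisher{\ptn}{\pt}\eqsp.
\end{align*}
Pinsker's inequality $\tv^2 \leq \tfrac12 \klb{\ptn}{\pt}$ then gives $\tv^2 \leq \tfrac{\rho_t}{4}\fisher{\ptn}{\pt}$. Combined with the factor $\tfrac12$ from the Remark, this is exactly $\frac{2}{\rho_t}\tv^2 \leq \frac12\fisher{\ptn}{\pt}$, and the constants match.

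The main obstacle is Step 1, because it is where the convention for $\rho$ has to be handled carefully. I will verify the convention using the Gaussian case: $\Nrm(0,\sigma^2)$ has LSI constant $\sigma^2$ in this normalisation. Subadditivity under convolution is the only non-elementary input, and I will cite it. Measurability and integrability (finiteness of the Fisher information) are routine here, and where $\fisher{\ptn}{\pt}=\infty$ the time-slice inequality is trivially true.
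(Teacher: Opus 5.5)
Your proposal is correct and matches the paper's proof step for step. The paper likewise gets the LSI constant $\alpha_t^2\rho_0+\sigma_t^2$ for $\pt$ from stability under Lipschitz maps and convolution. It then chains the log-Sobolev and Pinsker inequalities to obtain $\tfrac12\fisher{\ptn}{\pt} \geq \tfrac{2}{\alpha_t^2\rho_0+\sigma_t^2}\,\tv(\ptn,\pt)^2$, and integrates against $\varpi$ using \Cref{lemma:fisher-lemma}, with your Gaussian check of the constant's normalisation being a detail the paper leaves implicit.
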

\begin{proof}
    By stability of the log-Sobolev inequality under Lipschitz mappings and convolutions \cite{chafai_entropies_2004}, we have that $\pt$ satisfies the log-Sobolev inequality with constant $ \rho_t := \alpha_t^2 \rho_0 + \sigma_t^2$.
    Therefore, for any $t>0$, we have
    \begin{align*}
        \frac1{2}\fisher{\ptn}{\pt} &\geq \frac1{\alpha_t^2 \rho_0 + \sigma_t^2} \klb{\ptn}{\pt} \by{log-Sobolev inequality} \\
        &\geq \frac{2}{\alpha_t^2 \rho_0 + \sigma_t^2} \tv(\ptn, \pt)^2 \by{Pinsker's inequality}\eqsp.
    \end{align*}
    The result immediately follows by integrating over $\varpi$ and applying \Cref{lemma:fisher-lemma}.
\end{proof}

\begin{remark}
We immediately see that, if $\kappa > 0$, we have,
\begin{align*}
    \Eof{\LESM (s, \varpi) + \LESMn (s, \varpi)} \geq \frac{2}{\max(\kappa^{-1}, \rho_0)} \int_{[0,T]}  \Eof{\mathrm{TV} (\pt, \ptn)^2} \der \varpi(t) \eqsp.
\end{align*}
\end{remark}

\subsubsection{Proof of \Cref{prop:negative_sm_loss}}
\label{sec:proof-negative-sm-result}

Before giving the proof of \Cref{prop:negative_sm_loss}, we provide some technical lemmas below.
Recall that we denote by $p_{t|0}(\cdot|x)$ the conditional density of $X_t$ given $X_0 = x$, defined by \Cref{eq:cond_score_form}.

\begin{lemma}
\label{lem:gauss_concentration}
Given any $x \in \Rd$ and $Y \sim p_{\epsilon|0}(\cdot|x)$, we have that for any $\Delta \geq 0$,
\begin{gather*}
    \prob(\|Y - \alpha_\epsilon x\| \geq \sigma_\epsilon m + \Delta) \leq \exp \bigg ( - \frac{\Delta^2}{2\sigma_\epsilon^2} \bigg )\eqsp,\\
    \prob(\|Y - \alpha_\epsilon x\| \leq \sigma_\epsilon m - \Delta) \leq \exp \bigg ( - \frac{\Delta^2}{2\sigma_\epsilon^2} \bigg )\eqsp,
\end{gather*}
where $m = \E_{Z \sim \Nrm(0, \Irm_d)}[\|Z\|]$.
\end{lemma}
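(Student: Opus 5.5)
The plan is to reduce the statement to Gaussian concentration of the Euclidean norm, i.e.\ the Lipschitz concentration inequality for standard Gaussian vectors. Fix $x$ and write $Y = \alpha_\epsilon x + \sigma_\epsilon G$ with $G \sim \Nrm(0, \Irm_d)$. This representation holds by the explicit form of the transition kernel in \eqref{eq:cond_score_form}, since $p_{\epsilon|0}(\cdot|x)$ is the density of $\Nrm(\alpha_\epsilon x, \sigma_\epsilon^2 \Irm_d)$. Then $\|Y - \alpha_\epsilon x\| = \sigma_\epsilon \|G\|$. Both events become deviation events for $\|G\|$ around its mean $m = \E\|G\|$:
\begin{align*}
\{\|Y-\alpha_\epsilon x\| \ge \sigma_\epsilon m + \Delta\} &= \{\|G\| - m \ge \Delta/\sigma_\epsilon\}, \\
\{\|Y-\alpha_\epsilon x\| \le \sigma_\epsilon m - \Delta\} &= \{m - \|G\| \ge \Delta/\sigma_\epsilon\}.
\end{align*}

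The key step is to observe that $F(g) := \|g\|$ is $1$-Lipschitz on $\Rd$ by the reverse triangle inequality. We then invoke the Gaussian concentration inequality for Lipschitz functions (Tsirelson--Ibragimov--Sudakov / Borell; equivalently, it follows from the log-Sobolev inequality for $\Nrm(0,\Irm_d)$ with constant $1$ via the Herbst argument). It states that for any $L$-Lipschitz $F$ and $r \ge 0$,
\begin{equation*}
\prob(F(G) - \E F(G) \ge r) \le \exp(-r^2/(2L^2)),
\end{equation*}
and the same bound holds for the lower tail, applied to $-F$. Taking $L = 1$ and $r = \Delta/\sigma_\epsilon$ gives both bounds $\exp(-\Delta^2/(2\sigma_\epsilon^2))$ directly.

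There is essentially no obstacle. The only point to be careful about is using the sharp one-sided form with constant $1/2$ in the exponent, rather than the version around the median or with a worse constant. If a self-contained argument is preferred, I would derive it by the Herbst argument. One proves $\E e^{\lambda(F-\E F)} \le e^{\lambda^2/2}$ from the Gaussian log-Sobolev inequality, possibly after smoothing $F$ near the origin. Chernoff's bound optimized at $\lambda = r$ then gives the tail. The case $\Delta = 0$ is trivial, since the right-hand side equals $1$.
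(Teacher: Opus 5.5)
Your proposal is correct and follows the same route as the paper: standardize via $(Y-\alpha_\epsilon x)/\sigma_\epsilon \sim \Nrm(0,\Irm_d)$, then apply Gaussian concentration for the $1$-Lipschitz functions $z \mapsto \|z\|$ and $z \mapsto -\|z\|$ with deviation level $\Delta/\sigma_\epsilon$. Your rewriting of both events as one-sided deviations of $\|G\|$ from $m$ is exactly what the paper's argument relies on.
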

\begin{proof}
By definition, we have that the random variable $Z = \sigma_\epsilon^{-1} (Y - \alpha_\epsilon x)$ is a standard multivariate Gaussian. Thus, by Gaussian concentration of measure (see \eg, Section 5.4.2 of \cite{bakry_analysis_2014}), we have that for any 1-Lipschitz function $f$,
\begin{equation*}
    \prob(f(Z) - \E[f(Z)] \geq t) \leq \exp(-t^2/2)\eqsp.
\end{equation*}
Choosing $f(z) = \|z\|$ and $f(z) = -\|z\|$ along with $t = \Delta/\sigma_\epsilon$.
\end{proof}

The proof of \Cref{prop:negative_sm_loss} is based on the following proposition, providing a lower bound on the sample size, provided that two KL divergences associated to the population and empirical forward processes are both smaller than an absolute constant. Below, we use the following convention and notation for the total variation distance:
\begin{align}
    \label{eq:tv_distance}
    \normof{\mu - \nu}_{\mathrm{TV}} := \sup_A |\mu(A) - \nu(A)| \eqsp,
\end{align}
for probability measure $\mu$ and $\nu$. We also recall that we denote by $p_t$ the probability density of $X_t$ following \Cref{eq:forward-process-sde} and by $\hat{p}_t$ the probability density of the process $\der \xtn = -\kappa \xtn \der t + \sqrt{2} \der \Bm_t$ initialized at $\xfs_0 \sim \frac1{n} \sum_{i=1}^n \updelta_{Z_i}$, for $t > 0$, where $S := (Z_1, \dots,Z_n) \sim \nu^{\otimes n}$ is the dataset.

\begin{proposition}
\label{prop:negative_kl}
Let $\epsilon>0$. 
For any distribution $\hat{\nu}$ on $\R^d$, which might depend on the dataset $S$, if $\Eof{\klb{\hat{p}_\epsilon}{\hat{\nu}}} \leq 1 / 16$ and $\Eof{\klb{{p}_\epsilon}{\hat{\nu}}} \leq 1 / 16$, then we have $n \geq n_{\min}(\epsilon)$, with, for $\epsilon$ small enough
\begin{align*}
     \log n_{\min} (\epsilon) \geq -\frac1{2} \log \left( 8 \sqrt{d \vee \log(64)} \sigma_\epsilon \right) \underline{\gamma}(\nu) \eqsp,
\end{align*}
where $\underline{\gamma}(\nu)$ is the lower Rényi dimension defined in \Cref{def:correlation-dimension}.
\end{proposition}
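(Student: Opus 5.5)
Our plan is to convert the two KL conditions into a total-variation statement via Pinsker, and then to show that, for $n$ too small, $\hat{p}_\epsilon$ and $p_\epsilon$ are nearly mutually singular. Pinsker gives $\E[\|\hat p_\epsilon-\hat\nu\|_{\mathrm{TV}}]\le \E[\sqrt{\klb{\hat p_\epsilon}{\hat\nu}/2}]\le \sqrt{1/32}$, and the same bound holds for $p_\epsilon$ (Jensen moves the expectation inside the square root). The triangle inequality then yields $\E[\|\hat p_\epsilon - p_\epsilon\|_{\mathrm{TV}}]\le 2/\sqrt{32}<1/2$. It therefore suffices to show that $\E[\|\hat p_\epsilon - p_\epsilon\|_{\mathrm{TV}}]\ge 1/2$ whenever $n< n_{\min}(\epsilon)$.

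For this lower bound we use the test set $A_S:=\bigcup_{i=1}^n B(\alpha_\epsilon Z_i, r)$ with $r:=\sigma_\epsilon m+\Delta$, where $m=\E\|N(0,\Irm_d)\|\le\sqrt d$. By \Cref{lem:gauss_concentration}, $\hat p_\epsilon(A_S)\ge 1-e^{-\Delta^2/(2\sigma_\epsilon^2)}$. We choose $\Delta=\sigma_\epsilon\sqrt{2\log 8}$, so that this is at least $7/8$ and $r\le 2\sigma_\epsilon\sqrt{d\vee\log 64}$. For the population side, a union bound and independence of $X_\epsilon$ from $S$ give
\begin{align*}
\E_S[p_\epsilon(A_S)] \le n\, \prob\big(\|X_\epsilon-\alpha_\epsilon Z\|\le r\big),
\end{align*}
where $X_\epsilon=\alpha_\epsilon X_0+\sigma_\epsilon G$ with $X_0,Z\sim\nu$ independent. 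We split on whether $\|\alpha_\epsilon(X_0-Z)\|\le 2r$. If it is, we land in the event $\{\|X_0-Z\|\le 2r/\alpha_\epsilon\}$, whose probability is $\int\nu(B(x,2r/\alpha_\epsilon))\der\nu(x)$. If it is not, we need $\|\sigma_\epsilon G\|\ge r$, and \Cref{lem:gauss_concentration} makes this probability small only if we give ourselves room. The cleaner route is to enlarge the radius: require $\|\alpha_\epsilon(X_0-Z)\|\le r + (\sigma_\epsilon m+\Delta') \le 4r$, and bound the complementary Gaussian tail by $1/8$. This gives
\begin{align*}
\prob\big(\|X_\epsilon-\alpha_\epsilon Z\|\le r\big)\le \int \nu\big(B(x,4r/\alpha_\epsilon)\big)\der\nu(x)+\tfrac18 .
\end{align*}
The stray $1/8$ must not be multiplied by $n$, so we handle it before the union bound: we condition on $G$ being typical and apply the union bound only to the $\nu\otimes\nu$ part. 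With $\delta':=4r/\alpha_\epsilon\le 8\sqrt{d\vee\log 64}\,\sigma_\epsilon$ (for $\epsilon$ small, $\alpha_\epsilon\approx1$), the definition of the lower Rényi dimension gives, for $\delta'$ small enough, $\int\nu(B(x,\delta'))\der\nu(x)\le (\delta')^{\underline\gamma(\nu)/2}$. Here the factor $1/2$ is our slack in the liminf.

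Combining, $\E\|\hat p_\epsilon-p_\epsilon\|_{\mathrm{TV}}\ge \E[\hat p_\epsilon(A_S)-p_\epsilon(A_S)]\ge 7/8-1/8-n(\delta')^{\underline\gamma/2}$. If $n< (\delta')^{-\underline\gamma/2}/4$, this exceeds $1/2$, which contradicts the bound from the first step. Taking logarithms then yields the stated $\log n_{\min}$, possibly after absorbing constants into the radius. The main obstacle is the population-side estimate: we must avoid paying $n$ times the Gaussian tail, and we must track radii tightly enough to get exactly the constant $8\sqrt{d\vee\log 64}$. If the bookkeeping fails, we will adjust $\Delta$ and use the monotonicity of $\delta\mapsto\nu(B(x,\delta))$ to absorb the constants.
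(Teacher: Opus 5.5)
Your proposal is correct and is essentially the paper's proof. The shared steps are: Pinsker plus the triangle inequality to keep $\E\|\hat p_\epsilon-p_\epsilon\|_{\mathrm{TV}}$ below a constant (you work in first moment, the paper in second moment); balls of radius $\sigma_\epsilon m+\sigma_\epsilon\sqrt{2\log 8}$ around the $\alpha_\epsilon Z_i$ as the test set (the paper uses the complement); the Gaussian tail paid once, with the union bound applied only to the $\nu\otimes\nu$ part; and finally the $\liminf$ defining $\underline{\gamma}(\nu)$. The only loose end is the final step. Having spent all the slack at exponent $\underline{\gamma}/2$, you are left with $n\ge\frac14(\delta')^{-\underline{\gamma}/2}$, and absorbing the radius constant does not remove the factor $1/4$. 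Instead, invoke the $\liminf$ with exponent $\underline{\gamma}-\eta$ for some $\eta<\underline{\gamma}/2$, as the paper does; the factor $1/4$ then becomes negligible as $\sigma_\epsilon\to 0$, and the case $\underline{\gamma}=0$ is trivial.
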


\begin{proof}[Proof of Proposition \ref{prop:negative_kl}]
We begin with Pinsker's inequality and the reverse triangle inequality for the total variation distance, which are used to obtain
\begin{align*}
    \Eof{\klb{{p}_\epsilon}{\hat{\nu}}} &\geq 2 \E [ \|p_\epsilon - \hat{\nu}\|_{\operatorname{TV}}^2 ]\\
    &\geq 2 \E \Big [ (\|p_\epsilon - \hat{p}_\epsilon\|_{\operatorname{TV}} - \|\hat{p}_\epsilon - \hat{\nu}\|_{\operatorname{TV}})^2 \Big ]\\
    &\geq 2 \bigg ( \sqrt{\Eof{\|p_\epsilon - \hat{p}_\epsilon\|^2_{\operatorname{TV}}}} - \sqrt{ \Eof {\|\hat{p}_\epsilon - \hat{\nu}\|_{\operatorname{TV}}^2}} \bigg )^2 \eqsp,
\end{align*}
where the final inequality follows from the triangle inequality in weighted $\Lrm^2$-norm. Since, we have $2\E [\|\hat{p}_\epsilon - \hat{\nu}\|_{\operatorname{TV}}^2] \leq \Eof{\klb{\hat{p}_\epsilon}{\hat{\nu}}} \leq 1/16$, we conclude from this that if $\Eof{\klb{{p}_\epsilon}{\hat{\nu}}}\leq 1/16$ then $\E [\|p_\epsilon - \hat{p}_\epsilon\|_{\operatorname{TV}}^2] \leq \frac{1}{8}$.

We now identify a test set to lower bound the total variation using that for any Borel set $A$,
\begin{equation*}
    \|p_\epsilon - \hat{p}_\epsilon\|_{\operatorname{TV}} \geq |p_\epsilon(A) - \hat{p}_\epsilon(A)| \eqsp.
\end{equation*}
As in \Cref{lem:gauss_concentration}, we define $m = \E_{Z \sim \Nrm(0, \Irm_d)}[\|Z\|]$.
We define the set $A_R := \{y \in \R^d: y \notin B_{\sigma_\epsilon m + R}(\alpha_\epsilon Z_i), \forall i =1,\dots,n\}$ so that the empirical density is upper bounded by
\begin{equation*}
    \hat{p}_\epsilon(A_R) \leq \frac{1}{N} \sum_{i=1}^N p_{\epsilon|0}(B_{m\sigma_\epsilon + R}(\alpha_\epsilon Z_i)^\complement|Z_i) = 1- p_{\epsilon, R} \eqsp,
\end{equation*}
where $p_{\epsilon, R} := p_{\epsilon|0}(B_{m\sigma_\epsilon + R}(\mathbf{0})|\mathbf{0})$. Define the set $$\tilde{A}_R := \{y \in \R^d: \|y - Z_i\| \geq 2 (R + m \sigma_\epsilon)/\alpha_\epsilon, \forall i=1,\dots,n\} \eqsp.$$ The population density is lower bounded using that for any $x \in \tilde{A}_R$, we have $B_{m\sigma_\epsilon + R}(\alpha_\epsilon x) \subset A_R$ and hence,
\begin{align*}
    p_\epsilon(A_R) &\geq \int_{\tilde{A}_R} p_{\epsilon|0}(A_R|x) \nu(\der x)\\
    &\geq \int_{\tilde{A}_R} p_{\epsilon|0}(B_{m\sigma_\epsilon + R}(\alpha_\epsilon x)|x) \nu(\der x)\\
    &= \nu(\tilde{A}_R) p_{\epsilon, R} \eqsp.
\end{align*}
We further bound this using the union bound,
\begin{align*}
    \E_S[\nu(\tilde{A}_R)] &=1 - \Eof[S]{\Pof[X \sim \nu]{\bigcup_{i=1}^n \{\|X - x_i\| < {2(R + m \sigma_\epsilon)}/{\alpha_\epsilon} \}}}\\
    &\geq \left(1 - \sum_{i=1}^n \E[\prob_{X \sim \nu}(\|X - x_i\| < 2 (R + m \sigma_\epsilon)/\alpha_\epsilon)]\right)_+\\
    &= (1 - n \delta(2 (R + m \sigma_\epsilon)/\alpha_\epsilon))_+ \eqsp,
\end{align*}
where we define $\delta(r) = \E_{X \sim \nu}[\nu(B_{r}(X))]$. The total variation is then lower bounded using $A_R$ as a test set:
\begin{equation*}
    \E[\|p_\epsilon - \hat{p}_\epsilon\|_{\operatorname{TV}}] \geq \E[p_\epsilon(A_R) - \hat{p}_\epsilon(A_R)] \geq \left( \left(1 - n \delta \left(\frac{2 (R + m \sigma_\epsilon)}{\alpha_\epsilon} \right)\right)_+ p_{\epsilon, R} - (1 - p_{\epsilon, R}) \right)_+ \eqsp.
\end{equation*}
To guarantee that this bound is non-trivial, we now choose $R$ using Lemma \ref{lem:gauss_concentration}. Setting $R^2 = 2 \log(8) \sigma_\epsilon^2$, leads to the bound,
\begin{equation*}
    p_{\epsilon, R} \geq 1 - \exp \bigg ( - \frac{R^2}{2 \sigma_\epsilon^2} \bigg ) \geq \frac{3}{4} \eqsp.
\end{equation*}
We also choose $\epsilon$ such that $\alpha_\epsilon \geq 1/2$ so that $2(R + m\sigma_\epsilon)/\alpha_\epsilon \leq 8 \sqrt{d \vee \log(64)} \sigma_\epsilon$ and $8 \sqrt{d \vee \log(64)} \sigma_\epsilon < 1$.

Thus, by Jensen's inequality, we have the lower bound,
\begin{equation*}
    \E[\|p_\epsilon - \hat{p}_\epsilon\|_{\operatorname{TV}}^2]^{1/2} \geq \frac{1}{4} \Big ( 3 (1 - n \delta(8 \sqrt{d \vee \log(64)} \sigma_\epsilon))_+ - 1 \Big )_+ \eqsp.
\end{equation*}
From this bound we deduce that,
\begin{align*}
    \Eof{\|p_\epsilon - \hat{p}_\epsilon\|_{\text{TV}}^2} \leq \frac{1}{8} &\implies ( 3 (1 - n \delta(8 \sqrt{d \vee \log(64)} \sigma_\epsilon))_+ \leq 1 + \sqrt{2}\\
    & \implies n \geq \frac{2 - \sqrt{2}}{3 \, \delta(8 \sqrt{d \vee \log(64)} \sigma_\epsilon)} =: n_{\min}(\epsilon) \eqsp.
\end{align*}
By the definition of the lower Rényi dimension in \Cref{def:correlation-dimension}, we have that
\begin{align*}
    \underline{\gamma}(\nu) = \liminf_{\epsilon \to 0^+} \frac{\log \delta (8 \sqrt{d \vee \log(64)} \sigma_\epsilon)}{\log (8 \sqrt{d \vee \log(64)} \sigma_\epsilon)} \eqsp.
\end{align*}
Let $\eta > 0$ be fixed, by definition, we have that there exists $\epsilon_0$ such that, for all $\epsilon < \epsilon_0$, we have
\begin{align*}
    \log n_{\min} (\epsilon) \geq  \log \left( \frac{2 - \sqrt{2}}{3} \right) - \log \left( 8 \sqrt{d \vee \log(64)} \sigma_\epsilon \right) (\underline{\gamma}(\nu) - \eta) \eqsp.
\end{align*}
In particular, if $\underline{\gamma}(\nu) > 0$, for $\epsilon$ small enough, we have that
\begin{align*}
     \log n_{\min} (\epsilon) \geq -\frac1{2} \log \left( 8 \sqrt{d \vee \log(64)} \sigma_\epsilon \right) \underline{\gamma}(\nu) \eqsp.
\end{align*}
Moreover, if $\underline{\gamma}(\nu) = 0$, then the above inequality is always verified.
This concludes the proof.
\end{proof}

We can now present the proof of \Cref{prop:negative_sm_loss}.

\begin{proof}[Proof of Theorem \ref{prop:negative_sm_loss}]
Suppose that $\E_{X \sim \nu}[\|X\|^2] = \sigma^2 < \infty$, $T \geq 1 + \frac{1}{2\kappa} \log (32 (\sigma^2 + \kappa^{-1} d))$, and $d \geq 5$. We begin by expressing the weighted ESM loss in terms of the standard ESM loss by using the fact that the density $w$ of $\varpi$ is non-decreasing on its support. We have, for $\delta \geq \epsilon$,
\begin{align*}
    \LESMn(\hat{s}, \varpi) &= \int_\epsilon^T \LESMn(\hat{s}, t) w(t) \der t \geq  w(\delta) \LESMn(\hat{s}, \varpi_{\mathrm{ELBO}}^\delta) \eqsp,
\end{align*}
where we recall $\delta = w^{-1}(32 \varepsilon)$. By the assumption that $\varepsilon \leq \sup(w) / 32$, we have,
\begin{equation*}
     \E[\LESMn(\hat{s}, \varpi_{\mathrm{ELBO}}^\delta)] \leq \frac{\varepsilon}{w(\delta)} \leq \frac{1}{32} \eqsp.
\end{equation*}
Via the same argument, we also obtain that $\E[\LESM(\hat{s}, \varpi)] \leq \varepsilon$ implies that $\E[\LESM(\hat{s}, \varpi_{\mathrm{ELBO}}^\delta)] \leq 1/32$. Therefore it is sufficient to show that $\E[\LESMn(\hat{s}, \varpi_{\mathrm{ELBO}}^\delta)] \leq 1/32,~ \E[\LESM(\hat{s}, \varpi_{\mathrm{ELBO}}^\delta)] \leq 1/32$ leads to the lower bound on $n$.

It follows from standard bounds based on Girsanov's theorem that
\begin{align*}
    \klb{p_\delta}{q_{T - \delta}} \leq \LESM(\hat{s}, \varpi_{\mathrm{ELBO}}^\delta) + \klb{p_T}{q_0} \eqsp,\\
    \klb{\hat{p}_\delta}{q_{T - \delta}} \leq \LESMn(\hat{s}, \varpi_{\mathrm{ELBO}}^\delta) + \klb{\hat{p}_T}{q_0} \eqsp.
\end{align*}
Using that $q_0$ satisfies a logarithmic Sobolev inequality with constant $\kappa^{-1}$, in combination with Corollary 2 of \cite{otto_comment_2001}, we obtain the upper bound,
\begin{align*}
    \klb{p_T}{q_0} &\leq \exp(-2\kappa(T-s)) \klb{p_s}{q_0}\\
    &\leq \frac{\exp(-2\kappa(T-s))}{4s} \wass_2(p_0, q_0)^2 \eqsp,
\end{align*}
any $s \in (0, T]$. Setting $s = 1$, we obtain,
\begin{equation*}
    \klb{p_T}{q_0} \leq \frac{\exp(-2 \kappa (T-1)) }{2} (\sigma^2 + \kappa^{-1} d) \eqsp.
\end{equation*}
By the lower bound assumption on $T$, we have that $\klb{p_T}{q_0} \leq 1/32$ and, via a similar argument, $\E[ \klb{\hat{p}_T}{q_0} ] \leq 1/32$ also. Thus, it follows that
\begin{equation*}
    \Eof{\klb{p_\delta}{q_{T - \delta}}},~\Eof{\klb{\hat{p}_\delta}{q_{T - \delta}}}  \leq 1/16 \eqsp.
\end{equation*}
The result then follows immediately from Proposition \ref{prop:negative_kl}.
\end{proof}

\subsection{Omitted proofs of \Cref{sec:fine-grainded-linear-analysis}}
\label{sec:proofs-fine-grainded-linear-analysis}

In this section, we present the proofs of \Cref{lemma:fixed-time-non-asymptotic-lemma} and \Cref{prop:fixed-time-asymptotic-linear-results}.

\subsubsection{Proof of \Cref{lemma:fixed-time-non-asymptotic-lemma}}

We present below the proof of \Cref{lemma:fixed-time-non-asymptotic-lemma}.

\begin{proof}
    Let $(Z_1,\dots,Z_n) \in (\Rd)^N$ be a dataset and $W \in \R^{p \times d}$ be fixed. By assumption, $W$ has full rank.
    Let $\Xi \sim \Nrm(0, \Irm_d)$. We can write the empirical risk as
    \begin{align*}
        A \mapsto \frac1{n} \sum_{i=1}^n \Eof{\normof{\sigma_t AW (\alpha_t Z_i + \sigma_t \Xi) + \Xi}^2} \eqsp.
    \end{align*}
    As $\Eof{\Xi} = 0$, we have
    \begin{align*}
       \sigma_t^2 \LDSMn(s_{W,A}, t) &= d + \sigma_t^2 \frac1{n} \sum_{i=1}^n \Eof{\trace \left( W^T A^T A W (\alpha_t Z_i + \sigma_t \Xi) (\alpha_t Z_i + \sigma_t \Xi)^T \right)} \\&\qquad + 2 \sigma_t^2 \Eof{\trace \left( AW \Xi \Xi^T \right) } \\
        &= d + \sigma_t^2 \trace \left( A^T A W (\alpha_t^2 \widehat{\Sigmabf} + \sigma_t^2 \Irm_d) W^T \right) + 2 \sigma_t^2 \trace (AW) \\
        &= d + \sigma_t^2 \trace \left( A^T A W \widehat{\Sigmabf}_t W^T  \right) + 2 \sigma_t^2 \trace (AW) \eqsp.
    \end{align*}
    Let $A \in \argmin   \widehat{\Rcal}_{W,t} (\widehat{A}) $, by taking the gradient above, we have
    \begin{align}
        \label{eq:gradient_zero_linear_net}
        A W \widehat{\Sigmabf}_t W^T + W^T = 0 \eqsp.
    \end{align}

    \textbf{Case 1} ($p < d$). 
    By our assumptions, the matrix $W$ has full rank, so its rank is $p$ in this case.
    The matrix $\widehat{\Sigmabf}_t$ is always invertible, as soon as $t > 0$; we also know that $\rank (W) = \rank (W \widehat{\Sigmabf}_t^{1 / 2})$, therefore, we conclude that $\rank (W \widehat{\Sigmabf}_t W^T) = p$, so the matrix $W \widehat{\Sigmabf}_t W^T \in \R^{p\times p}$ is invertible. 

    Therefore, the empirical risk minimizer is given by
    \begin{align*}
        \widehat{A} := - W^T (W \widehat{\Sigmabf}_t W^T)^{-1} \eqsp.
    \end{align*}
    Using \Cref{eq:gradient_zero_linear_net}, we have that
    \begin{align*}
       \sigma_t^2 \LDSMn(s_{W,\Ahat}, t)  = d - \sigma_t^2 \trace \left( W^T (W \widehat{\Sigmabf}_t W^T)^{-1} W \right) \eqsp.
    \end{align*}
    To compute the score matching loss, we note that when $\ora{X}_0 \sim \nu = \Nrm(0,\Sigmabf)$, then $\xt \sim \Nrm(0, \alpha_t^2 \Sigma + \sigma_t^2 \Irm_d) = \Nrm(0, \Sigmabf_t)$. Therefore, the score can be expressed as
    \begin{align*}
        \nabla \log \pt(x) = -\Sigmabf_t^{-1} x \eqsp.
    \end{align*}
    The score matching loss is then equal to
    \begin{align*}
        \LESM (s_{W,\widehat{A}}, t) = \Eof{\normof{\widehat{A} W (\alpha_t Z + \sigma_t \Xi) + \Sigmabf_t^{-1}  (\alpha_t Z + \sigma_t \Xi)}^2} \eqsp,
    \end{align*}
    with $(Z,\Xi) \sim \nu \otimes \Nrm (0, \Irm_d)$. By symmetry of $\widehat{A}W$, we obtain that
    \begin{align*}
        \LESM (s_{W,\widehat{A}}, t) &= \Eof{\normof{\widehat{A} W (\alpha_t Z + \sigma_t \Xi) + \Sigmabf_t^{-1}  (\alpha_t Z + \sigma_t \Xi)}^2} \\
         &= \trace \left( W^T \Ahat^T \Ahat W \Sigmabf_t + 2 \Ahat W + \Sigmabf^{-1}_t \right) \\
        &= \frobnorm{\left(W^T (W \widehat{\Sigmabf}_t W^T)^{-1} W - \Sigmabf_t^{-1} \right) \Sigmabf_t^{1/2}}^2 \eqsp.
    \end{align*}

    \textbf{Case 2.} ($p\geq d$) In this case, by similar argument as in the first case, we note that the matrix $W \in \R^{p \times d}$ has full rank by assumption. Consider the linear map $\Phi_W : \R^{d \times p} \to \R^{d \times d}$ defined by $\Phi_W(A) = AW$. Then, we easily show that this map is almost-surely surjective, because the matrix $W$ admits a left inverse $W^+$ such that $W^+ W = \Irm_d$ (which can be obtained from the singular value decomposition of $W$).

    As both the empirical risk and the score matching loss depend only on $A$ through the product $AW$, we have that all the empirical risk minimizers $\widehat{A}$ satisfy $\widehat{A} W = \widehat{H} \in \R^{d \times d}$, with $\widehat{H}$ the empirical minimizer of the following linear model:
    \begin{align*}
        \widehat{H} \in \argmin_{H} \setof{\frac1{n} \sum_{i=1}^n \Eof{\normof{\sigma_t H (\alpha_t Z_i + \sigma_t \Xi) + \Xi} ^2}} \eqsp, \quad (Z,\Xi) \sim \nu \otimes \Nrm (0, \Irm_d) \eqsp.
    \end{align*}
    By similar computations as before, we easily show that $\widehat{H} = -\widehat{\Sigmabf}_t^{-1}$. 
    Therefore, the minimum empirical risk is
    \begin{align*}
         \sigma_t^2 \LDSMn(s_{W,\Ahat}, t)  = d - \sigma_t^2 \trace \left( \widehat{\Sigmabf}_t^{-1} \right) \eqsp.
    \end{align*}
    Finally, the score matching loss can be expressed as
    \begin{align*}
        \LESM (s_{W,\widehat{A}}, t) &= \trace \left( (\widehat{A} W + \Sigmabf_t^{-1})^2 \Sigmabf_t \right) = \frobnorm{\left(\widehat{\Sigmabf}_t^{-1} - \Sigmabf_t^{-1} \right) \Sigmabf_t^{1/2}}^2 \eqsp.
    \end{align*}
    This concludes the proof.
\end{proof}

\subsubsection{Proof of \Cref{prop:fixed-time-asymptotic-linear-results}}

We prove \Cref{prop:fixed-time-asymptotic-linear-results} as a corollary of the following more general result, which gives the exact asymptotic behavior of the score matching loss and the empirical risk.

\begin{proposition}
\label{prop:linear_net_exact_learning_curves}
    Consider the same setup as in \Cref{prop:fixed-time-asymptotic-linear-results}. Let $\Ahat \in \argmin_A \widehat{\Rcal}_{W,t} (A) $. If $\psi_p > \psi_d$, we have
    \begin{align*}
        \lim_{n \to \infty} \frac1{d} \inf_A  \left( \sigma_t^2 \LDSMn(s_{W,A}, t) \right) = 1 - \sigma_t^2 \int \frac{\der \mu_{\mathrm{MP}}^{(\psi_d)} (\lambda)}{\alpha_t^2 \beta \lambda + \sigma_t^2} \eqsp,
    \end{align*}
    and
    \begin{align*}
         \lim_{n \to\infty} \LESM (s_{W,\widehat{A}}, t) = (\alpha_t^2 \beta + \sigma_t^2) \int \left( \frac1{\alpha_t^2 \beta \lambda + \sigma_t^2} - \frac1{\alpha_t^2 \beta + \sigma_t^2}  \right)^2 \der \mu_{\mathrm{MP}}^{(\psi_d)} (\lambda) \eqsp.
    \end{align*}
    If $\psi_p < \psi_d$, we have
    \begin{align*}
        \frac1{d}  \sigma_t^2 \LDSMn(s_{W,\Ahat}, t) \underset{n\to \infty}{\longrightarrow} 1 - \sigma_t^2 \frac{\psi_p}{\psi_d}  \int \frac1{\alpha_t^2 \beta \lambda + \sigma_t^2} \der \mu_{\mathrm{MP}}^{(\psi_p)} (\lambda) \eqsp,
    \end{align*}
    and
    \begin{align*}
         \frac1{d} \LESM (s_{W,\widehat{A}}, t) \underset{n\to\infty}{\longrightarrow} (\alpha_t^2 \beta + \sigma_t^2) \frac{\psi_p}{\psi_d} \int \left( \frac1{\alpha_t^2 \beta \lambda + \sigma_t^2} - \frac1{\alpha_t^2 \beta + \sigma_t^2}  \right)^2 \der \mu_{\mathrm{MP}}^{(\psi_p)} (\lambda) + \frac{1 - \frac{\psi_p}{\psi_d}}{\alpha_t^2 \beta + \sigma_t^2} \eqsp.
    \end{align*}
\end{proposition}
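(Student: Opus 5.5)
We start from the closed forms of \Cref{lemma:fixed-time-non-asymptotic-lemma} and pass to the limit with the Marchenko--Pastur theorem (\Cref{thm:mp_thm}). Under \Cref{ass:linear_setup_assumption}, $\Sigmabf = \beta \Irm_d$, so $\Sigmabf_t = (\alpha_t^2\beta+\sigma_t^2)\Irm_d =: c_t \Irm_d$. We also write $\Sigmabfhat = \beta \widehat{G}$, where $\widehat{G} = n^{-1}\sum_i \xi_i\xi_i^T$ and the $\xi_i$ are standard Gaussian. Then $\Sigmabfhat_t = \alpha_t^2\beta \widehat{G} + \sigma_t^2 \Irm_d$. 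Because $W$ has i.i.d.\ Gaussian entries, it has full rank almost surely, so the lemma applies.

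\textbf{Case $\psi_p>\psi_d$.} For $n$ large we have $p\ge d$ almost surely, hence $\Omega_t = \Sigmabfhat_t^{-1}$. Both losses are then spectral functionals of $\widehat{G}$. For the empirical loss,
\[
\tfrac{\sigma_t^2}{d}\LDSMn = 1 - \sigma_t^2 \int (\alpha_t^2\beta\lambda+\sigma_t^2)^{-1}\der\widehat{\mu}_{\widehat{G}}(\lambda).
\]
For the population loss, since $\Sigmabf_t$ is scalar,
\[
\LESM = c_t\,\trace\bigl((\Sigmabfhat_t^{-1}-c_t^{-1}\Irm_d)^2\bigr) = c_t\, d \int \bigl((\alpha_t^2\beta\lambda+\sigma_t^2)^{-1}-c_t^{-1}\bigr)^2\der\widehat{\mu}_{\widehat{G}}(\lambda).
\]
On $\R_+$ both integrands are continuous and bounded, by $\sigma_t^{-2}$ and $(\sigma_t^{-2}+c_t^{-1})^2$ respectively. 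Weak convergence $\widehat{\mu}_{\widehat{G}}\to\mu_{\mathrm{MP}}^{(\psi_d)}$ almost surely therefore gives the stated limits. Note that the normalized statement should read $\tfrac1d\LESM$.

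\textbf{Case $\psi_p<\psi_d$.} Now $p<d$ and $\Omega_t = W^T (W\Sigmabfhat_t W^T)^{-1}W$. The key observation is rotational invariance: the $\xi_i$ are isotropic and independent of $W$, so we may take the SVD $W = U D V^T$ with $V\in\R^{d\times p}$ having orthonormal columns. Then
\[
\Omega_t = V (V^T\Sigmabfhat_t V)^{-1} V^T,
\]
because the factor $UD$ is invertible and cancels. Conditionally on $V$, the vectors $V^T\xi_i$ are i.i.d.\ $\Nrm(0,\Irm_p)$. Hence $V^T\Sigmabfhat_t V = \alpha_t^2\beta \widehat{G}_p + \sigma_t^2\Irm_p$, where $\widehat{G}_p$ is a $p\times p$ Wishart matrix with aspect ratio $p/n\to\psi_p$.

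It follows that $\trace(\Omega_t) = \trace\bigl((\alpha_t^2\beta\widehat{G}_p+\sigma_t^2)^{-1}\bigr)$. After dividing by $d$, this is $\tfrac pd \int(\alpha_t^2\beta\lambda+\sigma_t^2)^{-1}\der\widehat{\mu}_{\widehat{G}_p}$, which yields the empirical limit with prefactor $\psi_p/\psi_d$.

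For the ESM loss, $\Omega_t$ is supported on the range of $VV^T$. Splitting $\Irm_d = VV^T + (\Irm_d - VV^T)$ gives
\[
c_t\trace\bigl((\Omega_t - c_t^{-1}\Irm_d)^2\bigr) = c_t\trace\bigl(((V^T\Sigmabfhat_tV)^{-1}-c_t^{-1}\Irm_p)^2\bigr) + c_t\,(d-p)\,c_t^{-2}.
\]
Dividing by $d$ and applying Marchenko--Pastur with parameter $\psi_p$ gives the first term; the second term is $(1-\psi_p/\psi_d)/c_t$.

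\textbf{Main obstacle.} The delicate step is justifying almost-sure convergence in the second case. The matrix $\widehat{G}_p$ depends on $V$, so we must argue that, because its law does not depend on $V$, the conclusion of \Cref{thm:mp_thm} holds almost surely jointly. Concretely, $V^T\xi_i$ has the same joint law as a fresh Gaussian array, so the sequence of spectral measures has the same distribution as in the unconditional Marchenko--Pastur setting. Almost-sure convergence is a property of this law and therefore transfers. The bounded continuous integrands then make the remaining weak-convergence steps routine.
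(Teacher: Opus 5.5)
Your proposal is correct and follows the paper's proof essentially step by step. When $p>d$, it uses the closed forms of the lemma and Marchenko--Pastur with bounded continuous integrands; when $p<d$, it uses the SVD reduction $\Omega_t = V(V^T\Sigmabfhat_t V)^{-1}V^T$, rotational invariance of the Gaussian data to obtain a $p$-dimensional Wishart matrix, and the orthogonal split $\Irm_d = VV^T + (\Irm_d - VV^T)$ that produces the $(1-\psi_p/\psi_d)/(\alpha_t^2\beta+\sigma_t^2)$ term. You rightly note that the first-case ESM limit should carry the $1/d$ normalization, and your almost-sure remark covers a point the paper does not discuss; it is cleanest to say that the standard almost-sure Marchenko--Pastur argument rests on summable per-$n$ concentration bounds, so it does not depend on how the arrays are coupled across $n$.
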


\begin{proof}
    First, we note that in that case we have $\Sigmabf_t = (\alpha_t^2 \beta + \sigma_t^2) \Irm_d$. 
    Moreover, we observe that, as the matrix $W \in \R^{p \times d}$ has i.i.d. standard Gaussian entries, it has almost-surely full rank.
    
    We can therefore apply \Cref{lemma:fixed-time-non-asymptotic-lemma}, we distinguish two cases.

    \textbf{Case 1.} ($\psi_p > \psi_d$) For $n$ large enough, we have $p > d$. In this case, by \Cref{lemma:fixed-time-non-asymptotic-lemma}, the normalized empirical risk can be written as
    \begin{align*}
        \frac1{d} \inf_A \left( \sigma_t^2 \LDSMn(s_{W,A}, t) \right)  &= 1 - \sigma_t^2 \int \frac{\der \mu_{\beta^{-1} \widehat{\Sigmabf}} (\lambda)}{\alpha_t^2 \beta \lambda + \sigma_t^2} \eqsp,
    \end{align*}
    where, for a positive definite matrix $M$, $\muhat_{M}$ denotes the empirical spectral measure of $M$, see \Cref{sec:random_matrices_background}.

    By the Marchenko-Pastur theorem (e.g. Theorem 2.4 of \cite{couillet_random_2022}), we know that, almost-surely, the measure $\mu_{\beta^{-1} \widehat{\Sigmabf}}$ converges weakly to the Marchenko-Pastur distribution with shape parameter $\psi_d$, denoted $\mu_{\mathrm{MP}}^{(\psi_d)}$, see \Cref{sec:random_matrices_background}.
    The map $\lambda \mapsto (\alpha_t^2 \beta \lambda + \sigma_t^2)$ is bounded and continuous, therefore, we have
    \begin{align*}
        \lim_{n \to \infty} \frac1{d} \inf_A  \left( \sigma_t^2 \LDSMn(s_{W,A}, t) \right) = 1 - \sigma_t^2 \int \frac{\der \mu_{\mathrm{MP}}^{(\psi_d)} (\lambda)}{\alpha_t^2 \beta \lambda + \sigma_t^2} \eqsp.
    \end{align*}
    Similarly, we have, with $\Ahat \in \argmin_A \widehat{\Rcal}_{W,t} (A) $, 
    \begin{align*}
         \LESM (s_{W,\widehat{A}}, t) &= \frac1{d}(\alpha_t^2 \beta + \sigma_t^2) \frobnorm{\left(\widehat{\Sigmabf}_t^{-1} - \Sigmabf_t^{-1} \right) \Sigmabf_t^{1/2}}^2 \\
         &\underset{n \to \infty}{\longrightarrow}  (\alpha_t^2 \beta + \sigma_t^2) \int \left( \frac1{\alpha_t^2 \beta \lambda + \sigma_t^2} - \frac1{\alpha_t^2 \beta + \sigma_t^2}  \right)^2 \der \mu_{\mathrm{MP}}^{(\psi_d)} (\lambda) \eqsp.
    \end{align*}

    \textbf{Case 2.} ($\psi_p < \psi_d$) For $n$ large enough, we have $d > p$ In this case, we note (see the proof of \Cref{lemma:fixed-time-non-asymptotic-lemma}) that the matrix $W$ is almost surely of rank $p$. Therefore, we can write the singular value decomposition of $W$ in the form $W = U \Lambda V^T$, where $U \in \R^{p \times p}$ is orthonormal, $\Lambda \in \R^{p \times p}$ is a diagonal matrix with positive diagonal coefficients, and $V \in \R^{d \times p}$ is column orthonormal ($V^T V = \Irm_p$). 

    With these notations and noting that $V^T V = \Irm_p$, we have, almost-surely, that
    \begin{align*}
       \frac1{d}  \sigma_t^2 \LDSMn(s_{W,\Ahat}, t) &= 1 - \frac{\sigma_t^2}{d} \trace \left( V \Lambda U^T(U\Lambda V^T \Sigmabfhat_t V \Lambda U^T)^{-1} U \Lambda V^T \right) \\
       &= 1 - \frac{\sigma_t^2}{d} \trace \left( V ( V^T \Sigmabfhat_t V )^{-1}   V^T \right)\\
       &= 1 - \frac{\sigma_t^2}{d} \trace \left( ( V^T \Sigmabfhat_t V )^{-1}  \right) \eqsp. 
    \end{align*}
    Now we observe that
    \begin{align*}
        V^T\Sigmabfhat_t V = \alpha_t^2 V^T \Sigmabfhat V + \sigma_t^2 \Irm_p = \frac{\alpha_t^2}{n} \sum_{i=1}^n V^T Z_i (V^T Z_i)^T + \sigma_t^2 \Irm_p \eqsp. 
    \end{align*}
    Recall that $(Z_1,\dots, Z_n) \sim \Nrm(0, \beta \Irm_d)^{\otimes n}$ and that $V$ is column orthonormal (actually, it is even uniformly distributed) and independent of $(Z_1,\dots, Z_n)$. By independence and invariance by rotation of $\Nrm(0, \beta \Irm_d)$, we have that $V^T Z_i \sim \Nrm(0, \beta\Irm_p)$. Moreover, we have that $(V^T Z_1,\dots, V^T Z_n) \sim \Nrm(0, \beta \Irm_p)^{\otimes n}$. Thus, we can apply the $p$-dimensional version of the Marchenko-Pastur theorem, which gives that
    \begin{align*}
        \frac1{d}  \sigma_t^2 \LDSMn(s_{W,\Ahat}, t) \underset{n\to \infty}{\longrightarrow} 1 - \sigma_t^2 \frac{\psi_p}{\psi_d}  \int \frac1{\alpha_t^2 \beta \lambda + \sigma_t^2} \der \mu_{\mathrm{MP}}^{(\psi_p)} (\lambda) \eqsp.
    \end{align*}
    By \Cref{lemma:fixed-time-non-asymptotic-lemma}, we have
    \begin{align*}
       \frac1{d} \LESM (s_{W,\widehat{A}}, t) &= \frac1{d} \frobnorm{\left(W^T (W \widehat{\Sigmabf}_t W^T)^{-1} W - \Sigmabf_t^{-1} \right) \Sigmabf_t^{1/2}}^2\\
       &=  \frac{\alpha_t^2 \beta + \sigma_t^2}{d} \frobnorm{\left(V ( V^T \Sigmabfhat_t V )^{-1}   V^T - \Sigmabf_t^{-1} \right) }^2\eqsp.
    \end{align*}
    Recall that $\Sigmabf_t = (\alpha_t^2 \beta + \sigma_t^2) \Irm_d$.
    By the Pythagorean theorem, we have
     \begin{align*}
       \frac1{d} \LESM (s_{W,\widehat{A}}, t) &= \frac{\alpha_t^2 \beta + \sigma_t^2}{d} \left( \frobnorm{V ( V^T \Sigmabfhat_t V )^{-1} V^T - \frac{VV^T}{\alpha_t^2 \beta + \sigma_t^2}  }^2  + \frac{\frobnorm{ \Irm_d - VV^T }^2}{(\alpha_t^2 \beta + \sigma_t^2)^2} \right) \\
       &= \frac{\alpha_t^2 \beta + \sigma_t^2}{d} \left( \frobnorm{ ( V^T \Sigmabfhat_t V )^{-1}  - \frac{\Irm_p}{\alpha_t^2 \beta + \sigma_t^2}  }^2  + \frac{\trace \left( \Irm_d - V V^T \right)}{(\alpha_t^2 \beta + \sigma_t^2)^2} \right) \\
       &= \frac{\alpha_t^2 \beta + \sigma_t^2}{d} \left( \frobnorm{ ( V^T \Sigmabfhat_t V )^{-1}  - \frac{\Irm_p}{\alpha_t^2 \beta + \sigma_t^2}  }^2  + \frac{d - p}{(\alpha_t^2 \beta + \sigma_t^2)^2} \right) \eqsp.
    \end{align*}
    By the arguments above (for the empirical risk), we observe that we can apply the $p$ dimensional version of the Marchenko-Pastur theorem to the first term, this gives, almost-surely,
    \begin{align*}
         \frac1{d} \LESM (s_{W,\widehat{A}}, t) \underset{n\to\infty}{\longrightarrow} (\alpha_t^2 \beta + \sigma_t^2) \frac{\psi_p}{\psi_d} \int \left( \frac1{\alpha_t^2 \beta \lambda + \sigma_t^2} - \frac1{\alpha_t^2 \beta + \sigma_t^2}  \right)^2 \der \mu_{\mathrm{MP}}^{(\psi_p)} (\lambda) + \frac{1 - \frac{\psi_p}{\psi_d}}{\alpha_t^2 \beta + \sigma_t^2} \eqsp.
    \end{align*}
    This concludes the proof.
\end{proof}

We present below the proof of \Cref{prop:fixed-time-asymptotic-linear-results}.

\begin{proof}(of \Cref{prop:fixed-time-asymptotic-linear-results})
We apply \Cref{prop:linear_net_exact_learning_curves} and focus on the case where $\psi_p \wedge\psi_d > 1$. Then, in both cases below, the Marchenko-Pastur distributions appearing in \Cref{prop:linear_net_exact_learning_curves} have a mass at zero, which we exploit to obtain our estimates. Let $f_{\mathrm{MP}}^{(\psi)}$ denote the bulk density of the Marchenko-Pastur distribution with shape parameter $\psi > 0$. Note that this is only a probability density when $\psi \leq 1$, see \Cref{sec:random_matrices_background}.

\textbf{Case 1.} ($\psi_p > \psi_d$). Then, we have, almost-surely,
\begin{align*}
       \lim_{n \to \infty}\frac1{d} \inf_A  \left( \sigma_t^2 \LDSMn(s_{W,A}, t) \right) = 1 - \sigma_t^2 \left(1 - \frac1{\psi_d} \right) \frac1{\sigma_t^2} - \sigma_t^2 \int \frac{f_{\mathrm{MP}}^{(\psi_d)}(\lambda)}{\alpha_t^2 \beta \lambda + \sigma_t^2} \der \lambda \eqsp. 
\end{align*}
The bulk density $f_{\mathrm{MP}}^{(\psi_d)}$ is supported on $[(\sqrt{\psi_d} - 1)^2, (\sqrt{\psi_d} + 1)^2]$ and we have $\int f_{\mathrm{MP}}^{(\psi_d)}(\lambda) \der \lambda = \psi_d^{-1}$. Therefore, we have, almost-surely
\begin{align*}
       \lim_{n \to \infty}\frac1{d} \inf_A  \left( \sigma_t^2 \LDSMn(s_{W,A}, t) \right) = \frac1{\psi_d} + \landau{\frac1{\psi_d^2}} \eqsp,
\end{align*}
as $\psi_d \to \infty$.
For the score matching loss, we have similarly that, almost-surely,
\begin{align*}
    \lim_{n \to \infty} \left(\frac1{d} \LESM (s_{W,\widehat{A}}, t) \right) &= \left( 1 - \frac1{\psi_d} \right) \frac{\alpha_t^4 \beta^2}{\sigma_t^4(\alpha_t^2 \beta +  \sigma_t^2)} \\& \qquad+ 
    (\alpha_t^2 \beta + \sigma_t^2) \int \left( \frac1{\alpha_t^2 \beta \lambda + \sigma_t^2} - \frac1{\alpha_t^2 \beta + \sigma_t^2}  \right)^2 f_{\mathrm{MP}}^{(\psi_d)} (\lambda) \der \lambda \\
    &= \left( 1 - \frac1{\psi_d} \right) \frac{\alpha_t^4 \beta^2}{\sigma_t^4(\alpha_t^2 \beta +  \sigma_t^2)} + \landau{\frac1{\psi_d}} \eqsp.
\end{align*}

\textbf{Case 2.} ($\psi_p < \psi_d$) In that case, by very similar computations, we obtain
\begin{align*}
      \lim_{n \to \infty}\frac1{d} \inf_A  \left( \sigma_t^2 \LDSMn(s_{W,A}, t) \right) &= 1 - \frac{\psi_p}{\psi_d}  \left(1 - \frac1{\psi_p} \right) - \sigma_t^2 \frac{\psi_p}{\psi_d} \int \frac{f_{\mathrm{MP}}^{(\psi_p)}(\lambda)}{\alpha_t^2 \beta \lambda + \sigma_t^2} \der \lambda \\ 
      &= 1 + \frac1{\psi_d} - \frac{\psi_p}{\psi_d} + \landau[t]{\frac1{\psi_p \psi_d}} \eqsp.
\end{align*}
Finally, for the score matching loss, we obtain
\begin{align*}
     \lim_{n \to \infty} \left(\frac1{d} \LESM (s_{W,\widehat{A}}, t) \right) &= \frac{\psi_p}{\psi_d}\left( 1 - \frac1{\psi_p} \right) \frac{\alpha_t^4 \beta^2}{\sigma_t^4(\alpha_t^2 \beta +  \sigma_t^2)} + \frac{1 - \frac{\psi_p}{\psi_d}}{\alpha_t^2 \beta + \sigma_t^2}  \\& \qquad+ 
    (\alpha_t^2 \beta + \sigma_t^2) \frac{\psi_p}{\psi_d} \int \left( \frac1{\alpha_t^2 \beta \lambda + \sigma_t^2} - \frac1{\alpha_t^2 \beta + \sigma_t^2}  \right)^2  f_{\mathrm{MP}}^{(\psi_p)} (\lambda) \der \lambda \\
    &= \frac{\psi_p}{\psi_d}\left( 1 - \frac1{\psi_p} \right) \frac{\alpha_t^4 \beta^2}{\sigma_t^4(\alpha_t^2 \beta +  \sigma_t^2)}  + \frac{1 - \frac{\psi_p}{\psi_d}}{\alpha_t^2 \beta + \sigma_t^2}  + \landau{\frac1{\psi_d}} \eqsp.
\end{align*}

The result follows by rearranging the terms.
\end{proof}

\subsection{Omitted proofs of \Cref{sec:time-smoothness}}
\label{sec:proofs-time-smoothness}

In this section, we present the proofs of \Cref{sec:time-smoothness}, related to the impact of the time integration in the score matching loss.

\subsubsection{Proof of \Cref{lemma:time-smoothness-non-asymptitc-lemma}}

\begin{proof} (of \Cref{lemma:time-smoothness-non-asymptitc-lemma})
    By calculations similar to the proof of \Cref{lemma:fixed-time-non-asymptotic-lemma}, we can write the denoising score matching loss as
    \begin{align*}
        \LDSMn(s_{A,W}, \varpi) &= \sigma_\pi^{-2} \intOT \left( d + \sigma_t^2 \trace ( A^T A W \Sigmabfhat_t W^T ) + 2\sigma_t^2 \trace (AW) \right) \der \pi(t) \\
        &= \frac{d}{\sigma_\pi^2} + \trace ( W^T A^T A W \Sigmabfhat_\varpi  ) + 2 \trace (AW) \eqsp.
    \end{align*}
    By reasoning as in the proof \Cref{lemma:fixed-time-non-asymptotic-lemma}, we observe that (because $p>d$) all empirical risk minimizers $\Ahat$ satisfy
    \begin{align*}
        \Ahat W = -\Sigmabfhat_\varpi^{-1} \eqsp.
    \end{align*}
    Therefore, the minimum empirical risk is equal to
    \begin{align*}
         \inf_A \LDSMn(s_{A,W}, \varpi) = \frac{d}{\sigma_\pi^2} - \trace \left( \Sigmabfhat_\varpi^{-1} \right) \eqsp.
    \end{align*}
    Again, by similar calculations as in the proof of \Cref{lemma:fixed-time-non-asymptotic-lemma}, we can express the population score matching loss as
    \begin{align*}
        \LESM(s_{\Ahat,W}, \pi) = \intOT \Eof{\normof{\Ahat W (\alpha_t Z + \sigma_t \Xi) + \Sigmabf_t^{-1} (\alpha_t Z + \sigma_t \Xi)}^2} \der \pi(t) \eqsp.
    \end{align*}
    with $(Z,\Xi) \sim \nu \otimes \Nrm (0, I_d)$. 
    Therefore, we have 
    \begin{align*}
        \LESM(s_{\Ahat,W}, \pi) &= \intOT \trace \left( \Sigmabfhat_\varpi^{-2} \Sigmabf_t - 2 \Sigmabfhat_\varpi^{-1} + \Sigmabf^{-1}_t \right) \der \pi(t)\\
        &= \trace \left( \Sigmabfhat_\varpi^{-2} \Sigmabf_\pi - 2 \Sigmabfhat_\varpi^{-1} + \intOT \Sigmabf_t^{-1} \der \pi(t) \right)\\
        &= \frobnorm{\left( \Sigmabfhat_\varpi^{-1} - \Sigmabf_\pi^{-1} \right) \Sigmabf_\pi^{1 / 2} }^2 + \Crm_\pi \eqsp,
    \end{align*}
    with
    \begin{align}
        \label{eq:time-integrated-ESM_constant}
        \Crm_\pi := \trace \left(  \intOT \Sigmabf_t^{-1} \der \pi(t) - \Sigmabf_\pi^{-1} \right) \eqsp.
    \end{align}
    By diagonalizing $\Sigma_t$ and using Jensen's inequality, we see that $\Crm_\pi \geq 0$.
\end{proof}

\begin{remark}[Case where $p\leq d$.]
    \label{rk:case-small-p-time-smooth}
    By proceeding like in the proof of \Cref{lemma:fixed-time-non-asymptotic-lemma}, we have that
\begin{align*}
    \inf_A \LDSMn(s_{A,W}, \varpi) = \frac{d}{\sigma_\pi^2} - \trace \left( W^T (W\Sigmabfhat_\varpi W^T )^{-1} W \right) \eqsp.
\end{align*}
Similarly, for the explicit score matching loss, we have
\begin{align*}
    \LESM(s_{\Ahat,W}, \pi)\frobnorm{\left(W^T (W \widehat{\Sigmabf}_\varpi W^T)^{-1} W - \Sigmabf_\pi^{-1} \right) \Sigmabf_\pi^{1/2}}^2 + \Crm_\pi \eqsp,
\end{align*}
where $\Ahat = -(W \Sigmabfhat_\varpi W^T)^{-1}$ is the empirical risk minimizer. 
These formulas are used to obtain the exact asymptotics presented in \Cref{fig:fixed-time-linear-psi_p}, using very similar computations as in the proofs of \Cref{sec:fine-grainded-linear-analysis}.
\end{remark}

\begin{remark}[Case where $\pi$ is the uniform distribution and $\kappa=1$.]
\label{eq:explicit_crm_pi_computations}
In that case, we have
\begin{align*}
   \frac1{d} \Crm_\pi = \frac1{T} \int_0^T \frac{\der t}{\alpha_t^2 \beta + \sigma_t^2} - \frac1{\alpha_\pi^2 \beta + \sigma_\pi^2} \eqsp.
\end{align*}
We easily see that
\begin{align*}
    \alpha_\pi^2 = \frac1{T} \int_0^T e^{-2t} \der t = \frac{1 - e^{-2T}}{2T} \eqsp, \quad \sigma_\pi^2 = 1 - \frac{1 - e^{-2T}}{2T} \eqsp.
\end{align*}
Therefore, we have
\begin{align*}
    \frac1{d} \Crm_\pi &= \frac1{T} \int_0^T \frac{\der t}{e^{-2t} \beta + 1 - e^{-2t}} - \frac{2T}{\beta (1 - e^{-2T}) + 2T - 1 + e^{-2T}} \\
    &= \frac1{2T} \log \left( \frac{e^{2T} + \beta - 1}{\beta} \right) - \frac{2T}{\beta (1 - e^{-2T}) + 2T - 1 + e^{-2T}} \eqsp.
\end{align*}
Therefore, we have
\begin{align}
    \label{eq:time-integrated-constant-uniform-case}
     \frac1{d} \Crm_\pi = \frac1{2T}\log \left(\frac{ 1 + e^{-2T} (\beta - 1) }{\beta} \right) + \frac{\beta (1 - e^{-2T}) - 1 + e^{-2T}}{\beta (1 - e^{-2T}) + 2T - 1 + e^{-2T}} = \landau{\frac1{T}} \eqsp.
\end{align}
\end{remark}

\subsubsection{Proof of \Cref{prop:time-smoothness-asymptotics}}

We provide below the proof of \Cref{prop:time-smoothness-asymptotics}. We first provide, in the next proposition, the exact asymptotics of the score matching losses in the case of time-integrated losses.

\begin{proposition}
    \label{eq:time-smoothness-learning-curves}
    Consider the same setup as in \Cref{prop:time-smoothness-asymptotics}, we have
        \begin{align*}
        \lim_{n \to \infty} \left( \frac1{d} \inf_A \LDSMn(s_{A,W}, \varpi) \right) &= \frac1{\sigma_\pi^2} - \int \frac{\der \mu_{\mathrm{MP}}^{(\psi_d)} (\lambda)}{\alpha_\varpi^2 \beta \lambda + \sigma_\varpi^2} \eqsp,
    \end{align*}
    and
    \begin{align*}
       \lim_{n\to\infty} \left( \frac1{d}\LESM(s_{\Ahat, W}, \pi)\right) =  (\alpha_\pi^2 \beta + \sigma_\pi^2) \int \left( \frac1{\alpha_\varpi^2 \beta \lambda + \sigma_\varpi^2} - \frac1{\alpha_\pi^2 \beta + \sigma_\pi^2} \right)^2 \der \mu_{\mathrm{MP}}^{(\psi_d)} (\lambda) + \overline{\Crm}_\pi \eqsp,
    \end{align*}
    where, as before, $\mu_{\mathrm{MP}}^{(\psi_d)}$ is the Marchenko-Pastur distribution with shape parameter $\psi_d$, and 
    \begin{align}
        \label{eq:normalized_C_pi}
        \overline{\Crm}_\pi := \intOT \frac{\pi(\der t)}{\alpha_t^2 \beta + \sigma_t^2} - \frac1{\alpha_\pi^2 \beta + \sigma_\pi^2} \geq 0 \eqsp.
    \end{align}
\end{proposition}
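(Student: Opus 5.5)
The plan is to start from the non-asymptotic formulas of \Cref{lemma:time-smoothness-non-asymptitc-lemma}. Since $\psi_p > \psi_d$, we have $p > d$ for $n$ large enough, and $W$ is almost surely of full rank, so the lemma applies. It gives $\Ahat W = -\Sigmabfhat_\varpi^{-1}$ together with $\inf_A \LDSMn(s_{A,W},\varpi) = d\sigma_\pi^{-2} - \trace(\Sigmabfhat_\varpi^{-1})$ and $\LESM(s_{\Ahat,W},\pi) = \frobnormLigne{(\Sigmabfhat_\varpi^{-1} - \Sigmabf_\pi^{-1})\Sigmabf_\pi^{1/2}}^2 + \Crm_\pi$.

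Under \Cref{ass:linear_setup_assumption} we have $\Sigmabf_t = (\alpha_t^2\beta + \sigma_t^2)\Irm_d$, so $\Sigmabf_\pi = (\alpha_\pi^2\beta + \sigma_\pi^2)\Irm_d$. Moreover $\Sigmabfhat_\varpi = \alpha_\varpi^2 \Sigmabfhat + \sigma_\varpi^2 \Irm_d$, by linearity of the integral and because $\varpi$ is a probability measure. Writing $\Sigmabfhat = \beta M$ with $M := \beta^{-1}\Sigmabfhat$, every matrix involved is diagonal in the eigenbasis of $M$. The normalized quantities therefore become spectral integrals against the empirical spectral measure $\muhat_M$:
\begin{align*}
\frac1d \trace(\Sigmabfhat_\varpi^{-1}) = \int \frac{\der \muhat_M(\lambda)}{\alpha_\varpi^2\beta\lambda + \sigma_\varpi^2}, \qquad \frac1d \frobnormLigne{(\Sigmabfhat_\varpi^{-1} - \Sigmabf_\pi^{-1})\Sigmabf_\pi^{1/2}}^2 = (\alpha_\pi^2\beta + \sigma_\pi^2)\int\Big(\frac{1}{\alpha_\varpi^2\beta\lambda + \sigma_\varpi^2} - \frac{1}{\alpha_\pi^2\beta + \sigma_\pi^2}\Big)^2 \der\muhat_M(\lambda).
\end{align*}
For the constant term, $\Crm_\pi/d = \intOT (\alpha_t^2\beta + \sigma_t^2)^{-1}\pi(\der t) - (\alpha_\pi^2\beta + \sigma_\pi^2)^{-1} = \overline{\Crm}_\pi$ exactly, for every $n$. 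Its nonnegativity follows from Jensen's inequality applied to the convex map $x \mapsto 1/x$, using that $t\mapsto \alpha_t^2\beta+\sigma_t^2$ integrates to $\alpha_\pi^2\beta + \sigma_\pi^2$ under $\pi$.

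Next, apply \Cref{thm:mp_thm}. The vectors $\beta^{-1/2}Z_i$ are standard Gaussian, so almost surely $\muhat_M$ converges weakly to $\mu_{\mathrm{MP}}^{(\psi_d)}$. It then remains to pass to the limit in the two integrals. Both integrands are continuous on $\R_+$ and bounded there, because the denominators are at least $\sigma_\varpi^2 > 0$. Positivity of $\sigma_\varpi^2$ is the one point to check: it holds because $\pi \propto \sigma_t^{-2}\varpi$ is well-defined only if $\varpi$ is not concentrated at $t=0$. Since $\muhat_M$ and the Marchenko–Pastur law are supported in $\R_+$, weak convergence against bounded continuous functions gives both limits, and the $(1-1/\psi_d)$ atom at $0$ is automatically included in $\mu_{\mathrm{MP}}^{(\psi_d)}$.

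I expect no genuine obstacle. The only delicate points are the bookkeeping of the normalization $\sigma_\pi^{-2}$ between $\varpi$ and $\pi$, which is already handled in \Cref{lemma:time-smoothness-non-asymptitc-lemma}, and checking $\sigma_\varpi^2 > 0$ so that the test functions are bounded.
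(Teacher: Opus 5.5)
Your proposal is correct and follows the paper's proof essentially step for step. You reduce via \Cref{lemma:time-smoothness-non-asymptitc-lemma} to spectral integrals against the empirical spectral measure of $\beta^{-1}\Sigmabfhat$, note that $\Crm_\pi/d=\overline{\Crm}_\pi$ is deterministic and nonnegative by Jensen, and conclude with the Marchenko--Pastur theorem using bounded continuous test functions. Your explicit check that $\sigma_\varpi^2>0$, which makes $\lambda\mapsto(\alpha_\varpi^2\beta\lambda+\sigma_\varpi^2)^{-1}$ bounded on $\R_+$, is a detail the paper states only loosely, and it is exactly the point needed.
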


\begin{proof} 
    Assume that $\psi_p > \psi_d$. Then, by \Cref{lemma:fixed-time-non-asymptotic-lemma} (using the same notations), we have
    \begin{align*}
        \frac1{d} \inf_A \LDSMn(s_{A,W}, \varpi)  &= \frac1{\sigma_\pi^2} - \int \frac{\der \mu_{\beta^{-1}\Sigmabfhat} (\lambda)}{\alpha_\varpi^2 \beta \lambda + \sigma_\varpi^2} \eqsp.
    \end{align*}
    By the Marchenko-Pastur's theorem, we have
    \begin{align*}
        \lim_{n \to \infty} \left( \frac1{d} \inf_A \LDSMn(s_{A,W}, \varpi) \right) &= \frac1{\sigma_\pi^2} - \int \frac{\der \mu_{\mathrm{MP}}^{(\psi_d)} (\lambda)}{\alpha_\varpi^2 \beta \lambda + \sigma_\varpi^2} \eqsp.
    \end{align*}
    Similarly, the score matching loss can be expressed as
    \begin{align*}
        \frac1{d}\LESM(s_{\Ahat, W}, \pi) = \frac{\alpha_\pi^2 \beta + \sigma_\pi^2}{d} \frobnorm{\Sigmabfhat_\varpi^{-1} - \frac{\Irm_d}{\alpha_\pi^2 \beta + \sigma_\pi^2}}^2 + \frac{\Crm_\pi}{d} \eqsp,
    \end{align*}
    where $\Crm_\pi$ is defined in the proof of \Cref{lemma:time-smoothness-non-asymptitc-lemma}. 
    Under \Cref{ass:linear_setup_assumption}, we observe that $\Crm_\pi / d$ is independent of $d$, and therefore we can define
    \begin{align*}
        \overline{\Crm}_\pi := \frac{\Crm_\pi}{d} =  \intOT \frac{\pi(\der t)}{\alpha_t^2 \beta + \sigma_t^2} - \frac1{\alpha_\pi^2 \beta + \sigma_\pi^2} \geq 0 \eqsp,
    \end{align*}
    where the inequality follows from Jensen's inequality. 
    By applying the Marchenko-Pastur theorem, we obtain
    \begin{align*}
        \lim_{n\to \infty} \left( \frac1{d}\LESM(s_{\Ahat, W}, \pi) \right) = (\alpha_\pi^2 \beta + \sigma_\pi^2) \int \left( \frac1{\alpha_\varpi^2 \beta \lambda + \sigma_\varpi^2} - \frac1{\alpha_\pi^2 \beta + \sigma_\pi^2} \right)^2 \der \mu_{\mathrm{MP}}^{(\psi_d)} (\lambda) +  \overline{\Crm}_\pi \eqsp.
    \end{align*}
    This concludes the proof.
\end{proof}

\begin{remark}
    Based on \Cref{rk:case-small-p-time-smooth}, we can easily extend these derivations to the case $\psi_p < \psi_d$.
\end{remark}

We present below the proof of \Cref{prop:time-smoothness-asymptotics}.
\begin{proof} (of \Cref{prop:time-smoothness-asymptotics})
    Assume that $\psi_p > \psi_d > 1$, then the Marchenko-Pastur has a mass at zero in all the cases below. By \Cref{eq:time-smoothness-learning-curves}, we immediately have
     \begin{align*}
        \lim_{n \to \infty} \left( \frac1{d} \inf_A \LDSMn(s_{A,W}, \varpi) \right) &= \frac1{\sigma_\pi^2} - \frac1{\sigma_\varpi^2} \left( 1 - \frac1{\psi_d} \right) - \int \frac{f_{\mathrm{MP}}^{(\psi_d)} (\lambda)}{\alpha_\varpi^2 \beta \lambda + \sigma_\varpi^2}  \der \lambda \\
        &= \frac1{\sigma_\pi^2} - \frac1{\sigma_\varpi^2}  + \frac1{\psi_d \sigma_\varpi^2}  + \landau{\frac1{\psi_d^2}} \eqsp.
    \end{align*}
    Note that, by the Cauchy-Schwarz inequality, we always have $\sigma_\pi^{-2} \geq \sigma_\varpi^{-2}$. 
    Similarly, the asymptotic score matching loss satisfies
    \begin{align*}
         \lim_{n\to \infty} \left( \frac1{d}\LESM(s_{\Ahat, W}, \pi) \right) = \left( 1 - \frac1{\psi_d} \right) \frac{(\alpha_\pi^2 \beta + \sigma_\pi^2 - \sigma_\varpi^2)^2}{\sigma_\varpi^4 (\alpha_\pi^2 \beta + \sigma_\pi^2)} + \overline{\Crm}_\pi + \landau{\frac1{\psi_d}} \eqsp.
    \end{align*}
    The proof follows by defining
    \begin{align}
        \label{eq:Lrm_pi_constant}
        \Lrm_\pi := \frac{(\alpha_\pi^2 \beta + \sigma_\pi^2 - \sigma_\varpi^2)^2}{\sigma_\varpi^4 (\alpha_\pi^2 \beta + \sigma_\pi^2)} +  \overline{\Crm}_\pi
    \end{align}
    This concludes the proof.
\end{proof}

\subsection{Proof of \Cref{sec:early-stopping}}
\label{sec:proof-early-stopping}

We present below the proof of \Cref{prop:early-stopping-proposition}.

\begin{proof}(of \Cref{prop:early-stopping-proposition})
    \textbf{Step 1.} We start by solving the gradient flow dynamics of \Cref{eq:gradient-flow-dynamics}.

    We can expand the gradient flow dynamics as
\begin{align*}
    \frac{\der}{\der \tau}A(\tau) = -2 p^{-1} \sigma_t^2 A(\tau) W \Sigmabfhat_t W^T - 2 p^{-1} \sigma_t^2 W^T \eqsp.
\end{align*}
We can assume, without loss of generality, that $p > d$. Then, by reasoning as in the proof of \Cref{prop:fixed-time-asymptotic-linear-results}, we observe that the matrix $W^T W \in \R^{d \times d}$ is almost-surely invertible. Therefore, we can define $\Tilde{A} := (W^T W)^{-1} A W$ and we have almost-surely that
\begin{align*}
   \frac{\der}{\der \tau} \Tilde{A}(\tau) = -2 p^{-1}  \sigma_t^2 \Tilde{A}(\tau) \Sigmabfhat_t W^T W - 2 p^{-1}  \sigma_t^2 \Irm_d \eqsp. 
\end{align*}
Taking into account that the initialization satisfies $A(0) = 0$, we deduce that
\begin{align*}
    \Tilde{A}(\tau) = - (\Sigmabfhat_t W^T W)^{-1} \left( \Irm_d - e^{-2\tau p^{-1} \sigma_t^2 \Sigmabfhat_t W^T W} \right) \eqsp.
\end{align*}
Therefore, we have
\begin{align*}
    A(\tau) W = - \Sigmabfhat_t^{-1} \left( \Irm_d - e^{-2\tau p^{-1} \sigma_t^2 \Sigmabfhat_t W^T W} \right) \in \R^{d \times d} \eqsp.
\end{align*}
By the strong law of large numbers, we have, almost-surely (for fixed $n$, $d$, and $\tau$), that $p^{-1} W^T W \to \Irm_d$ as $p\to\infty$. Thus, almost-surely, we have
\begin{align}
    \label{eq:gradient-flow-infinite-width-limit}
    \lim_{p\to \infty} (A(\tau)W) = - \Sigmabfhat_t^{-1} \left( \Irm_d - e^{-2\tau \sigma_t^2 \Sigmabfhat_t } \right) \in \R^{d \times d} \eqsp.
\end{align}

    \textbf{Step 2.} By computations similar to the proof of \Cref{lemma:fixed-time-non-asymptotic-lemma} and \Cref{prop:fixed-time-asymptotic-linear-results}, we have (for fixed $(p,d,n)$)
    \begin{align*}
      \ecal_t(\tau, n, d, p) :=  \LESM(s_{W, A(\tau)}, t) = \frobnorm{\left( \Sigmabfhat_t^{-1}  \left( \Irm_d - e^{-2\tau p^{-1} \sigma_t^2 \Sigmabfhat_t W^T W} \right) - \Sigmabf_t^{-1} \right) \Sigmabf_t^{1 / 2}}^2 \eqsp.
    \end{align*}
    Therefore, by the law of large numbers, we have almost surely that
    \begin{align*}
        \lim_{p\to \infty} \ecal(\tau,n,d, p) &=  \frobnorm{\left( \Sigmabfhat_t^{-1}  \left( \Irm_d - e^{-2\tau  \sigma_t^2 \Sigmabfhat_t } \right) - \Sigmabf_t^{-1} \right) \Sigmabf_t^{1 / 2}}^2 \\
        &= (\alpha_t^2 \beta + \sigma_t^2) \frobnorm{ \Sigmabfhat_t^{-1}  \left( \Irm_d - e^{-2\tau  \sigma_t^2 \Sigmabfhat_t } \right) - \Sigmabf_t^{-1}}^2 =: \overline{\ecal}_t(\tau, n, d) \eqsp.
    \end{align*}
    Recall that we take $n,d \to \infty$ with $\psi_d := \lim(d / n) > 1$.
    Therefore, we can apply the Marchenko-Pastur theorem and obtain, through similar computations as in the proof of \Cref{prop:fixed-time-asymptotic-linear-results}, that
    \begin{align*}
    \lim_{n\to\infty}\frac1{d}\overline{\ecal}_t(\tau, n, d) &= (\alpha_t^2 \beta + \sigma_t^2) \int \left( \frac{1 - e^{-2\tau \sigma_t^2 (\alpha_t^2 \beta \lambda + \sigma_t^2)}}{\alpha_t^2 \beta \lambda + \sigma_t^2} - \frac1{\alpha_t^2 \beta + \sigma_t^2}  \right)^2 \der \mu_{\mathrm{MP}}^{(\psi_d)}(\lambda)\\
        &= \left( 1 - \frac1{\psi_d} \right) \frac{\left((\alpha_t^2 \beta + \sigma_t^2) \left( 1 - e^{-2\tau \sigma_t^4 } \right) - \sigma_t^2 \right)^2}{\sigma_t^4 (\alpha_t^2 \beta + \sigma_t^2)} + \landau{\frac1{\psi_d}} \eqsp.
    \end{align*}
    This concludes the proof.
\end{proof}

\section{Experimental details}
\label{sec:experimental-details}

In this section, we provide some additional details on the experiments presented in the main text.

\subsection{Random features experiments}
\label{sec:random-feature-experiment-details}

In this short subsection, we quickly describe the empirical setup for the diffusion random feature experiment presented in \Cref{fig:intro_plot_random_features}.

\paragraph{Regression experiments.}
In the regression experiment presented in \Cref{fig:intro_plot_random_features}, we use a classical random feature regression setting. We consider a data distribution of the form
\begin{align*}
    y_i = \langle \upbeta_\star, x_i \rangle + \sigma \epsilon_i \eqsp, \quad x_i \sim \Nrm(0, \Irm_d)\eqsp, \quad \epsilon \sim \Nrm(0, 1) \eqsp, 
\end{align*}
where $\epsilon_i$ and $x_i$ are independent and we sample $n$ i.i.d. pairs $(x_i,y_i)$ for $i \in \setof{1,\dots,n}$.
We consider a model of the form
\begin{align*}
    f_{A,W}(x) = \frac{A}{\sqrt{p}} \varrho \left( \frac{W}{\sqrt{d}} x \right)  \eqsp,
\end{align*}
where $A \in \R^{d \times p}$, $W \in \R^{p \times d}$ is a random matrix with i.i.d. standard Gaussian entries, and $\varrho (x) := \max(x,0)$ is the ReLU activation.

We consider the minimization of the empirical risk
\begin{align*}
    \Ahat \in \argmin_A \left\{ \frac1{n} \sum_{i=1}^n (y_i - f_{A,W}(x_i))^2 + \frac{\lambda}{\sqrt{p}} \frobnorm{A}^2  \right\} \eqsp,
\end{align*}
with $\lambda$ a ridge regularization parameter.

Then, we evaluate the test (using a validation set) and train risk evaluated at the empirical risk minimizer $\Ahat$ and report it in \Cref{fig:intro_plot_random_features}.

\textbf{Hyperparameters details.} In \Cref{fig:intro_plot_random_features}, we use $n=10^2$, $d=20$, $N_g = 10$, $\lambda = 10^{-4}$, and $p \in [5, 4 \cdot 10^2]$.

\paragraph{Diffusion experiments.} 
We consider the following family of random features score networks, which is similar to existing works \cite{bonnaire_why_2025-1,george_asymptotic_2026}, for $x \in \Rd$
\begin{align*}
    f_{A,W}(x) = \frac{A}{\sqrt{p}} \varrho \left( \frac{W}{\sqrt{d}} x \right) \eqsp,
\end{align*}
where $A \in \R^{d \times p}$, $W \in \R^{p \times d}$ is a random matrix with i.i.d. standard Gaussian entries, and $\varrho (x) := \max(x,0)$ is the ReLU activation. For the data distribution, we take the data distribution to by $\nu = \Nrm(0, d^{-1}\Irm_d)$.

Given a fixed $t > 0$, a random $W$ and a dataset $S^{(n)} := (Z_1,\dots,Z_n) \sim \nu^{\otimes n}$, we consider the minimization of the regularized empirical risk, defined by
\begin{align*}
    A \mapsto \frac1{n} \sum_{i=1}^n \Eof[\Xi]{\normof{\sigma_t f_{A,W} (\alpha_t Z_i + \sigma_t \Xi) + \Xi}^2} + \frac{\lambda}{\sqrt{p}} \frobnorm{A}^2 \eqsp,
\end{align*}
with $\Xi \sim \Nrm(0,\Irm_d)$ and $\lambda$ a ridge regularization parameter. An important difference with the regression case above is that, in order to make the empirical risk minimization tractable, we use a Monte Carlo approximation of the expectation over $\Xi \sim \Nrm(0,\Irm_d)$. Given an integer $N_g \in \N^\star$ and $(\xi_{ij})_{1 \leq i \leq n,~1 \leq j \leq N_g} \sim \Nrm(0,\Irm_d)^{\otimes (n N_g)}$, we therefore compute
\begin{align*}
    \Ahat \in \argmin_A \left\{ \frac1{n} \sum_{i=1}^n \sum_{j=1}^{N_g} {\normof{\sigma_t f_{A,W} (\alpha_t Z_i + \sigma_t \xi_{ij}) + \xi_{ij}}^2} + \frac{\lambda}{\sqrt{p}} \frobnorm{A}^2  \right\}
\end{align*}
by noting that it can be written as the solution of a linear regression problem. This procedure is similar to \cite{george_denoising_2025}.

After estimating $A$, we compute the empirical and population denoising score matching loss, where the latter is estimated using a validation set. The hyperparameters details are provided below.

\textbf{Hyperparameters details.} In \Cref{fig:intro_plot_random_features}, we use $\kappa = 1$, $n=2 \cdot 10^4$, $t = 10^{-1}$, $d=3$, $N_g = 10$, $\lambda = 10^{-4}$, and $p \in [2, 10^4]$.

\subsection{High-dimensional experiments}
\label{sec:unet_experiments}

Our experiments are performed using an implementation of the DDPM model from \cite{song2021scorebased}. We use the configuration titled\texttt{vp.ddpm.cifar10\_continuous} which implements the DDPM model of \cite{hoDenoisingDiffusion2020} but for the continuous-time variance preserving setting. The architecture is a U-Net \citep{ronnenberger_unet_2015} with the encoder and decoder each consisting of four resolution levels $(32\times32, 16\times16, 8\times8, 4\times4)$, with two residual blocks per level and utilizes self-attention at a resolution of $16\times16$. The model is conditioned on time, with the timestep encoded via a sinusoidal embedding and injected into each residual block.

For figures \ref{fig:intro_figure_experiment-unet} and \ref{fig:train_unet} we modify the number of features (NF) parameter (denoted \texttt{nf} in the configuration file and taking a default value of $128$). This parameter defines the base channel width used throughout the model which is multiplied by $m = 1, 2, 2, 2$ to produce the actual channel widths for each of the four resolution levels. The number of parameters in the model scales quadratically with channel width. We consider NF$\in \{2, 4, 8, 16, 32, 64, 128\}$ to see how the number of parameters affects overparameterization. Some technical modifications to the code were required to support smaller values of NF.

For all high-dimensional experiments we use CIFAR-10 but restrict to the first $1,000$ examples to make overfitting easier to achieve. Evaluation is performed on the full $10,000$ held-out examples. We train the model with the default setup in the codebase (ADAM with $10\%$ dropout and EMA decay $0.9999$) and we train for $1,000,000$ iterations, which is larger than typical to better guarantee overfitting. In the experiments of Figure \ref{fig:intro_figure_experiment-unet} we choose the checkpoint with smallest train loss. The samples generated on the right-hand side of Figure \ref{fig:intro_figure_experiment-unet} uses the DDIM implementation in the codebase to generate samples with a fixed seed across NF values. We perform log-likelihood calculations using the \texttt{bpd} implementation in the codebase.

For the experiment in Figure \ref{fig:no_time_exp}, we consider the DDPM network but we restrict the access of the neural network to the time input. However, we do still scale the network properly according to the time input. We then train the model as usual but restrict the time variable to the range $[t, t + r]$ for $t=0.1$ and $r \in [0.0001, 0.00032, 0.001, 0.0032, 0.01, 0.032, 0.1, 0.5]$. We then evaluate the train and test error at $t$ to obtain the plot.

\subsection{Compute resources}
\label{sec:compute-resources}

Our experiments were run on an Amazon EC2 G6e (g6e.xlarge) instance which has a single NVIDIA L40S Tensor Core GPU.

\subsection{Licenses}
\label{appendix:licenses}
Codebases:
\begin{itemize}
    \item Score-Based Generative Modeling through Stochastic Differential Equations \citep{song2021scorebased}: Apache License 2.0.
\end{itemize}

Datasets:
\begin{itemize}
    \item CIFAR-10 \citep{krizhevsky2009learning}: MIT license.
\end{itemize}

\end{document}